\RequirePackage[hyphens]{url}
\documentclass[final,authoryear,twocolumn]{elsarticle}

\usepackage[draft]{hyperref}

\usepackage{color}

\usepackage{amsmath,amssymb} 
\usepackage{soul}
\usepackage{mathtools}
\usepackage{algorithm}
\usepackage{algorithmic}
\usepackage{enumitem}
\usepackage{url}
\usepackage{subfigure}
\usepackage{comment}
\usepackage{caption}
\usepackage{amsthm}
\usepackage{multirow}
\usepackage{hyperref}

\DeclarePairedDelimiter\floor{\lfloor}{\rfloor}

\newtheorem{lemma}{Lemma}


\newcommand{\cC}{\mathcal{C}}

\newcommand{\cI}{\mathcal{I}}

\newcommand{\bP}{\mathbf{P}}
\newcommand{\bp}{\mathbf{p}}
\newcommand{\bQ}{\mathbf{Q}}
\newcommand{\bq}{\mathbf{q}}
\newcommand{\bR}{\mathbf{R}}

\newcommand{\bt}{\mathbf{t}}
\newcommand{\cJ}{\mathcal{J}}

\newcommand{\bV}{\mathbf{V}}
\newcommand{\bv}{\mathbf{v}}



\newcommand{\mbbI}{\mathbb{I}}

\DeclarePairedDelimiter{\norm}{\lVert}{\rVert}

\DeclareMathOperator*{\argmax}{arg\,max}

\makeatletter
\def\namedlabel#1#2{\begingroup
    #2%
    \def\@currentlabel{#2}%
    \phantomsection\label{#1}\endgroup
}

\journal{ISPRS Journal of Photogrammetry and Remote Sensing}









\bibliographystyle{elsarticle-harv}

\begin{document}

\begin{frontmatter}

\title{Practical optimal registration of terrestrial LiDAR scan pairs}

\author[addressAdelaide]{Zhipeng Cai\corref{mycorrespondingauthor}}
\cortext[mycorrespondingauthor]{Corresponding author}
\ead{zhipeng.cai@adelaide.edu.au}

\author[addressAdelaide]{Tat-Jun Chin}
\ead{tat-jun.chin@adelaide.edu.au}

\author[addressAdelaide]{Alvaro Parra Bustos}
\ead{alvaro.parrabustos@adelaide.edu.au}

\author[addressZurich]{Konrad Schindler}
\ead{schindler@ethz.ch}

\address[addressAdelaide]{School of Computer Science, The University of Adelaide, Australia.}
\address[addressZurich]{Institute of Geodesy and Photogrammetry, ETH Zurich, Switzerland.}

\begin{abstract}

  Point cloud registration is a fundamental problem in 3D scanning. In
  this paper, we address the frequent special case of registering
  terrestrial LiDAR scans (or, more generally, levelled point clouds).
  Many current solutions still rely on the Iterative Closest Point
  (ICP) method or other heuristic procedures, which require good
  initializations to succeed and/or provide no guarantees of
  success. On the other hand, exact or optimal registration algorithms
  can compute the best possible solution without requiring
  initializations; however, they are currently too slow to be
  practical in realistic applications.

  Existing optimal approaches ignore the fact that in
  routine use the relative rotations between scans are constrained to
  the azimuth, via the built-in level compensation in LiDAR
  scanners. We propose a novel, optimal and computationally
  efficient registration method for this 4DOF scenario. Our approach
  operates on candidate 3D keypoint correspondences, and contains two main
  steps: (1) a deterministic selection scheme that significantly
  reduces the candidate correspondence set in a way that is guaranteed to
  preserve the optimal solution; and (2) a fast
  branch-and-bound (BnB) algorithm with a novel polynomial-time
  subroutine for 1D rotation search, that quickly finds the optimal alignment for the reduced set.  We demonstrate the
  practicality of our method on realistic point clouds from multiple
  LiDAR surveys.

\end{abstract}

\begin{keyword}
Point cloud registration \sep exact optimization \sep branch-and-bound.
\end{keyword}

\end{frontmatter}


\section{Introduction}\label{sec:intro}

LiDAR scanners are a standard instrument in contemporary surveying
practice. An individual scan produces a 3D point cloud, consisting of
densely sampled, polar line-of-sight measurements of the instrument's
surroundings, up to some maximum range. Consequently, a recurrent
basic task in LiDAR surveying is to register individual scans into one
big point cloud that covers the entire region of interest.  The
fundamental operation is the relative alignment of a pair of
scans. Once this can be done reliably, it can be applied sequentially
until all scans are registered; normally followed by a simultaneous
refinement of all registration parameters.

Our work focuses on the pairwise registration: given two point clouds,
compute the rigid transformation that brings them into
alignment. Arguably the most widely used method for this now classical
problem is the ICP (Iterative Closest Point)
algorithm~\citep{besl1992method,rusinkiewicz2001efficient,pomerleau2013comparing},
which alternates between finding nearest-neighbour point
matches and updating the transformation parameters. Since the
procedure converges only to a local optimum, it requires a reasonably
good initial registration to produce correct results.

Existing industrial solutions, which are shipped either as on-board
software of the scanner itself, or as part of the manufacturer's
offline processing software (e.g., Zoller + Fr\"ohlich\footnote{\url{https://www.zf-laser.com/Z-F-LaserControl-R.laserscanner_software_1.0.html?&L=1}},
Riegl\footnote{\url{http://www.riegl.com/products/software-packages/risolve/}}, Leica\footnote{\url{https://leica-geosystems.com/products/laser-scanners/software/leica-cyclone}}), rely on external aids or
additional sensors. For instance, a GNSS/IMU sensor package, or a
visual-inertial odometry system with a panoramic camera~\citep{houshiar2015study} setup, to
enable scan registration in GNSS-denied environments, in particular
indoors and under ground. Another alternative is to determine the
rotation with a compass, then perform only a translation search, which
often succeeds from a rough initialisation, such as setting the
translation to $0$.
Another, older but still popular approach is to install artificial
targets in the environment~\citep{akca2003full, franaszek2009fast} that
act as easily detectable and matchable ``beacons". However, this comes
at the cost of installing and maintaining the targets.

More sophisticated point cloud registration techniques have been
proposed that are not as strongly dependent on good initializations,
e.g.,~\citep{chen1999ransac,drost2010model,albarelli2010game,theiler2014keypoint,theiler2015globally}. These
techniques, in particular the optimization routines they employ, are
heuristic. They often succeed, but cannot guarantee to find an optimal
alignment (even according to their own definition of optimality).
Moreover, such methods typically are fairly sensitive to the tuning of
somewhat unintuitive, input-specific parameters, such as the
approximate proportion of overlapping points in
4PCS~\citep{aiger20084}, or the annealing rate of the penalty component
in the lifting method of~\citep{zhou2016fast}.
In our experience, when applied to new, unseen registration tasks
these methods often do not reach the performance reported on popular
benchmark datasets\footnote{For example, the Stanford 3D Scanning Repository~\citep{turk1994zippered}.}.

In contrast to the locally convergent algorithms and heuristics above, 
optimal algorithms have been developed for point cloud
registration~\citep{breuel2001practical,yang2016go,campbell2016gogma,parra2016fast}.
Their common theme is to set up a clear-cut, transparent objective
function and then apply a suitable exact optimization scheme -- often
branch-and-bound type methods -- to find the solution that maximises the objective function\footnote{As opposed to approximate, sub-optimal, or locally optimal solutions with lesser objective values than the maximum achievable.}.
It is thus ensured that the best registration parameters (according to
the adopted objective function) will always be found. Importantly,
convergence to the optimal value independent of the starting
point implies that these methods do not require
initialization.
However, a serious limitation of optimal methods is that they
are computationally much costlier (due to NP-hardness of most of the robust objective functions used in point cloud registration~\citep{chin2018robust}), but also by the experiments in previous literatures, e.g., more than 4 h for only
$\approx$ 350 points in~\citep{parra2016fast}, and even longer
in~\citep{yang2016go}. This makes them impractical for LiDAR surveying.

\subsection{Our contributions}\label{sec:contributions}

In this work, we aim to make optimal registration practical for
terrestrial LiDAR scans.

\begin{figure}[t]\centering
\subfigure{\includegraphics[width=1\columnwidth]{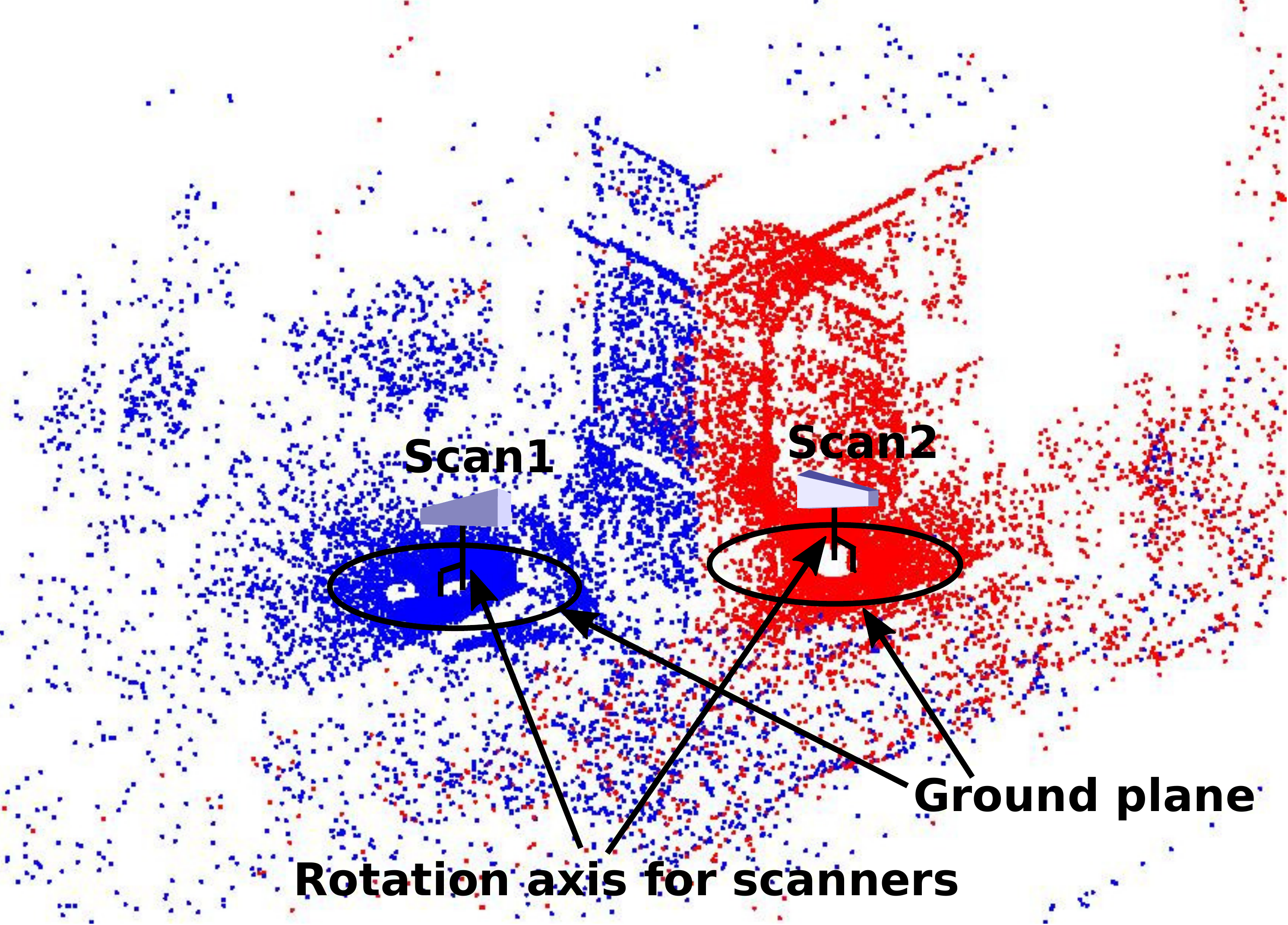}}
\caption{4DOF registration of two LiDAR scans (blue and red). The level compensator forces scanners to rotate around the vertical axis, resulting in the azimuthal relative rotation.}\label{fig:levelCompDemo}
\end{figure}

\begin{figure*}[t]\centering
\subfigure{\includegraphics[width=2\columnwidth]{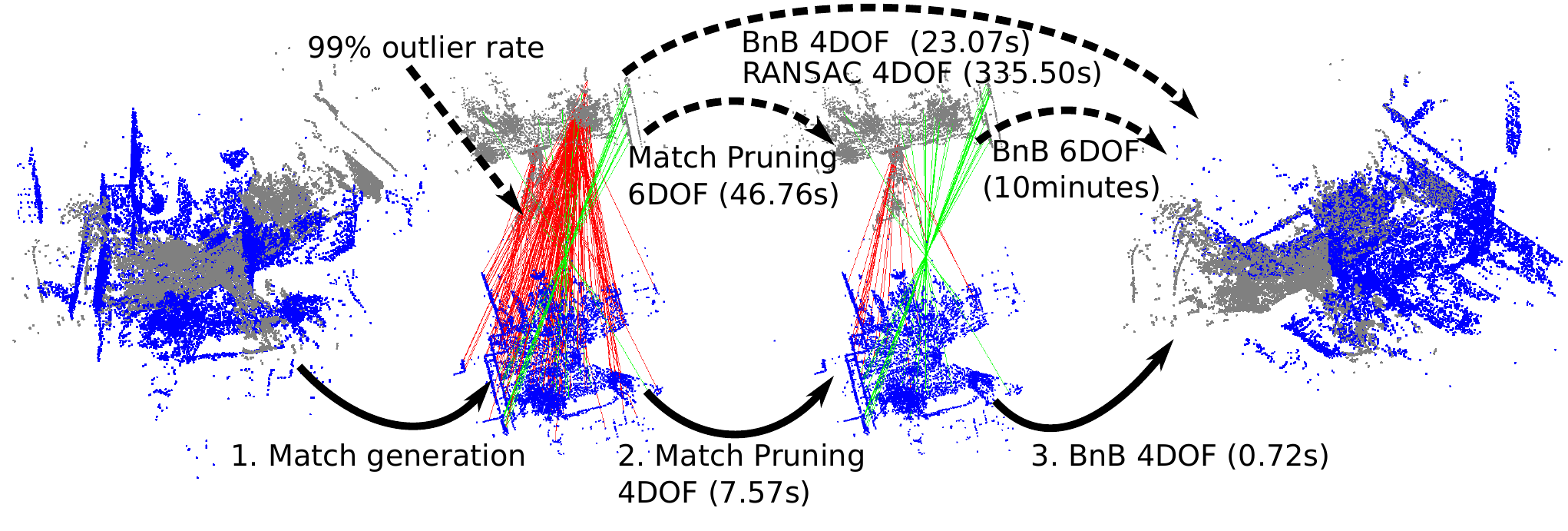}\label{subfig:smallBnB}}
\caption{Method illustration. Given 2 point clouds (grey and
  blue), as shown by the solid arrows, we first generate a set of matches/candidate correspondences (only a subset is shown for visual clarity), which
  have both inliers (green) and outliers (red). The
  matches are quickly pruned \emph{without removing any
    inliers}. Then, we run 4DOF BnB on the remaining matches
  to find the optimal solution. The dashed arrows show some alternative
  optimization choices. Note that our two-step process is much faster
  than directly optimising on the initial matches, and also more
  practically useful than its 6DOF
  counterpart~\citep{bustos2017guaranteed}.}\label{fig:mainDemo}
\end{figure*}

Towards that goal we make the following observations:
\begin{itemize}
\item Modern LiDAR devices are equipped with a highly accurate\footnote{The tilt accuracy is $0.002^\circ$ (see page 2 of \url{http://www.gb-geodezie.cz/wp-content/uploads/2016/01/datenblatt_imager_5006i.pdf}) for the Zoller\&Fr\"ohlich Imager 5006i used to capture datasets in our experiment. Similar accuracy can be found in scanners from other companies, e.g., 7.275e-6 radians for the Leica P40 tilt compensator (see page 8 of \url{https://www.abtech.cc/wp-content/uploads/2017/04/Tilt_compensation_for_Laser_Scanners_WP.pdf}).} level compensator,
  which reduces the relative rotation between scans to the azimuth;
  see Figure~\ref{fig:levelCompDemo}. In most applications the search
  space therefore has only 4 degrees of freedom (DOF) rather than
  6. This difference is significant, because the runtime of optimal methods grows very quickly with the problem dimension, see Figure~\ref{fig:mainDemo}.
  
\item A small set of correspondences, i.e., \emph{correct} point matches referring to close enough locations in the scene, is
  sufficient to reliably estimate the relative sensor pose, see
  Figure~\ref{fig:mainDemo}. The problem of match-based
  registration methods is normally not that there are too few
  correspondences to solve the problem; but rather that they are
  drowned in a large number of incorrect point matches, because of the
  high failure rate of existing 3D matching methods. The task would be
  a lot easier if we had a way to discard many false correspondences without
  losing true ones.
\end{itemize}

\begin{figure*}[t]\centering
\subfigure{\includegraphics[width=2\columnwidth]{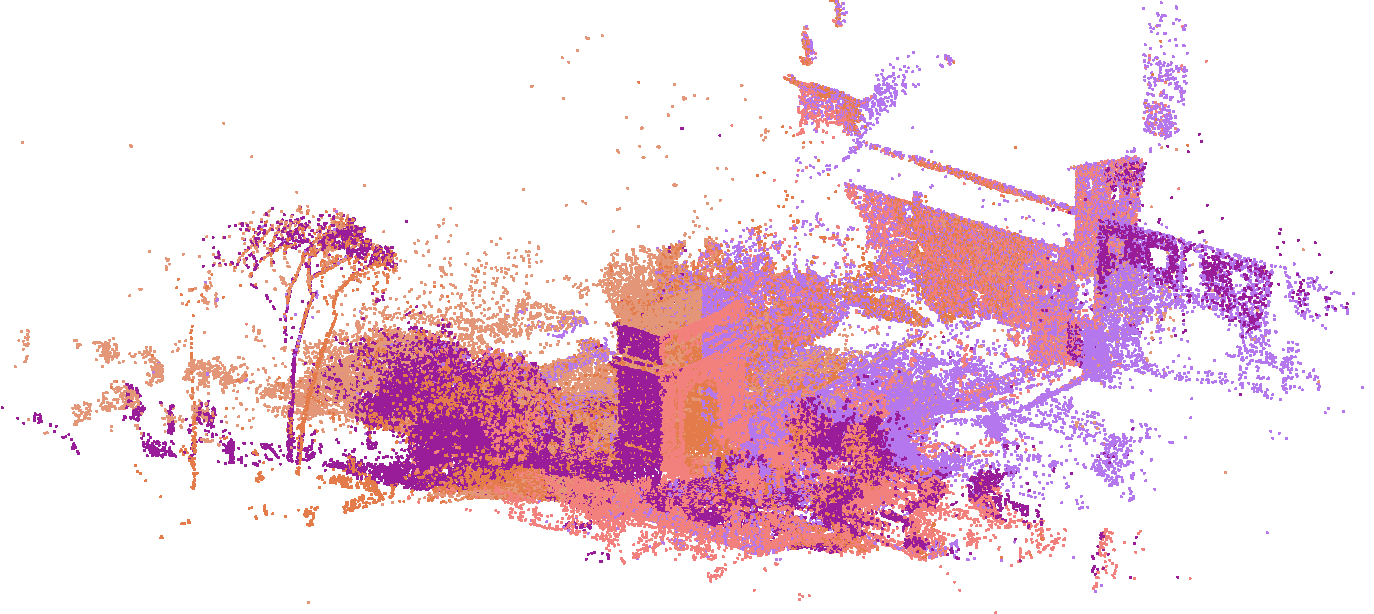}}
\caption{Registration result of our method for \textit{Arch} dataset. The registration of all five scans (with 15k-20k input point matches per pair, not shown) takes 187.53s, without requiring manual initializations. Note that our main contribution in this paper is a fast optimal algorithm for registering LiDAR scan pairs. To register multiple scans (as achieved on the \textit{Arch} dataset in this figure), we sequentially register the individual scans; see Section~\ref{sec:multiScanReg} for details.}\label{fig:regArchOutAll}
\end{figure*}

On the basis of these observations we develop a novel method for
optimal, match-based LiDAR registration, that has
the following main steps:
\begin{itemize}
\item Instead of operating directly on the input match set, a
  fast deterministic preprocessing step is executed to aggressively
  prune the input set, in a way which guarantees that only false
  correspondences are discarded. In this way, it is ensured that the
  optimal alignment of the reduced set is the same as for
  the initial, complete set of matches.
\item A fast 4DOF BnB algorithm is then run on the remaining
  matches to compute the optimal alignment parameters. Our BnB algorithm contains a deterministic
  polynomial-time subroutine for 1DOF rotation search, which
  accelerates the optimization.
\end{itemize}
Figure~\ref{fig:mainDemo} illustrates our approach. As suggested in
the figure and by our comprehensive testing (see
Section~\ref{sec:Exp}), our approach significantly speeds up optimal registration and makes it practical for realistic LiDAR surveying
tasks. For example, on the
\textit{Arch}\footnote{\url{http://www.prs.igp.ethz.ch/research/completed\_projects/automatic\_registration\_of\_point\_clouds.html}}
dataset, it is able to accurately register all 5 scans in around 3
min \emph{without} any
initializations to the registration; see
Figure~\ref{fig:regArchOutAll}.

Please visit our project homepage\footnote{\url{https://github.com/ZhipengCai/Demo---Practical-optimal-registration-of-terrestrial-LiDAR-scan-pairs}} for the video demo and source code.

\section{Related work}

Point-based registration techniques can be broadly categorized into
two groups: methods using ``raw" point clouds, and methods using 3D
point matches. Since our contribution belongs to the second
group, we will focus our survey on that group. Nonetheless, in our
experiments, we will compare against both raw point cloud methods and
match-based methods.

Match-based methods first extract a set of candidate correspondences from the input point clouds (using feature matching techniques such
as~\citep{scovanner20073,glomb2009detection,zhong2009intrinsic,rusu2008aligning,rusu2009fast}),
then optimize the registration parameters using the extracted candidates
only. Since the accuracy of 3D keypoint detection and matching is much
lower than their 2D
counterparts~\citep{harris1988combined,lowe1999object}, a major concern
of match-based methods is to discount the corrupting effects
of false correspondences or outliers.

A widely used strategy for robust estimation is Random Sample
Consensus (RANSAC)~\citep{fischler1981random}. However, the runtime of
RANSAC increases exponentially with the outlier rate, which is
generally very high in 3D registration,e.g., the input match set in
Figure~\ref{fig:mainDemo} contains more than $99\%$ outliers. More
efficient approaches have been proposed for dealing with outliers in
3D match sets, such as the game-theoretic method
of~\citep{albarelli2010game} and the lifting optimization method
of~\citep{zhou2016fast}. However, these are either heuristic (e.g.,
using randomisation~\citep{albarelli2010game}) and find solutions that
are, at most, \emph{correct with high probability}, but may also be
grossly wrong; or they are only locally optimal~\citep{zhou2016fast},
and will fail when initialized outside of the correct solution's
convergence basin.

Optimal algorithms for match-based 3D registration
also
exist~\citep{bazin2012globally,bustos2017guaranteed,yu2018maximum}. However,~\citep{bazin2012globally}
is restricted to pure rotational motion, while the 6DOF algorithms
of~\citep{bustos2017guaranteed,yu2018maximum} are computationally
expensive for unfavourable
configurations. E.g.,~\citep{bustos2017guaranteed} takes more than
  10 min to register the pair of point clouds in Figure~\ref{fig:mainDemo}.

Recently, clever filtering schemes have been
developed~\citep{svarm2014accurate,parra2015guaranteed,chin2017maximum,bustos2017guaranteed}
which have the ability to efficiently prune a set of putative
correspondences and only retain a much smaller subset, in a manner
that does not affect the optimal solution (more details in
Section~\ref{sec:GORE}). Our method can be understood as an extension
of the 2D rigid registration method of~\citep[Section
  4.2.1]{chin2017maximum} to the 4DOF case.
  
Beyond points, other geometric primitives (lines, planes, etc.) have also been exploited for LiDAR registration~\citep{brenner2007automatic, rabbani2007integrated}. By using more informative order structures in the data, such methods can potentially provide more accurate registration. Developing optimal registration algorithms based on higher-order primitives would be interesting future work.

\section{Problem formulation}

Given two input point clouds $\bP$ and $\bQ$, we first extract a set
of 3D keypoint matches $\cC = \{ (\bp_i,\bq_i)\}_{i=1}^M$
between $\bP$ and $\bQ$. This can be achieved using fairly standard
means --- Section~\ref{sec:expSynth} will describe our version of
the procedure. Given $\cC$, our task is to estimate the 4DOF rigid
transformation
\begin{align}
f(\bp \mid \theta, \bt) = \bR(\theta)\bp + \bt,
\end{align}
parameterized by a rotation angle $\theta \in [0,2\pi]$ and
translation vector $\bt \in \mathbb{R}^3$, that aligns as many of the
pairs in $\cC$ as possible. Note that
\begin{align}
\bR(\theta) = \left[ \begin{matrix}
& \cos{\theta} & \sin{\theta} & 0 \\
& -\sin{\theta} & \cos{\theta} & 0 \\
& 0 & 0 & 1
\end{matrix} \right]
\end{align}
defines a rotation about the 3rd axis, which we assume to be
aligned with gravity, expressing the fact that LiDAR scanners in
routine use are levelled.%
\footnote{The method is general and will work for any setting that
  allows only a 1D rotation around a known axis.}

Since $\cC$ contains outliers (false correspondences), $f$ must be
estimated robustly. To this end, we seek the parameters $\theta, \bt$
that maximize the objective
\begin{align}\label{eq:energy}
E(\theta, \bt \mid \cC, \epsilon) = \sum_{i=1}^M \mbbI\left(\norm{\bR(\theta)\bp_i+\bt-\bq_i}\leq\epsilon\right),
\end{align}
where $\epsilon$ is the \emph{inlier threshold}, and $\mbbI$ is an
indicator function that returns 1 if the input predicate is satisfied
and 0 otherwise. Intuitively,~\eqref{eq:energy} calculates the number
of pairs in $\cC$ that are aligned up to distance $\epsilon$ by $f(\bp
\mid \theta,\bt)$. Allowing alignment only up to $\epsilon$ is vital to
exclude the influence of the outliers.
Note that choosing the right $\epsilon$ is usually not an obstacle,
since LiDAR manufacturers specify the precision of the device, which
can inform the choice of an appropriate threshold. Moreover, given the
fast performance of the proposed technique, one could conceivably run
multiple rounds of registration with different $\epsilon$ and choose
one based on the alignment residuals.

Our overarching aim is thus to solve the optimization problem
\begin{align}\label{eq:obj}
E^* = \underset{\theta,\bt}{\text{max}}\ E(\theta,\bt \mid \cC,\epsilon)
\end{align}
exactly or optimally; in other words we are searching for the angle
and translation vector $\theta^*,\bt^*$ that yield the highest
objective value $E^* = E(\theta^*,\bt^* \mid \cC, \epsilon)$.
We note that in the context of registration, optimality is not
merely an academic exercise. Incorrect local minima are a real
problem, as amply documented in the literature on ICP and
illustrated by regular failures of the automatic registration routines
in production software.

\subsection{Main algorithm}

As shown in Algorithm~\ref{alg:main}, our approach to
solve~\eqref{eq:obj} optimally has two main steps: a
deterministic pruning step to reduce $\cC$ to a much smaller subset
$\cC^\prime$ while removing only matches that cannot be
correct, hence preserving the optimal solution $\theta^\ast, \bt^\ast$
in $\cC^\prime$ (Section~\ref{sec:GORE}); and a fast, custom-tailored
BnB algorithm to search for $\theta^*,\bt^*$ in $\cC^\prime$
(Section~\ref{sec:BnB}). For better flow of the presentation, we will
describe the BnB algorithm first, before the pruning.

\begin{algorithm}[t]\centering
\caption{Main algorithm.}\label{alg:main}
\begin{algorithmic}[1]
\REQUIRE Point clouds $\bP$ and $\bQ$ with 1D relative rotation, inlier threshold $\epsilon$.
\STATE Extract match set $\cC$ from $\bP$ and $\bQ$ (Section~\ref{sec:expSynth}).
\STATE Prune $\cC$ into a smaller subset $\cC^\prime$ (Section~\ref{sec:GORE}).\label{step:gore}
\STATE Solve~\eqref{eq:obj} on $\cC^\prime$ to obtain registration parameters $\theta^*,\bt^*$ (Section~\ref{sec:BnB}).
\RETURN Optimal solution $\theta^*, \bt^*$.
\end{algorithmic}
\end{algorithm}

\subsection{Registering multiple scans}\label{sec:multiScanReg}

In some applications, registering multiple scans is required. For this purpose, we can first perform pair-wise registration to estimate the relative poses between consecutive scans. These pair-wise poses can then be used to initialize simultaneous multi-scan registration methods like the maximum-likelihood alignment of~\citep{lu1997globally} and conduct further optimization.  

To refrain from further polishing that would obfuscate the contribution made by our robust pair-wise registration, all multi-scan registration results in this paper are generated by sequential pair-wise registration \emph{only},
i.e., starting from the initial scan, incrementally register the next scan to the
current one using Algorithm{~\ref{alg:main}. As shown in Figure~\ref{fig:regArchOutAll} and later in Section~\ref{sec:expReal}, our results are already promising even without further refinement.

\section{Fast BnB algorithm for 4DOF registration}\label{sec:BnB}

To derive our fast BnB algorithm, we first rewrite~\eqref{eq:obj} as
\begin{align}\label{eq:objExplicitT}
E^\ast = \underset{\bt}{\text{max}}\ U(\bt \mid \cC,\epsilon),
\end{align}
where
\begin{align}\label{eq:energyExplicitT}
U(\bt \mid \cC, \epsilon) = \underset{\theta}{\text{max}}\ E(\theta, \bt \mid \cC, \epsilon).
\end{align}
It is not hard to see the equivalence of~\eqref{eq:obj}
and~\eqref{eq:objExplicitT}. The purpose of~\eqref{eq:objExplicitT} is
twofold:
\begin{itemize}
\item As we will see in Section~\eqref{sec:1DRot}, estimating $\theta$
  \emph{given} $\bt$ can be accomplished deterministically in
  polynomial time, such that the optimization of $\theta$ can be
  viewed as ``evaluating" the function $U(\bt \mid \cC, \epsilon)$.
\item By ``abstracting away" the variable $\theta$ as
  in~\eqref{eq:objExplicitT}, the proposed BnB algorithm
  (Section~\ref{sec:BnB1}) can be more conveniently formulated as
  searching over only the 3-dimensional translation space
  $\mathbb{R}^3$.
\end{itemize}

\subsection{Deterministic rotation estimation}\label{sec:1DRot}

\begin{figure}[t]\centering
\subfigure{\includegraphics[width=1\columnwidth]{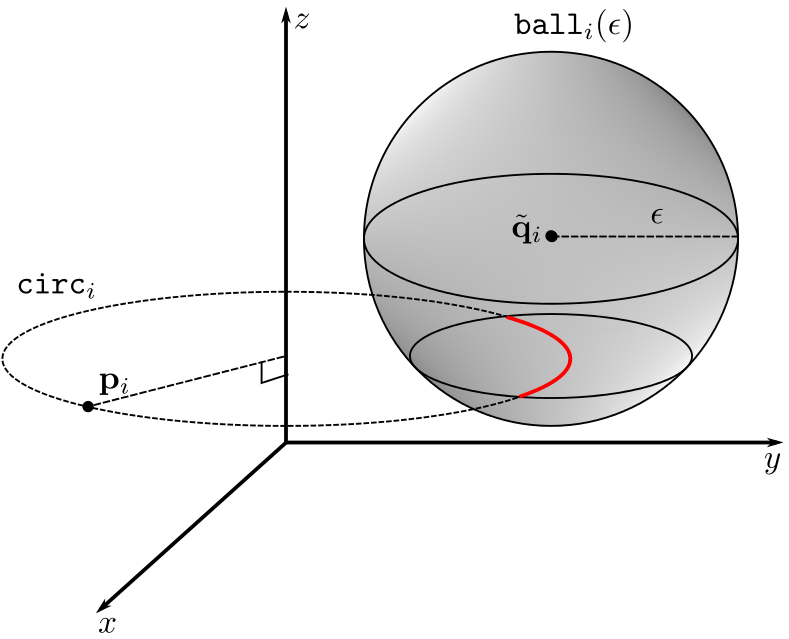}}
\caption{The intersection of $\mathtt{circ}_i$ and $\mathtt{ball}_i(\epsilon)$. The
  intersection part is rendered in red.}\label{fig:1DRotDemo}
\end{figure}

For completeness, the definition of $U(\bt \mid \cC, \epsilon)$ is as
follows
\begin{align}\label{eq:obj1DRot}
U(\bt \mid \cC, \epsilon) = \underset{\theta}{\text{max}}
\sum_{i=1}^{M}
\mbbI\left(\norm{\bR(\theta)\bp_i-\tilde{\bq}_i}\leq\epsilon\right),
\end{align}
where $\tilde{\bq}_i = \bq_i-\bt$. Intuitively, evaluating this function
amounts to finding the rotation $\bR(\theta)$ that aligns as many
pairs $\{(\bp_i,\tilde{\bq}_i)\}^{M}_{i=1}$ as possible.

Note that for each $\bp_i$, rotating it with $\bR(\theta)$ for all
$\theta \in [0,2\pi]$ forms the circular trajectory
\begin{align}
\mathtt{circ}_i = \left\{ \bR(\theta)\bp_i \mid \theta\in[0,2\pi] \right\}.
\end{align}
Naturally, $\mathtt{circ}_i$ collapses to a point if $\bp_i$ lies on the z-axis. Define 
\begin{align}
\mathtt{ball}_i(\epsilon) = \left\{ \bq \in \mathbb{R}^3 \mid \| \bq - \tilde{\bq}_i \| \le \epsilon \right\}
\end{align}
as the $\epsilon$-ball centered at $\tilde{\bq}_i$. It is clear that the pair
$(\bp_i,\tilde{\bq}_i)$ can only be aligned by $\bR(\theta)$ if
$\mathtt{circ}_i$ and $\mathtt{ball}_i(\epsilon)$ intersect; see
Figure~\ref{fig:1DRotDemo}. Moreover, if $\mathtt{circ}_i$ and
$\mathtt{ball}_i(\epsilon)$ do not intersect, then we can be sure that
the $i$-th pair plays no role in~\eqref{eq:obj1DRot}.

For each $i$, denote
\begin{align}
\mathtt{int}_{i} = [\alpha_{i},\beta_{i}] \subseteq [0,2\pi]
\end{align}
as the angular interval such that, for all $\theta \in
\mathtt{int}_i$, $\bR(\theta)\bp_i$ is aligned with $\tilde{\bq}_i$
within distance $\epsilon$. The limits $\alpha_{i}$ and $\beta_i$ can
be calculated in closed form via circle-to-circle intersections; see
Appendix~\ref{app:compInt}. Note that $\mathtt{int}_i$ is empty if
$\mathtt{circ}_i$ and $\mathtt{ball}_i(\epsilon)$ do not
intersect.
For brevity, in the following we take all $\mathtt{int}_i$ to be
single intervals. For the actual implementation, it is
straight-forward to break $\mathtt{int}_i$ into two intervals if it
extends beyond the range $[0,2\pi]$.
The function $U(\bt \mid \cC, \epsilon)$ can then be rewritten as
\begin{align}\label{eq:objStab}
U(\bt \mid \cC, \epsilon) = \underset{\theta}{\text{max}} \sum_{i=1}^M\mbbI\left( \theta \in[\alpha_{i},\beta_{i}]\right),
\end{align}
which is an instance of the \textit{max-stabbing}
problem~\citep[Chapter 10]{de2000computational}; see
Figure~\ref{fig:stabbing}. Efficient algorithms for max-stabbing are
known, in particular, the version in Algorithm~\ref{alg:1DRot} in the
Appendix runs deterministically in $\mathcal{O}(M\log M)$ time. This
supports a practical optimal algorithm.

\begin{figure}[t]\centering
\subfigure{\includegraphics[width=1\columnwidth]{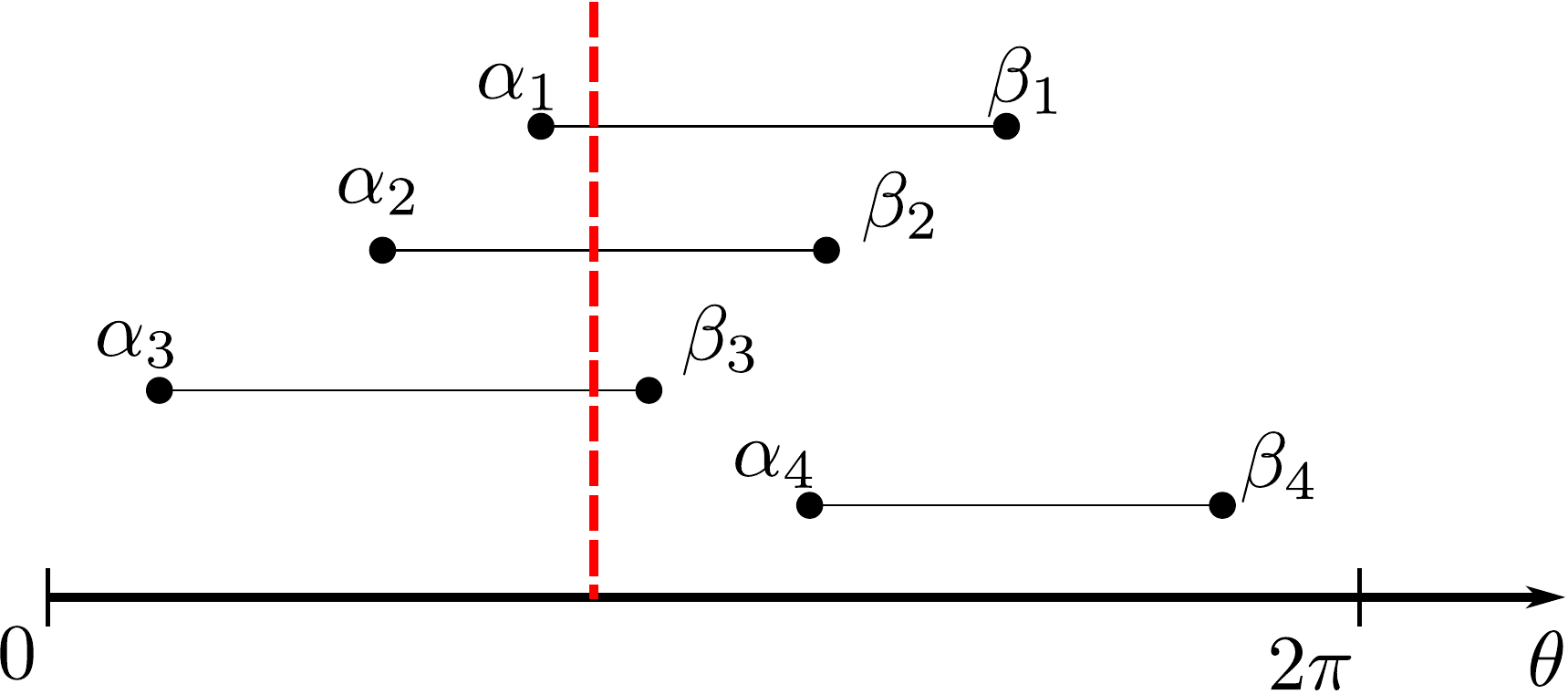}}
\caption{The max-stabbing problem aims to find a vertical line
  (defined by angle $\theta$ in our case) that ``stabs" the maximum
  number of intervals, e.g., the dashed red line. Note that $0$ and $2\pi$ refer to the same 1D rotation. To ensure their stabbing values are equal, if an interval has only one end on one of these two angles, an extra interval that starts and ends both at the other angle is added to the input.}\label{fig:stabbing}
\end{figure}

\subsection{BnB for translation search}\label{sec:BnB1}

In the context of solving~\eqref{eq:objExplicitT}, the BnB method
initializes a cube $\mathbb{S}_0$ in $\mathbb{R}^3$ that contains the
optimal solution $\bt^*$, then recursively partitions $\mathbb{S}_0$
into 8 sub-cubes; see Algorithm~\ref{alg:BnB1}. For each sub-cube
$\mathbb{S} \subset \mathbb{S}_0$, let $\bt_\mathbb{S}$ be the center
point of $\mathbb{S}$. If $\bt_\mathbb{S}$ gives a higher objective
value than the current best estimate $\hat{\bt}$, the latter is
updated to become the former, $\hat{\bt}\leftarrow\bt_\mathbb{S}$;
else, either
\begin{itemize}
\item a decision is made to discard $\mathbb{S}$ (see below); or
\item $\mathbb{S}$ is partitioned into 8 sub-cubes and the process
  above is repeated.
\end{itemize}
In the limit, $\hat{\bt}$ approaches the optimal solution $\bt^\ast$.

\begin{algorithm}[t]\centering
\caption{BnB for 4DOF match-based registration~\eqref{eq:objExplicitT}.}
\label{alg:BnB1}
\begin{algorithmic}[1]
\REQUIRE Initial matches $\cC$ and the inlier threshold
$\epsilon$.
\STATE Set the priority queue $w$ to $\emptyset$,
$\mathbb{S}_0 \leftarrow $ the initial translation cube, $\hat{\bt}\leftarrow \bt_{\mathbb{S}_0}$.
\STATE Compute $\bar{U}(\mathbb{S}_0\mid \cC, \epsilon)$ by Algorithm~\ref{alg:1DRot} and insert
($\mathbb{S}_0, \bar{U}(\mathbb{S}_0 \mid \cC, \epsilon)$) into $w$.
\WHILE{$w$ is not empty}
\STATE Pop out the cube $\mathbb{S}$ with the highest $\bar{U}(\mathbb{S} \mid \cC, \epsilon)$ from
$w$.\label{line:popQueue}
\STATE Compute $U(\bt_{\mathbb{S}} \mid \cC,\epsilon)$ by Algorithm~\ref{alg:1DRot}; If $U(\bt_{\mathbb{S}} \mid
\cC, \epsilon) = \bar{U}(\mathbb{S} \mid \cC, \epsilon)$, break.
\STATE If $U(\bt_{\mathbb{S}} \mid \cC, \epsilon)>U(\hat{\bt} \mid\cC, \epsilon)$, $\hat{\bt}\leftarrow \bt_{\mathbb{S}}$ and prune $w$ according to $U(\hat{\bt} \mid \cC, \epsilon)$.
\STATE Divide $\mathbb{S}$ into 8 sub-cubes $\{\mathbb{S}_o\}_{o=1}^8$ and compute $\bar{U}(\mathbb{S}_o \mid \cC, \epsilon)$ by Algorithm~\ref{alg:1DRot} for all $\mathbb{S}_o$.\label{line:branch1}
\STATE For each $\mathbb{S}_o$, if $\bar{U}(\mathbb{S}_o \mid \cC,\epsilon)>U(\hat{\bt} \mid \cC, \epsilon)$, insert ($\mathbb{S}_o,\bar{U}(\mathbb{S}_o \mid \cC, \epsilon)$) into $w$.\label{line:insertSubcube}
\ENDWHILE
\RETURN $\hat{\bt}$.
\end{algorithmic}
\end{algorithm}

Given a sub-cube $\mathbb{S}$ and the incumbent solution $\hat{\bt}$,
$\mathbb{S}$ is discarded if
\begin{align}\label{eq:bound}
\bar{U}(\mathbb{S} \mid \cC, \epsilon) \le U( \hat{\bt} \mid \cC,
\epsilon),
\end{align}
where $\bar{U}(\mathbb{S} \mid \cC, \epsilon)$ calculates an
\emph{upper bound} of $U(\bt \mid \cC, \epsilon)$ over domain
$\mathbb{S}$, i.e.,
\begin{align}
\bar{U}(\mathbb{S} \mid \cC, \epsilon) \ge \max_{\bt \in \mathbb{S}}~U(\bt \mid \cC, \epsilon).
\end{align}
The rationale is that if~\eqref{eq:bound} holds, then a solution that
is better than $\hat{\bt}$ cannot exist in $\mathbb{S}$. In our work,
the upper bound is obtained as
\begin{align}\label{eq:upBnd}
\bar{U}(\mathbb{S} \mid \cC, \epsilon) = U(\bt_\mathbb{S} \mid \cC, \epsilon+d_\mathbb{S}),
\end{align}
where $d_{\mathbb{S}}$ is half of the diagonal length of
$\mathbb{S}$. Note that computing the bound amounts to evaluating the
function $U$, which can be done efficiently via max-stabbing.

The following lemma establishes the validity of~\eqref{eq:upBnd} for BnB.
\begin{lemma}\label{lem:upBnd}
For any translation cube $\mathbb{S} \subset \mathbb{R}^3$,
\begin{align}\label{eq:lem_upBnd}
\bar{U}(\mathbb{S}\mid \cC, \epsilon) = U(\bt_\mathbb{S} \mid \cC,
\epsilon+d_\mathbb{S}) \geq
\underset{\bt\in\mathbb{S}}{\text{max}}~U(\bt \mid \cC, \epsilon),
\end{align}
and as $\mathbb{S}$ tends to a point $\bt$, then
\begin{align}
\bar{U}(\mathbb{S}\mid \cC, \epsilon) \to U(\bt \mid \cC, \epsilon).
\end{align}
See Appendix~\ref{app:lemProof} for the proofs.
\end{lemma}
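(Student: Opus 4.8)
The plan is to prove the two claims of Lemma~\ref{lem:upBnd} separately: first the upper-bound inequality, then the convergence of the bound as the cube shrinks to a point.

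For the upper-bound claim, I would argue as follows. Fix any $\bt \in \mathbb{S}$. Since $\bt_\mathbb{S}$ is the centre of the cube $\mathbb{S}$ and $d_\mathbb{S}$ is half the diagonal length, we have $\norm{\bt - \bt_\mathbb{S}} \le d_\mathbb{S}$. Now suppose a given pair $(\bp_i, \bq_i)$ is an inlier at translation $\bt$ for some rotation $\theta$, i.e. $\norm{\bR(\theta)\bp_i + \bt - \bq_i} \le \epsilon$. The key step is a triangle-inequality argument: writing $\bR(\theta)\bp_i + \bt_\mathbb{S} - \bq_i = (\bR(\theta)\bp_i + \bt - \bq_i) + (\bt_\mathbb{S} - \bt)$, we get
\begin{align}
\norm{\bR(\theta)\bp_i + \bt_\mathbb{S} - \bq_i} \le \norm{\bR(\theta)\bp_i + \bt - \bq_i} + \norm{\bt_\mathbb{S} - \bt} \le \epsilon + d_\mathbb{S}.
\end{align}
Hence the same $\theta$ that makes the pair an inlier at $\bt$ with threshold $\epsilon$ also makes it an inlier at $\bt_\mathbb{S}$ with the inflated threshold $\epsilon + d_\mathbb{S}$. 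Since the rotation $\theta$ maximising $E(\theta, \bt \mid \cC, \epsilon)$ certifies $U(\bt \mid \cC, \epsilon)$ many inliers, and every one of those pairs survives (under that same $\theta$) at $\bt_\mathbb{S}$ with threshold $\epsilon + d_\mathbb{S}$, we conclude $E(\theta, \bt_\mathbb{S} \mid \cC, \epsilon + d_\mathbb{S}) \ge U(\bt \mid \cC, \epsilon)$, and therefore $U(\bt_\mathbb{S} \mid \cC, \epsilon + d_\mathbb{S}) \ge U(\bt \mid \cC, \epsilon)$. As $\bt \in \mathbb{S}$ was arbitrary, taking the maximum over $\bt \in \mathbb{S}$ gives $\bar U(\mathbb{S} \mid \cC, \epsilon) = U(\bt_\mathbb{S} \mid \cC, \epsilon + d_\mathbb{S}) \ge \max_{\bt \in \mathbb{S}} U(\bt \mid \cC, \epsilon)$, which is the first claim.

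For the convergence claim, as $\mathbb{S}$ shrinks to a point $\bt$ we have $\bt_\mathbb{S} \to \bt$ and $d_\mathbb{S} \to 0$. One direction is immediate from the inequality just proved, since $\{\bt\} \subseteq \mathbb{S}$ gives $\bar U(\mathbb{S} \mid \cC, \epsilon) \ge U(\bt \mid \cC, \epsilon)$ (note $\bt_\mathbb{S} = \bt$ and $d_\mathbb{S} = 0$ in the exact limit; more carefully one keeps $\bt$ fixed and takes a nested sequence of cubes containing $\bt$). For the reverse direction I would argue that, for $d_\mathbb{S}$ small enough, no \emph{extra} pairs can become inliers: the map $(\bt, \theta) \mapsto \norm{\bR(\theta)\bp_i + \bt - \bq_i}$ is continuous, and a pair $i$ that is \emph{not} an inlier at $\bt$ for any $\theta$ has $\min_\theta \norm{\bR(\theta)\bp_i + \bt - \bq_i} = \epsilon + \delta_i$ for some $\delta_i > 0$ (the minimum over the compact set $\theta \in [0,2\pi]$ is attained and strictly exceeds $\epsilon$). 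Choosing $d_\mathbb{S} < \min_i \delta_i$ over the finitely many non-inlier pairs ensures $U(\bt_\mathbb{S} \mid \cC, \epsilon + d_\mathbb{S}) = U(\bt \mid \cC, \epsilon)$ once the cube is small enough and, by continuity, also that $\bt_\mathbb{S}$ is close enough to $\bt$ that the inlier count at $\bt_\mathbb{S}$ does not exceed that at $\bt$ either. The subtle point to handle carefully — and the part I'd expect to require the most care — is this "no new inliers appear" step: one must treat the finitely many pairs uniformly, invoke compactness of $[0,2\pi]$ to get a strictly positive gap for each non-inlier pair, and also make sure pairs that are inliers at $\bt$ are not lost by the slight perturbation to $\bt_\mathbb{S}$ (handled by inflating the threshold, which only helps). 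Assembling these, $\bar U(\mathbb{S} \mid \cC, \epsilon) \to U(\bt \mid \cC, \epsilon)$, completing the proof. Full details are deferred to Appendix~\ref{app:lemProof}.
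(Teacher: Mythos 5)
Your proof of the inequality~\eqref{eq:lem_upBnd} is correct and is essentially the paper's argument: the paper writes the maximizer $\bt^*(\mathbb{S})$ as $\bt_{\mathbb{S}}+\bt^*{'}(\mathbb{S})$ with $\norm{\bt^*{'}(\mathbb{S})}\le d_{\mathbb{S}}$ and applies the triangle inequality to transfer every inlier of $U(\bt^*(\mathbb{S})\mid\cC,\epsilon)$ to an inlier of $U(\bt_{\mathbb{S}}\mid\cC,\epsilon+d_{\mathbb{S}})$; you do the same for an arbitrary $\bt\in\mathbb{S}$ and then maximize, which is an immaterial difference. For the convergence claim the paper is terser than you are --- it simply observes $d_{\mathbb{S}}\to 0$ and asserts the limit --- so your attempt to supply the missing compactness argument is welcome, but the specific argument you give has a gap.

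The gap is in the step ``no extra pairs can become inliers.'' You control, for each pair $i$ that is not alignable at $\bt$ for \emph{any} $\theta$, the individual quantity $\min_\theta \norm{\bR(\theta)\bp_i+\bt-\bq_i}$. But $U$ is a maximum over $\theta$ of a \emph{simultaneous} count, so it can increase without any single pair changing its individual alignability: two pairs may each be alignable at $\bt$ (for different angles $\theta_1\neq\theta_2$) yet not simultaneously alignable at threshold $\epsilon$, while becoming simultaneously alignable at some intermediate angle once the threshold is inflated to $\epsilon+d_{\mathbb{S}}$. Your per-pair $\delta_i$'s say nothing about this, so the asserted equality $U(\bt_{\mathbb{S}}\mid\cC,\epsilon+d_{\mathbb{S}})=U(\bt\mid\cC,\epsilon)$ does not follow from what you wrote. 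The fix is to apply compactness to subsets rather than to single pairs: for every index set $S$ with $|S|>U(\bt\mid\cC,\epsilon)$ one has $\min_{\theta}\max_{i\in S}\norm{\bR(\theta)\bp_i+\bt-\bq_i}>\epsilon$ (the minimum is attained on the compact set $[0,2\pi]$, and if it were $\le\epsilon$ then $S$ would be simultaneously aligned at threshold $\epsilon$, contradicting $|S|>U(\bt\mid\cC,\epsilon)$); taking the minimum of these gaps over the finitely many such $S$ yields a uniform $\delta>0$, and once $d_{\mathbb{S}}$ and $\norm{\bt_{\mathbb{S}}-\bt}$ are both below $\delta/2$ no such $S$ can be simultaneously aligned at $(\bt_{\mathbb{S}},\epsilon+d_{\mathbb{S}})$, which together with the lower bound from~\eqref{eq:lem_upBnd} gives the limit.
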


To conduct the search more strategically, in Algorithm~\ref{alg:BnB1}
the unexplored sub-cubes are arranged in a priority queue $w$ based on
their upper bound value.
Note that while Algorithm~\ref{alg:BnB1} appears to be solving only
for the translation $\bt$ via problem~\eqref{eq:objExplicitT},
\emph{implicitly} it is simultaneously optimizing the angle
$\theta$: given the output $\bt^*$ from Algorithm~\ref{alg:BnB1}, the
optimal $\theta^*$ per the original problem~\eqref{eq:obj} can be
obtained by evaluating $U( \bt^* \mid \cC, \epsilon)$ and keeping the
maximizer.

\section{Fast preprocessing algorithm}\label{sec:GORE}

Instead of invoking BnB (Algorithm~\ref{alg:BnB1}) on the
match set $\cC$ directly, our approach first executes a
preprocessing step (see Step~\ref{step:gore} in
Algorithm~\ref{alg:main}) to reduce $\cC$ to a much smaller subset
$\cC^\prime$, then runs BnB on $\cC^\prime$. Remarkably. this pruning
can be carried out in such a way that the optimal solution is
preserved in $\cC^\prime$, i.e.,
\begin{align}\label{eq:preserve1}
\theta^*,\bt^* = \argmax_{\theta,\bt}~E(\theta,\bt \mid \cC, \epsilon) = \argmax_{\theta,\bt}~E(\theta,\bt \mid \cC^\prime, \epsilon).
\end{align}
Hence, BnB runs a lot faster, but still finds the optimum
w.r.t.\ the original, full match set.

Let $\mathcal{I}^*$ be the subset of $\cC$ that are aligned by
$\theta^*,\bt^*$, formally
\begin{align}
\| \bR(\theta^*)\bp_i + \bt^* - \bq_i \| \le \epsilon \;\; \forall (\bp_i,\bq_i) \in \mathcal{I}^*.
\end{align}
If the following condition holds
\begin{align}
\mathcal{I}^* \subseteq \cC^\prime \subseteq \cC,
\end{align}
then it follows that~\eqref{eq:preserve1} will also hold. Thus, the
trick for preprocessing is to remove only matches that are
in $\cC \setminus \cI^*$, i.e., ``certain outliers".

\subsection{Identifying the certain outliers}

To accomplish the above, define the problem \hypertarget{P[k]}{$\mathcal{P}[k]$}\footnote{Note that the ``1+" in~\eqref{eq:fixedk} is necessary because we want the optimal value $E^*_k$ of $\mathcal{P}[k]$ to be exactly equal to $E^*$ if the k-th match is an inlier, which is the basis of Lemma~\ref{lem:ktest}.}:
\begin{equation}
\begin{aligned}
& \underset{\theta,\bt}{\text{max}}
& & 1 + \sum_{i \in \cJ_k} \mbbI\left(\norm{\bR(\theta)\bp_i+\bt-\bq_i}\leq\epsilon\right)\\
& \text{s.t.}
& & \| \bR(\theta)\bp_k + \bt - \bq_k \| \le \epsilon,
\end{aligned}\label{eq:fixedk}
\end{equation}

where $k\in \{1,\dots,M \}$, and $\cJ_k = \{1,\dots,M\}\setminus\{ k
\}$. In words, \hyperlink{P[k]}{$\mathcal{P}[k]$} is the same problem as the original
registration problem~\eqref{eq:obj}, except that the $k$-th
match must be aligned. We furthermore define
$E^*_k$ as the optimal value of \hyperlink{P[k]}{$\mathcal{P}[k]$};
$\bar{E}_k$ as an upper bound on the value of
\hyperlink{P[k]}{$\mathcal{P}[k]$}, such that $\bar{E}_k \ge E^*_k$;
and $\underline{E}$ as a lower bound on the value of \eqref{eq:obj},
  so $\underline{E} \le E^*$.
Note that, similar to $E^*$, $E^*_k$ can only be obtained by optimal
search using BnB, but we want to avoid such a costly computation in the pruning
stage.
Instead, if we have access to $\bar{E}_k$ and $\underline{E}$
(details in Section~\ref{sec:kbound}), the following test can be made.

\begin{lemma}\label{lem:ktest}
If $\bar{E}_k < \underline{E}$, then $(\bp_k,\bq_k)$ is a certain outlier.
\end{lemma}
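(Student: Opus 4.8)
The plan is to prove the contrapositive: I will show that if $(\bp_k,\bq_k)$ is \emph{not} a certain outlier, i.e.\ if it lies in the optimal consensus set $\cI^*$, then $\bar{E}_k \ge \underline{E}$, which contradicts the hypothesis $\bar{E}_k < \underline{E}$.

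\textbf{Step 1 (the global optimizer is feasible for $\mathcal{P}[k]$).} By definition, $(\bp_k,\bq_k)\in\cI^*$ means that the global optimizer $(\theta^*,\bt^*)$ of~\eqref{eq:obj} satisfies $\norm{\bR(\theta^*)\bp_k+\bt^*-\bq_k}\le\epsilon$. Hence $(\theta^*,\bt^*)$ is a feasible point of the constrained problem $\mathcal{P}[k]$ in~\eqref{eq:fixedk}.

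\textbf{Step 2 ($E^*_k \ge E^*$).} I would then evaluate the objective of $\mathcal{P}[k]$ at this feasible point. Since the $k$-th match is aligned by $(\theta^*,\bt^*)$, its indicator term equals $1$, so
\begin{align*}
1+\sum_{i\in\cJ_k}\mbbI\!\left(\norm{\bR(\theta^*)\bp_i+\bt^*-\bq_i}\le\epsilon\right)
&=\sum_{i=1}^{M}\mbbI\!\left(\norm{\bR(\theta^*)\bp_i+\bt^*-\bq_i}\le\epsilon\right)\\
&=E^*,
\end{align*}
which is exactly what the ``$1+$'' in~\eqref{eq:fixedk} is for. Consequently the optimal value of $\mathcal{P}[k]$ obeys $E^*_k\ge E^*$. (For completeness, one also has $E^*_k\le E^*$, because any feasible $(\theta,\bt)$ of $\mathcal{P}[k]$ aligns the $k$-th match and hence its $\mathcal{P}[k]$-objective equals $E(\theta,\bt\mid\cC,\epsilon)\le E^*$; so in fact $E^*_k=E^*$, but only the direction $E^*_k\ge E^*$ is used below.)

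\textbf{Step 3 (chaining the bounds).} Using the defining properties $\bar{E}_k\ge E^*_k$ of the upper bound and $\underline{E}\le E^*$ of the lower bound together with Step 2, we get
\begin{align*}
\bar{E}_k\ \ge\ E^*_k\ \ge\ E^*\ \ge\ \underline{E},
\end{align*}
which contradicts $\bar{E}_k<\underline{E}$. Therefore the assumption $(\bp_k,\bq_k)\in\cI^*$ cannot hold, i.e.\ $(\bp_k,\bq_k)\in\cC\setminus\cI^*$ is a certain outlier.

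I do not expect a real obstacle here: the whole argument is a short sandwich of optimal values. The only point needing care — and it is pure bookkeeping with the indicator function — is Step 2, namely that the feasible point $(\theta^*,\bt^*)$ attains objective value exactly $E^*$ in $\mathcal{P}[k]$; this is precisely where the ``$1+$'' offset matters, as it compensates for dropping the $k$-th indicator from the sum (without it, one would only get the vacuous bound $E^*_k\le E^*$ and the test would carry no information).
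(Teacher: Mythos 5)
Your proposal is correct and follows essentially the same route as the paper: both arguments reduce to showing $E^*_k = E^*$ when $(\bp_k,\bq_k)\in\cI^*$ and then deriving a contradiction from the chain $E^*_k \le \bar{E}_k < \underline{E} \le E^*$. The only difference is that you spell out why the ``$1+$'' offset makes the feasible point $(\theta^*,\bt^*)$ attain value exactly $E^*$ in $\mathcal{P}[k]$, a detail the paper states without elaboration (relegating it to a footnote).
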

\begin{proof}
If $(\bp_k,\bq_k)$ is in $\cI^*$, then we must have that $E^*_k =
E^*$, i.e., \hyperlink{P[k]}{$\mathcal{P}[k]$} and~\eqref{eq:obj} must have the same
solution. However, if we are given that $\bar{E}_k < \underline{E}$,
then
\begin{align}
E^*_k \le \bar{E}_k < \underline{E} \le E^*
\end{align}
which contradicts the previous statement. Thus, $(\bp_k,\bq_k)$ cannot be in $\cI^*$.
\end{proof}

The above lemma underpins a pruning algorithm that removes certain
outliers from $\cC$ in order to reduce it to a smaller subset
$\cC^\prime$, which still includes all inlier putative
correspondences; see Algorithm~\ref{alg:GORE}. The algorithm simply
iterates over $k = 1,\dots,M$ and attempts the test in
Lemma~\ref{lem:ktest} to remove $(\bp_k,\bq_k)$. At each $k$, the
upper and lower bound values $\bar{E}_k$ and $\underline{E}$ are
computed and/or refined (details in the following section). As we will
demonstrate in Section~\ref{sec:Exp}, Algorithm~\ref{alg:GORE} is able
to shrink $\cC$ to less than $20\%$ of its original size for practical
cases.

\begin{algorithm}[ht]\centering
\caption{Fast match pruning (FMP) for 4DOF registration.}
\label{alg:GORE}
\begin{algorithmic}[1]
\REQUIRE Initial matches $\cC$, the inlier threshold $\epsilon$.
\STATE $\underline{E}\leftarrow 0$, $\cC' \leftarrow \cC$.
\FOR{$k = 1,...,M$}
\STATE Compute $\bar{E}_k$ (Section~\ref{sec:kbound}).
\IF {$\bar{E}_k<\underline{E}$}
\STATE $\cC' \leftarrow \cC'\backslash (\bp_k,\bq_k)$.
\ELSE
\STATE Re-evaluate $\underline{E}$ using the corresponding solution of $\bar{E}_k$ (Section~\ref{sec:kbound}).
\ENDIF
\ENDFOR
\STATE Remove from $\cC'$ the remaining $(\bp_k,\bq_k)$ whose $\bar{E}_k<\underline{E}$ .
\RETURN $\cC'$
\end{algorithmic}
\end{algorithm}

\subsection{Efficient bound computation}\label{sec:kbound}

For the data in problem~$\mathcal{P}[k]$, let them be centered
w.r.t.~$\bp_k$ and $\bq_k$, i.e.,
\begin{align}
\bp^\prime_i = \bp_i - \bp_k,~~~~\bq^\prime_i = \bq_i - \bq_k,~~~~\forall i.
\end{align}
Then, define the following pure rotational problem $\mathcal{Q}[k]$:
\begin{equation}
\begin{aligned}
& \underset{\theta}{\text{max}}
& & 1 + \sum_{i \in \cJ_k} \mbbI \left(\norm{\bR(\theta)\bp'_i-\bq'_i} \leq 2\epsilon\right).
\end{aligned}\label{eq:fixedkcent}
\end{equation} 
We now show that $\bar{E}_k$ and $\underline{E}$ in
Algorithm~\ref{alg:GORE} can be computed by
solving~$\mathcal{Q}[k]$, which can again be done efficiently
using max-stabbing (Algorithm~\ref{alg:1DRot}).

First, we show by Lemma~\ref{lem:PkQk} that the value
of~$\mathcal{Q}[k]$ can be directly used as $\bar{E}_k$, i.e., the
number of inliers in~$\mathcal{Q}[k]$ is an upper bound of the one
in~$\mathcal{P}[k]$.
\begin{lemma}~\label{lem:PkQk}
If $(\bp_i,\bq_i)$ is aligned by the optimal solution $\theta_k^\ast$
and $\bt^\ast_k$ of~$\mathcal{P}[k]$, $(\bp'_i,\bq'_i)$ must also be
aligned by $\theta_k^\ast$ in~$\mathcal{Q}[k]$.
\end{lemma}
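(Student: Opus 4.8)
The plan is to work directly from the two inequalities that the optimal pair $(\theta^\ast_k,\bt^\ast_k)$ of $\mathcal{P}[k]$ satisfies, and show that the translation drops out of the centered residual. First I would record the hypothesis: $(\bp_i,\bq_i)$ being aligned means $\norm{\bR(\theta^\ast_k)\bp_i+\bt^\ast_k-\bq_i}\le\epsilon$. Then I would invoke the feasibility constraint of $\mathcal{P}[k]$ in~\eqref{eq:fixedk}, namely $\norm{\bR(\theta^\ast_k)\bp_k+\bt^\ast_k-\bq_k}\le\epsilon$, which holds because $\theta^\ast_k,\bt^\ast_k$ is a feasible (indeed optimal) point of that constrained problem.

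The key algebraic step is to rewrite the centered residual. Since $\bp'_i=\bp_i-\bp_k$ and $\bq'_i=\bq_i-\bq_k$, linearity of $\bR(\theta^\ast_k)$ gives
\begin{align}
\bR(\theta^\ast_k)\bp'_i-\bq'_i
&= \bigl(\bR(\theta^\ast_k)\bp_i+\bt^\ast_k-\bq_i\bigr)-\bigl(\bR(\theta^\ast_k)\bp_k+\bt^\ast_k-\bq_k\bigr),\notag
\end{align}
where the crucial point is that $\bt^\ast_k$ appears with opposite signs in the two grouped terms and therefore cancels. Applying the triangle inequality and the two bounds above then yields $\norm{\bR(\theta^\ast_k)\bp'_i-\bq'_i}\le\epsilon+\epsilon=2\epsilon$, which is exactly the condition for $(\bp'_i,\bq'_i)$ to be counted as aligned by $\theta^\ast_k$ in $\mathcal{Q}[k]$ as defined in~\eqref{eq:fixedkcent}. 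The leading ``$1+$'' term requires no separate argument: it corresponds to the $k$-th pair, which is aligned at distance $0$ after centering since $\bR(\theta^\ast_k)\bp'_k-\bq'_k=\bzero$.

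I do not expect a genuine obstacle here; the lemma is essentially the observation that centering on a matched pair converts a $4$DOF similarity residual into a pure $1$DOF rotational residual with the inlier radius inflated from $\epsilon$ to $2\epsilon$, and this is precisely the $\bt^\ast_k$ cancellation above. The only point that needs mild care is making explicit that $\theta^\ast_k,\bt^\ast_k$ really does satisfy the constraint of $\mathcal{P}[k]$ (so that the second bound is available), and that the direction of the implication is the one stated --- we are mapping an inlier of $\mathcal{P}[k]$ to an inlier of $\mathcal{Q}[k]$ under the \emph{same} angle, not claiming a converse. This one-directional inlier inclusion is exactly what is needed downstream to certify that the optimal value of $\mathcal{Q}[k]$ serves as the upper bound $\bar{E}_k$ in Algorithm~\ref{alg:GORE}.
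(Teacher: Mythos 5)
Your proof is correct and is essentially the paper's own argument: the paper re-expresses $\bt^\ast_k$ as $\bq_k-\bR(\theta^\ast_k)\bp_k+\bt_k^{\ast}{'}$ with $\norm{\bt_k^{\ast}{'}}\le\epsilon$ (i.e., as the center of the $k$-th constraint ball plus the $k$-th residual) and then applies the triangle inequality, which is exactly the translation-cancelling subtraction of the two residuals that you perform. No gap; your side remark that the $k$-th pair itself has zero centered residual is also correct, though not needed for the statement.
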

\begin{proof}
To align $(\bp_k,\bq_k)$ in~$\mathcal{P}[k]$, $\bt^\ast_k$ must be
within the $\epsilon$-ball centered at
$\bq_k-\bR(\theta^\ast_k)\bp_k$, i.e., $\bt^\ast_k$ can be
re-expressed by $\bq_k-\bR(\theta^\ast_k)\bp_k+\bt_k^\ast{'}$, where
$\norm{\bt_k^\ast{'}}\leq\epsilon$. Using this re-expression, when
$(\bp_i,\bq_i)$ is aligned by $\theta^\ast_k$ and $\bt^\ast_k$, we
have
\begin{align}
&\norm{\bR(\theta^\ast_k)\bp_i+(\bq_k-\bR(\theta^*_k)\bp_k+\bt_k^*{'})-\bq_i} \\ 
= &\norm{\bR(\theta^\ast_k)\bp'_i+\bt_k^*{'}-\bq'_i} \leq \epsilon\label{eq:lem3_1}\\
\Rightarrow & \norm{\bR(\theta^\ast_k)\bp'_i-\bq'_i} - \norm{\bt_k^*{'}} \le \epsilon\label{eq:lem3_2}\\
\Rightarrow & \norm{\bR(\theta^*_k)\bp'_i-\bq'_i} - \epsilon \leq \epsilon \Leftrightarrow \norm{\bR(\theta^*_k)\bp'_i-\bq'_i} \leq 2\epsilon,\label{eq:lem3_3}
\end{align}
\eqref{eq:lem3_2} and~\eqref{eq:lem3_3} are due respectively to the
triangle inequality\footnote{\url{https://en.wikipedia.org/wiki/Triangle\_inequality}} and to
$\norm{\bt_k^*{'}}\leq\epsilon$. According to \eqref{eq:lem3_3},
$(\bp'_i,\bq'_i)$ is also aligned by $\theta_k^*$
in~$\mathcal{Q}[k]$.
\end{proof} 
On the other hand, $\underline{E}$, the lower bound of $E^\ast$, can
be calculated using the optimal solution $\tilde{\theta}_k$ of
$\mathcal{Q}[k]$. Specifically, we set $\tilde{\bt}_k =
\bq_k-\bR(\tilde{\theta}_k)\bp_k$ and compute $\underline{E} =
U(\tilde{\bt}_k \mid \cC', \epsilon) = E(\tilde{\theta}_k,
\tilde{\bt}_k \mid \cC', \epsilon)$, directly following
Equation~\eqref{eq:energy}.

In this way, evaluating $\bar{E}_k$ and $\underline{E}$ takes $\mathcal{O}(M\log
M)$, respectively $\mathcal{O}(M)$ time. As Algorithm~\ref{alg:GORE} repeats
both evaluations $M$ times, its time complexity is $\mathcal{O}(M^2 \log M)$.

\section{Experiments}\label{sec:Exp}

The experiments contain two parts, which show respectively the results
on \emph{controlled} and \emph{real-world} data. All experiments were
implemented in C++ and executed on a laptop with 16 GB RAM and Intel
Core 2.60 GHz i7 CPU.

\begin{figure*}[!htb]\centering
\subfigure[Runtime---\textit{Bunny}.]{\includegraphics[width=0.7\columnwidth]{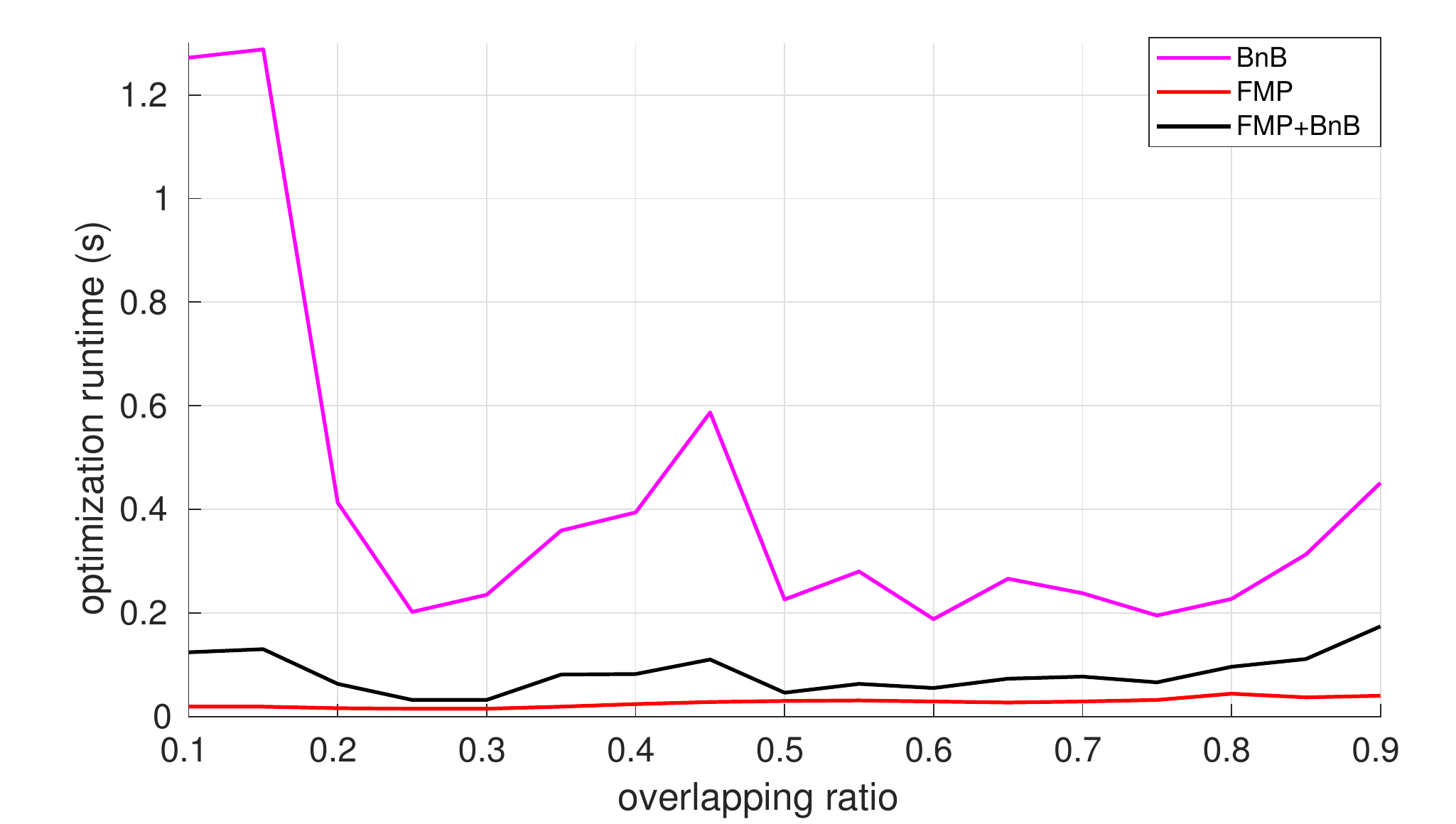}\label{subfig:1DRot_Bunny}}
\subfigure[Runtime---\textit{Armadillo}.]{\includegraphics[width=0.7\columnwidth]{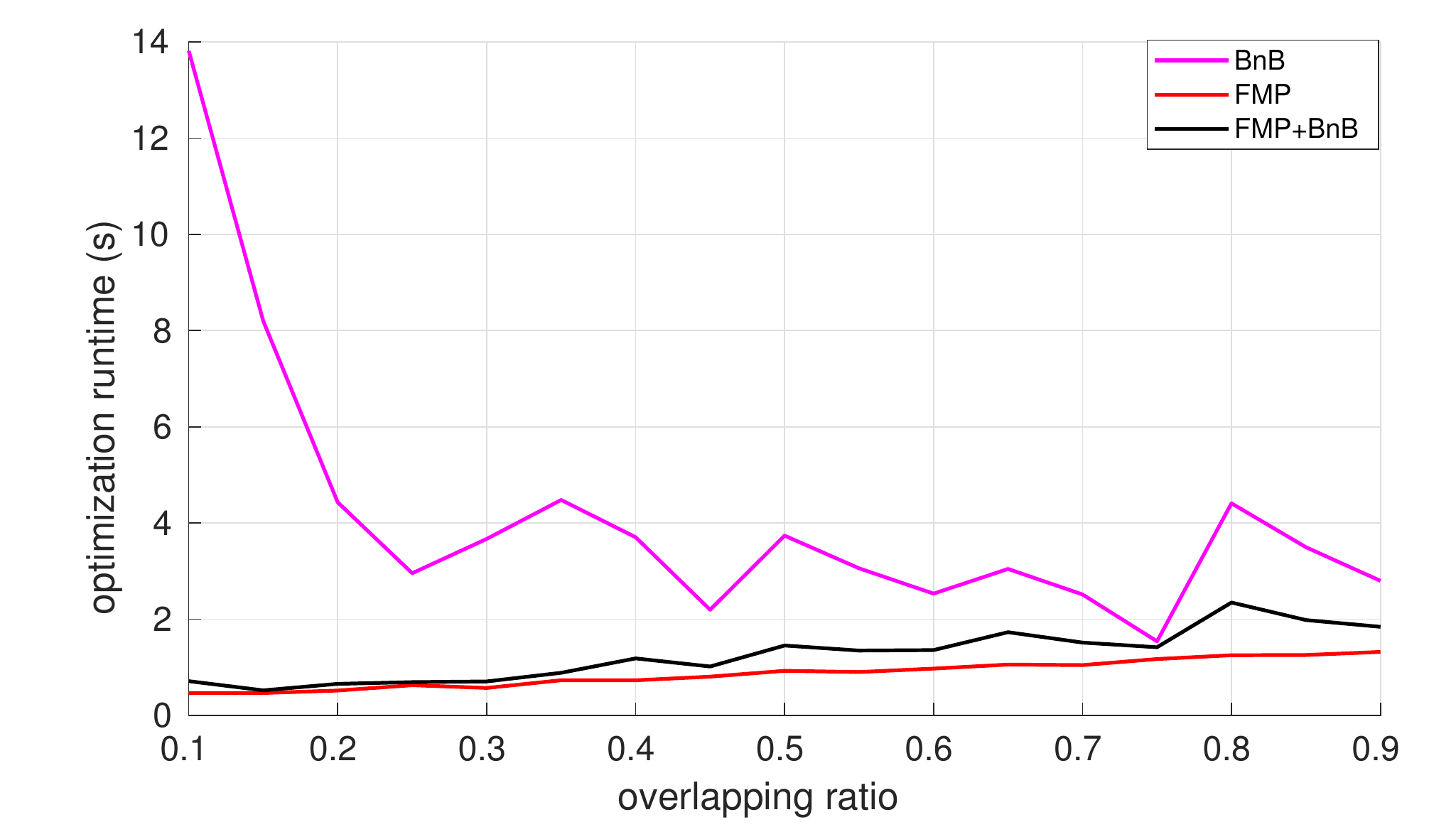}\label{subfig:1DRot_Armadillo}}
\subfigure[Number of matches before and after FMP---\textit{Bunny}.]{\includegraphics[width=0.7\columnwidth]{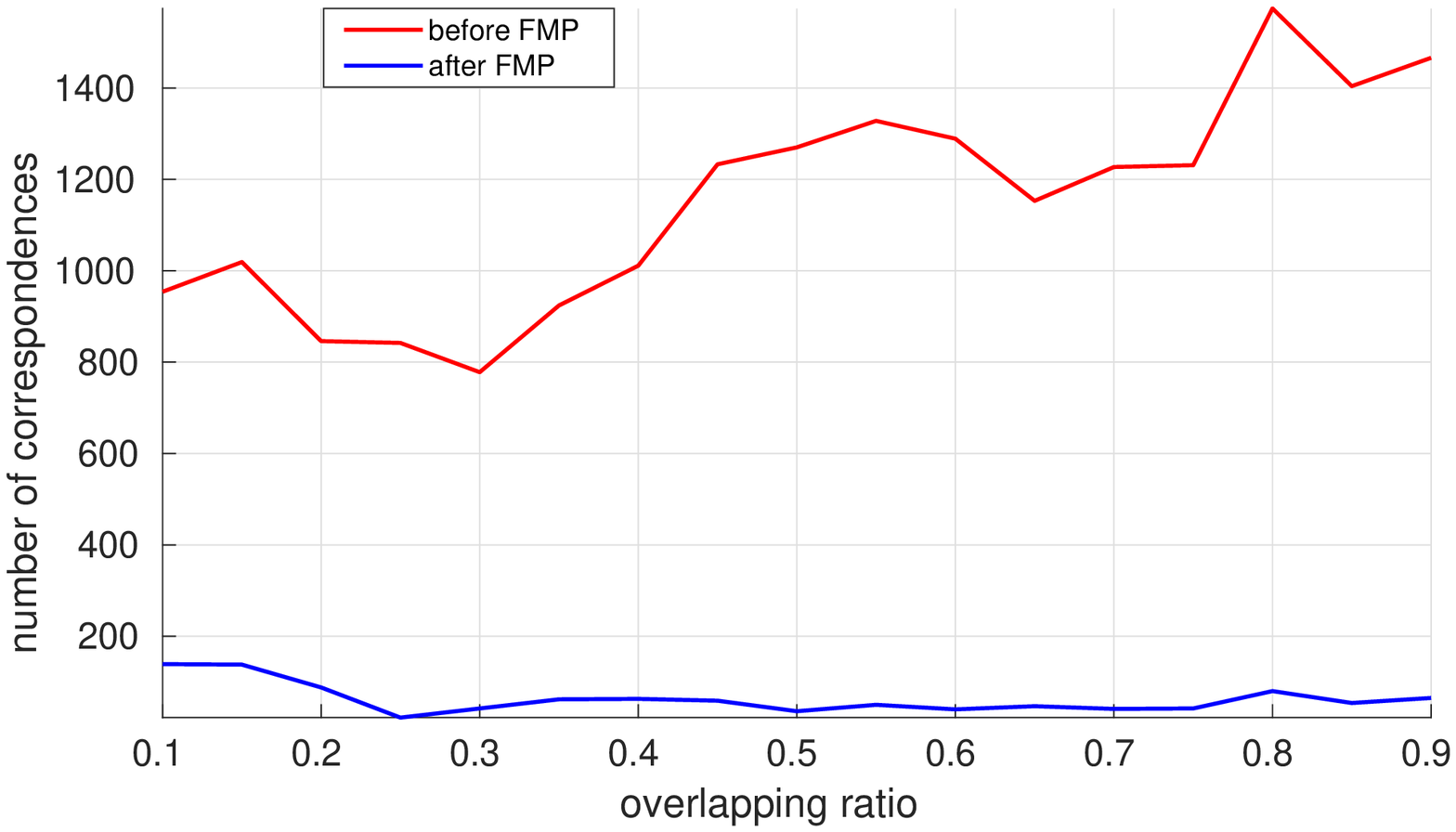}\label{subfig:GORESize_Bunny}}
\subfigure[Number of matches before and after FMP---\textit{Armadillo}.]{\includegraphics[width=0.7\columnwidth]{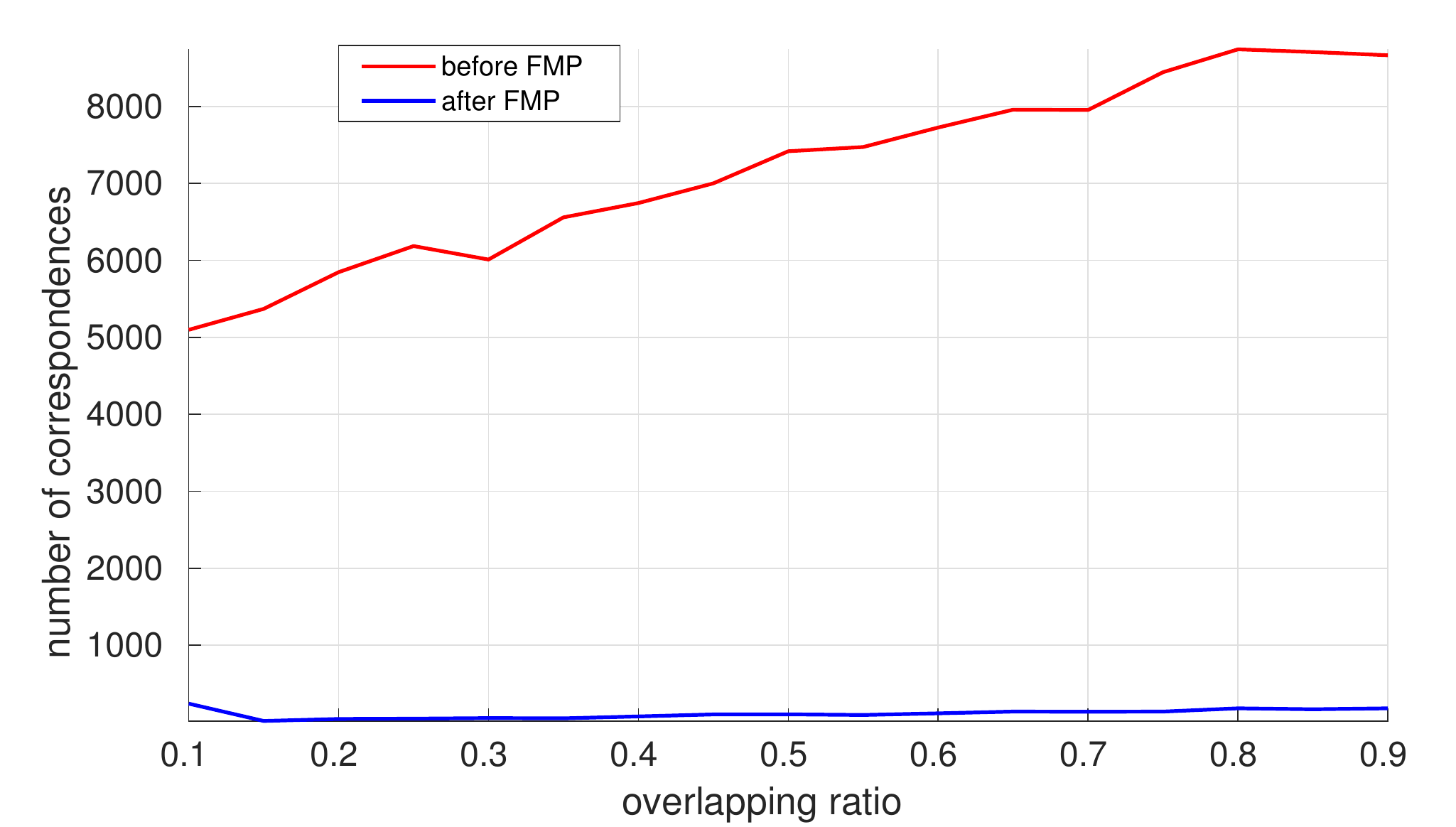}\label{subfig:GORESize_Armadillo}}
\subfigure[Statistics of $|\mathcal{C'}|$ in 100 runs of FMP with shuffled data processing order---\textit{Bunny}.]{\includegraphics[width=0.7\columnwidth]{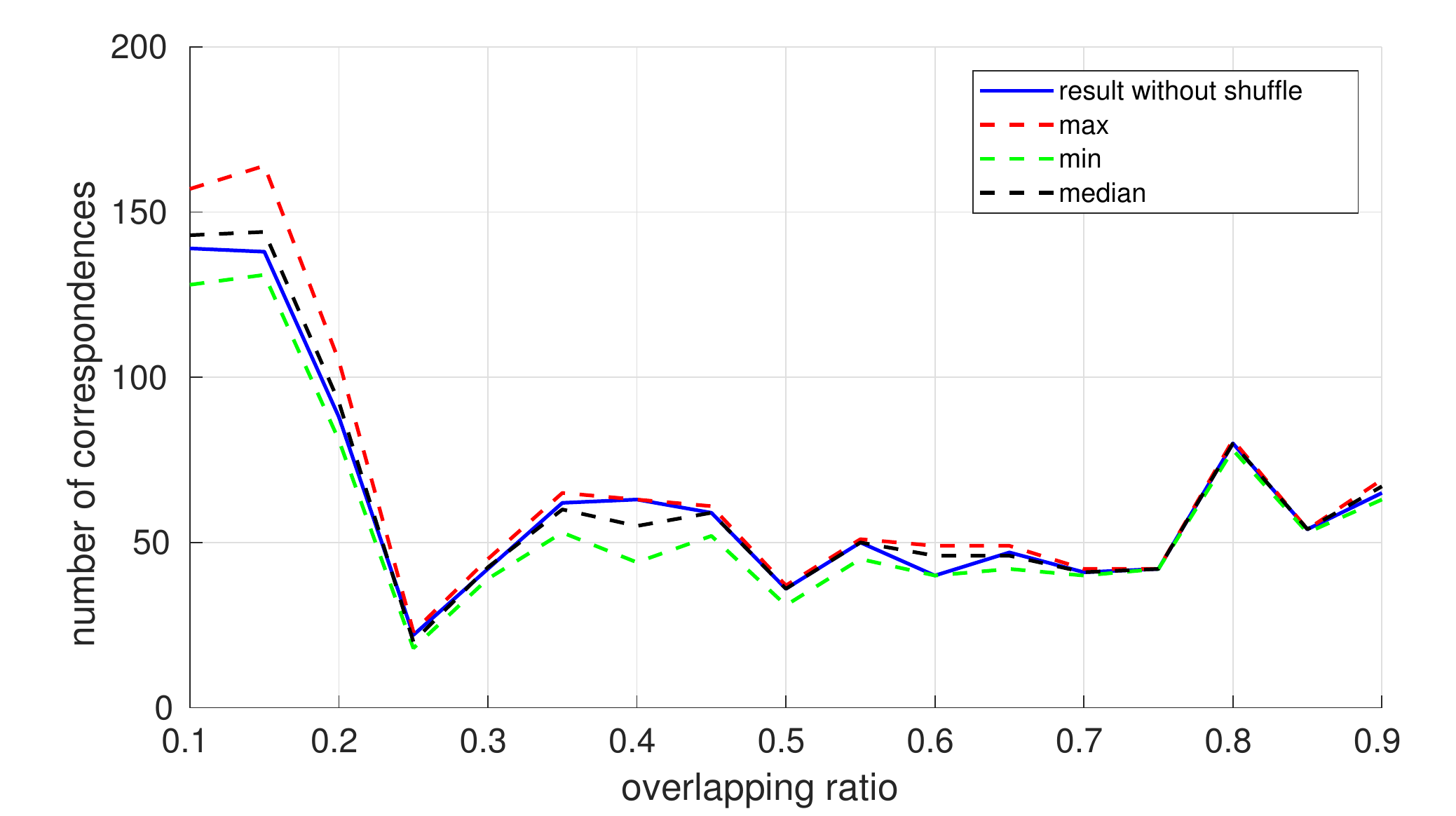}\label{subfig:GOREDep_Bunny}}
\subfigure[Statistics of $|\mathcal{C'}|$ in 100 runs of FMP with shuffled data processing order---\textit{Armadillo}.]{\includegraphics[width=0.7\columnwidth]{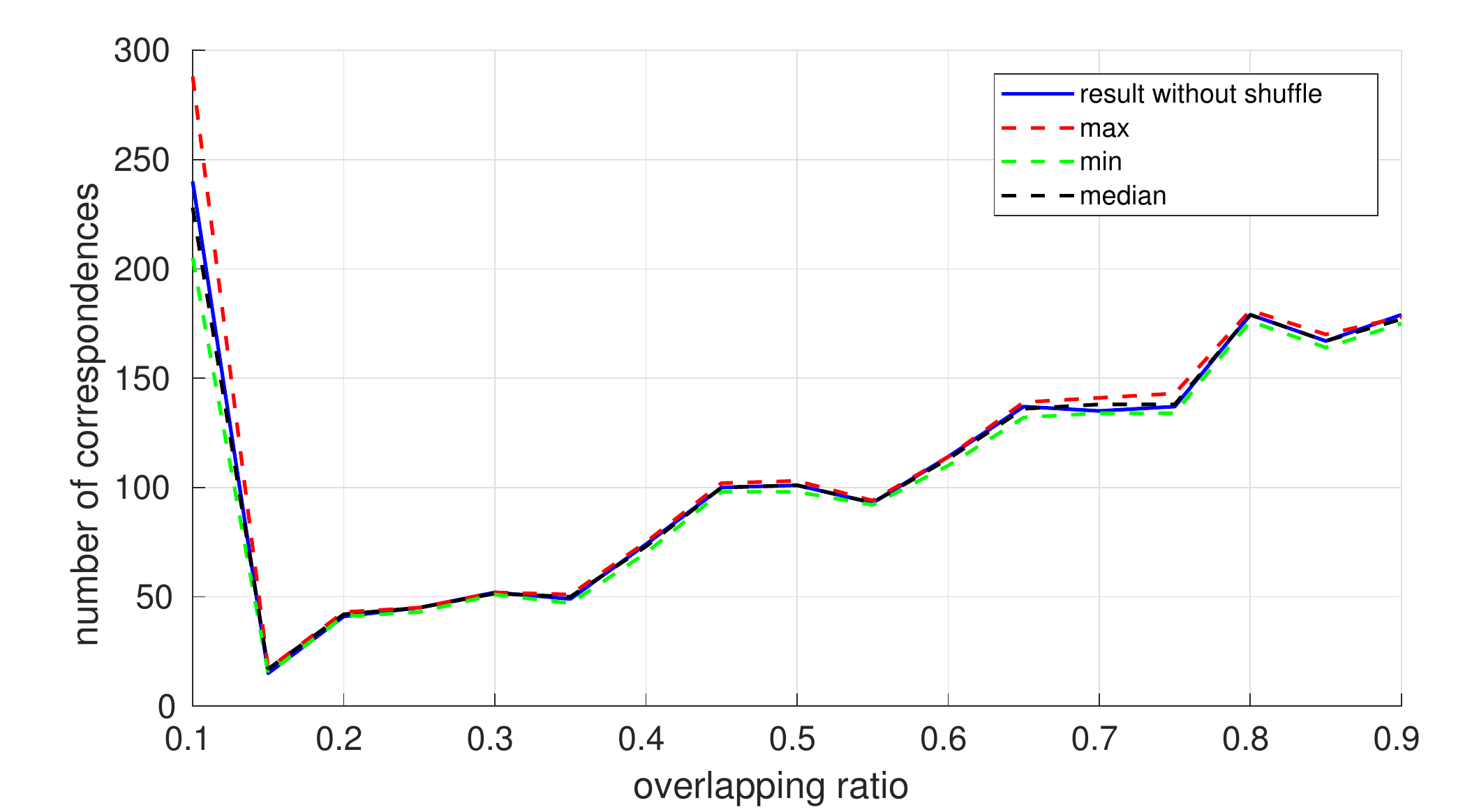}\label{subfig:GOREDep_Armadillo}}
\caption{The effect of FMP on data with varied
  overlapping ratios $\tau$. The runtime of only FMP was reported to show the portion of runtime in FMP + BnB spending on FMP.}\label{fig:1DRot}
\end{figure*}

\begin{table*}[t]\centering
\scriptsize{
\begin{tabular}{|c|c|c|c|c|c|c|c|c|c|c|}
 \hline
 Data & \multicolumn{5}{|c|}{\textit{Bunny}} & \multicolumn{5}{|c|}{\textit{Armadillo}} \\
 \hline
 $\tau$ & $|\bP|$ & $|\bQ|$ & $|\bP_{key}|$ & $|\bQ_{key}|$ & $|\cC|$ &  $|\bP|$ & $|\bQ|$ & $|\bP_{key}|$ & $|\bQ_{key}|$ & $|\cC|$\\
 \hline
0.10 & 15499 & 15499 & 233 & 140 & 954 & 77927 & 77927 & 1410 & 820 & 5097 \\
0.15 & 15918 & 15918 & 248 & 139 & 1019 & 80033 & 80033 & 1478 & 863 & 5369 \\
0.20 & 16360 & 16360 & 257 & 147 & 846 & 82256 & 82256 & 1554 & 922 & 5846 \\
0.25 & 16827 & 16827 & 266 & 153 & 842 & 84606 & 84606 & 1613 & 967 & 6185 \\
0.30 & 17322 & 17322 & 276 & 155 & 778 & 87095 & 87095 & 1676 & 981 & 6010 \\
0.35 & 17847 & 17847 & 290 & 160 & 924 & 89734 & 89734 & 1745 & 1045 & 6558 \\
0.40 & 18405 & 18405 & 298 & 173 & 1011 & 92538 & 92538 & 1809 & 1083 & 6744 \\
0.45 & 18999 & 18999 & 307 & 170 & 1233 & 95524 & 95524 & 1862 & 1136 & 7000 \\
0.50 & 19632 & 19632 & 321 & 175 & 1270 & 98708 & 98708 & 1923 & 1140 & 7417 \\
0.55 & 20309 & 20309 & 332 & 182 & 1328 & 102111 & 102111 & 1969 & 1198 & 7472 \\
0.60 & 21034 & 21034 & 340 & 190 & 1289 & 105758 & 105758 & 2033 & 1216 & 7725 \\
0.65 & 21813 & 21813 & 348 & 193 & 1153 & 109675 & 109675 & 2053 & 1284 & 7957 \\
0.70 & 22652 & 22652 & 380 & 200 & 1227 & 113894 & 113894 & 2114 & 1284 & 7953 \\
0.75 & 23558 & 23558 & 390 & 205 & 1231 & 118449 & 118449 & 2160 & 1360 & 8444 \\
0.80 & 24540 & 24540 & 400 & 221 & 1574 & 123385 & 123385 & 2253 & 1407 & 8741 \\
0.85 & 25607 & 25607 & 413 & 228 & 1404 & 128749 & 128749 & 2297 & 1441 & 8706 \\
0.90 & 26771 & 26771 & 426 & 227 & 1466 & 134602 & 134602 & 2352 & 1475 & 8664 \\
 \hline
\end{tabular}
}
\caption{Size of the controlled data. $|\bP|$ and $|\bQ|$: size of the
  input point clouds. $|\bP_{key}|$ and $|\bQ_{key}|$: number of
  keypoints.}\label{tab:dataSynth}
\end{table*}

\begin{figure*}[!htb]\centering
\subfigure[Rotation error---\textit{Bunny}.]{\includegraphics[width=0.7\columnwidth]{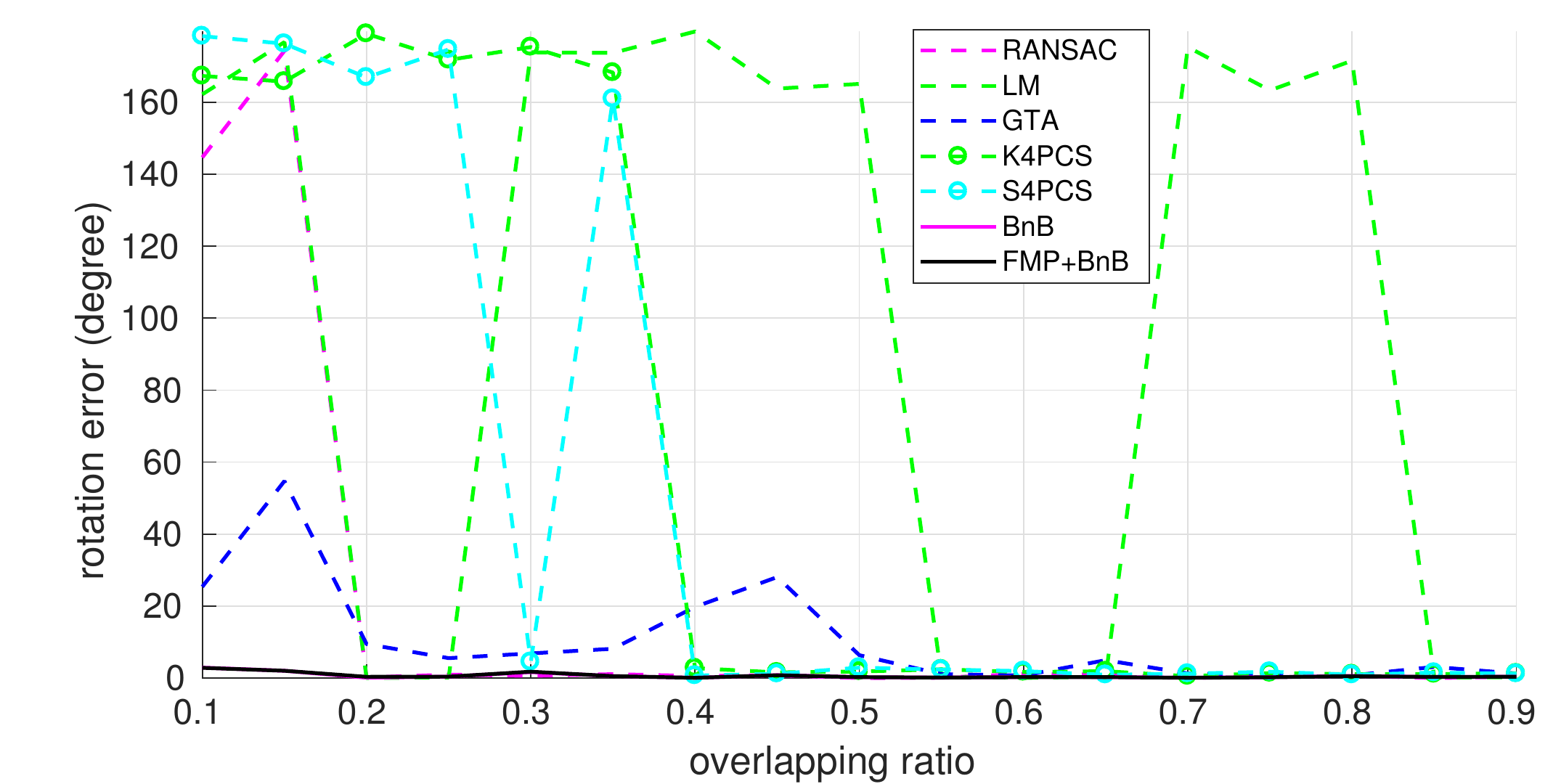}\label{subfig:Reg_Bunny_eAng}}
\subfigure[Rotation error---\textit{Armadillo}.]{\includegraphics[width=0.7\columnwidth]{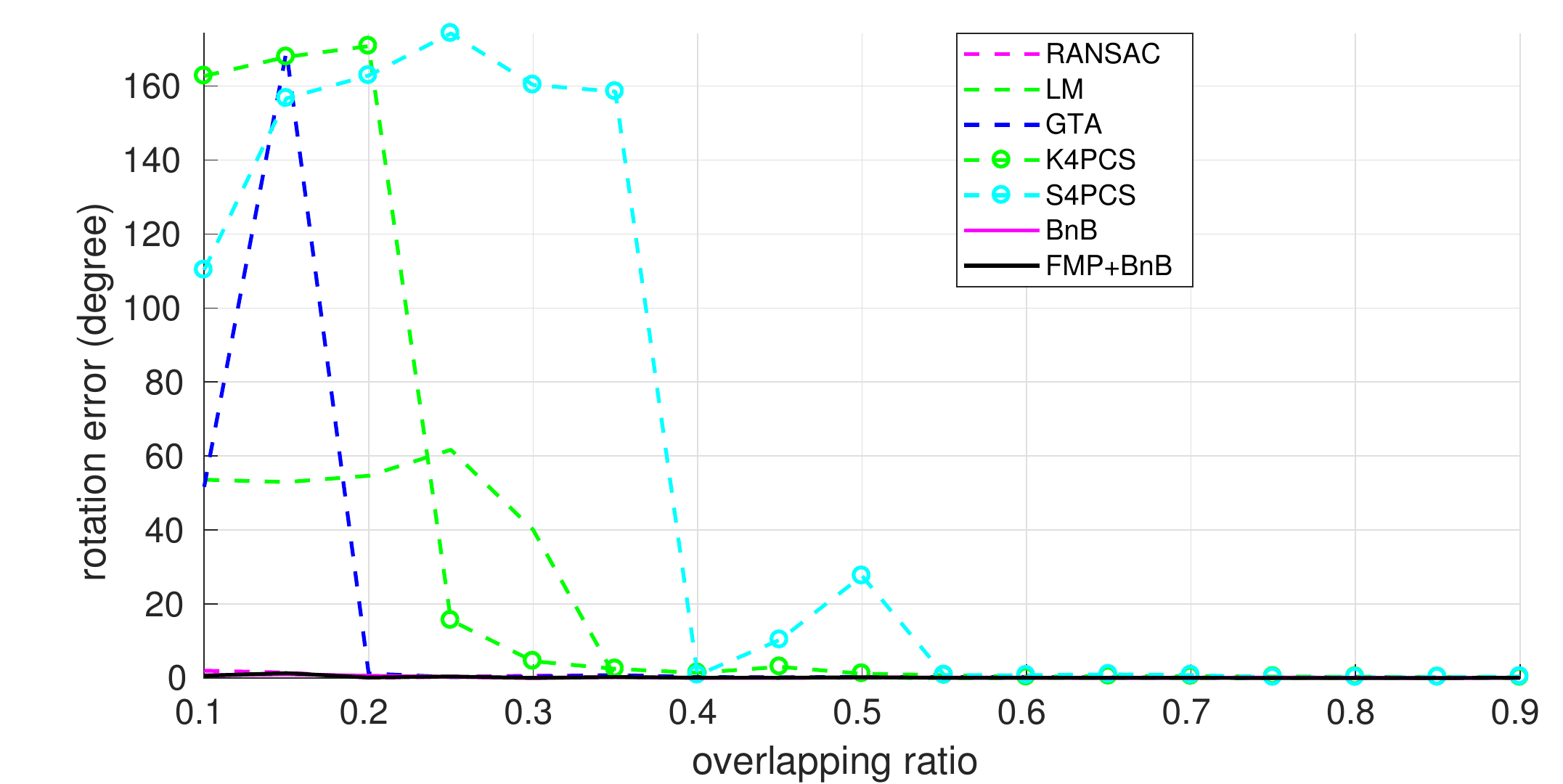}\label{subfig:Reg_Armadillo_eAng}}
\subfigure[Translation error---\textit{Bunny}.]{\includegraphics[width=0.7\columnwidth]{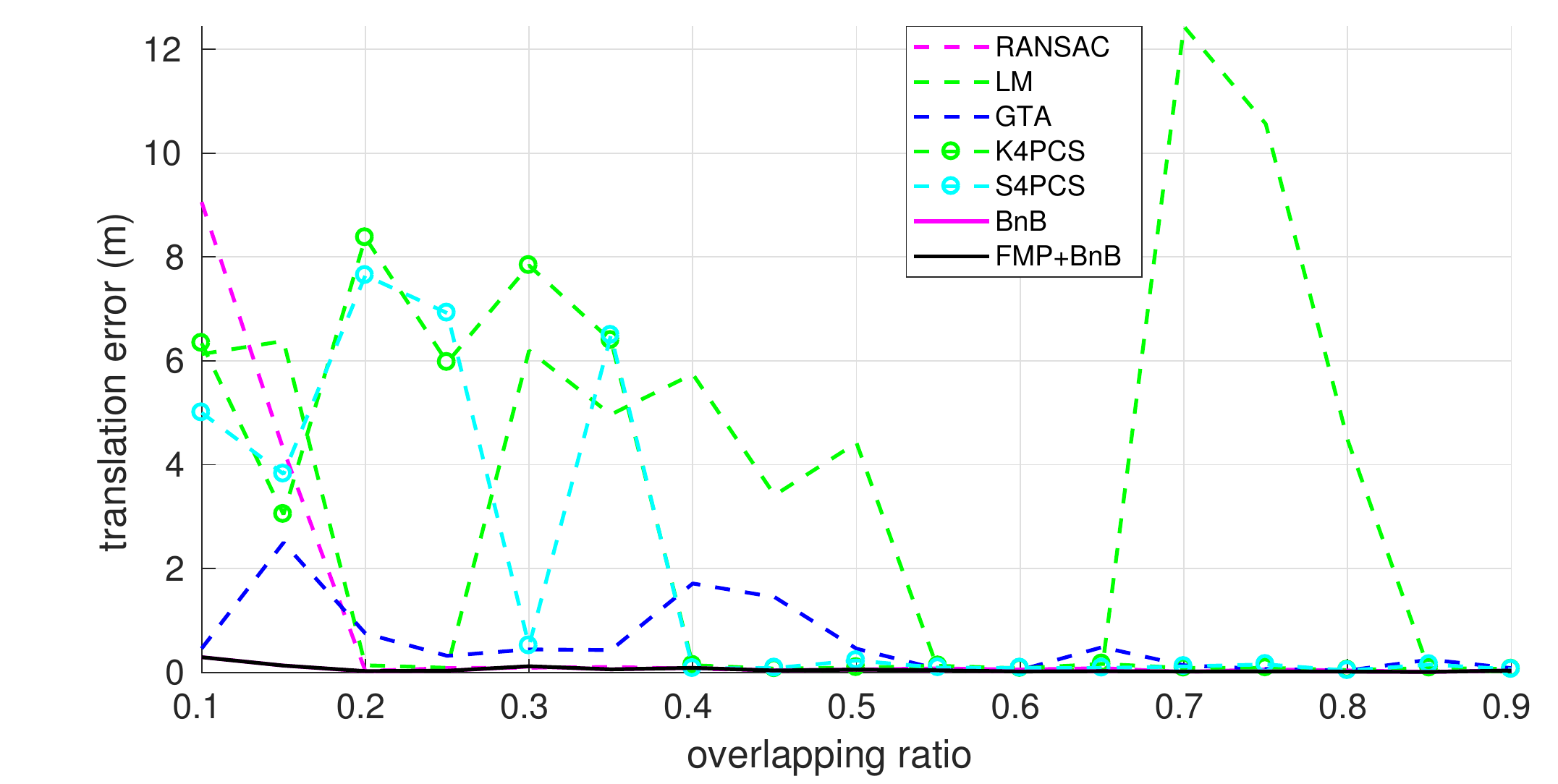}\label{subfig:Reg_Bunny_eTran}}
\subfigure[Translation error---\textit{Armadillo}.]{\includegraphics[width=0.7\columnwidth]{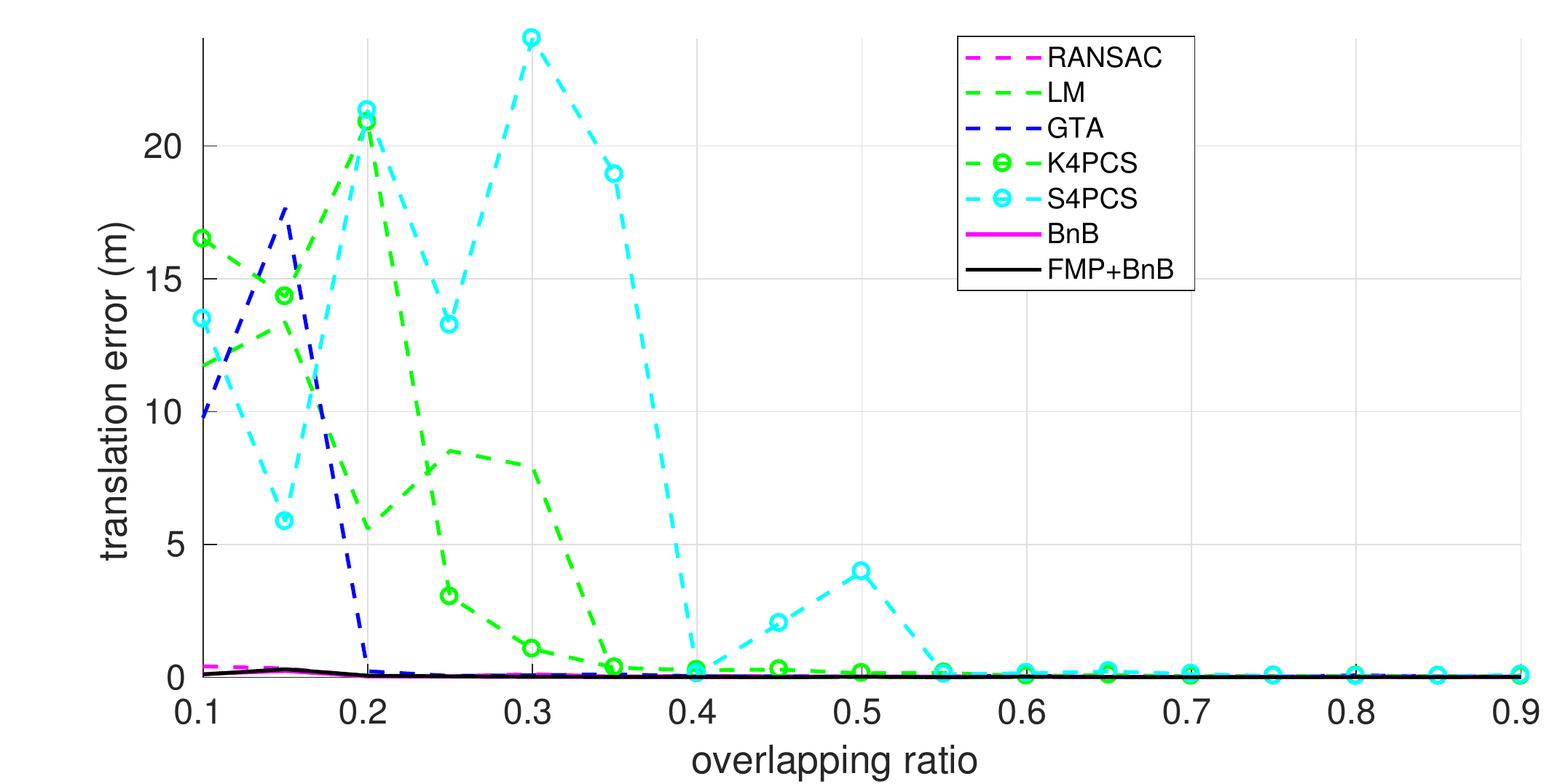}\label{subfig:Reg_Armadillo_eTran}}
\caption{The registration accuracy of all methods on data with varied overlapping ratios $\tau$.}\label{fig:Reg_Acc_Synth}
\end{figure*}

\subsection{Controlled data}\label{sec:expSynth}

To refer to our fast match pruning step
(Algorithm~\ref{alg:GORE}), we abbreviate it as FMP in the rest of
this paper. We first show the effect of FMP to the speed of our
method, by comparing
\begin{itemize}
\item FMP + BnB: Algorithm~\ref{alg:BnB1} with FMP for preprocessing;
\item BnB: Algorithm~\ref{alg:BnB1} without preprocessing;
\item FMP: Algorithm~\ref{alg:GORE};   
\end{itemize}
on data with varied overlap ratios $\tau$. Varying $\tau$ showed the
performance for different outlier rates. And to preserve the effect of
feature matching, we chose to manipulate $\tau$ instead of the outlier
rate directly. $\tau$ was controlled by sampling two subsets from a
complete point cloud (\textit{Bunny} and
\textit{Armadillo}\footnote{\url{http://graphics.stanford.edu/data/3Dscanrep/}}
in this experiment). Each subset contains $\floor{\frac{2}{4-2\tau}}$ of all
points, and the second subset is displaced relative to the first one
with a randomly generated 4DOF transformation. (3D translation and
rotation around the vertical). Moreover, the point clouds are rescaled
to have an average point-to-point distance of 0.05 m, and contaminated
with uniform random noise of magnitude $[0\hdots 0.05]$ m.

Given two point clouds, the initial match set $\cC$ was
generated (here and in all subsequent experiments) by
\begin{enumerate}
\item Voxel Grid~\citep{lee2001data} down-sampling and
  ISS~\citep{zhong2009intrinsic} keypoint extraction;
\item FPFH feature~\citep{rusu2009fast} computation and matching on
  keypoints. $\bp_i$ and $\bq_i$ were selected into $\cC$ if their
  FPFH features are one of the $\lambda$ nearest neighbours \emph{to each
    other}. Empirically, $\lambda$ needs to be a bit larger than 1 to
  generate enough inliers. We set $\lambda=10$ in all experiments.

\end{enumerate}
The inlier threshold $\epsilon$ was set to 0.05 m according to the
noise level.

Figure~\ref{fig:1DRot} reports the runtime of the three algorithms and
the number of matches before and after FMP, with $\tau$ varied
from 0.1 to 0.9. See Table~\ref{tab:dataSynth} for the input size. As
shown in Figure~\ref{subfig:1DRot_Bunny}
and~\ref{subfig:1DRot_Armadillo}, due to the extremely high outlier
rate, BnB was much slower when $\tau$ is small, whereas FMP + BnB
remained efficient for all $\tau$. This significant acceleration came
from the drastically reduced input size (more than 90\% of the
outliers were pruned) after executing FMP
(Figure~\ref{subfig:GORESize_Bunny}
and~\ref{subfig:GORESize_Armadillo}), and the extremely low overhead
of FMP. 

Note that the data storing order of $\mathcal{C}$ was used as the data processing order (the order of $k$ in the for-loop of Algorithm{~\ref{alg:GORE}}) in FMP for all registration tasks in this paper. Though theoretically the data processing order of FMP does affect the size $|\mathcal{C}'|$ of the pruned match set, in practice, this effect is minor. To show the stability of FMP under different data processing orders, we executed FMP 100 times given the same input match set, but with randomly shuffled data processing order in each FMP execution. Figures{~\ref{subfig:GOREDep_Bunny}} and{~\ref{subfig:GOREDep_Armadillo}} report the median, minimum and maximum value of $|\mathcal{C'}|$ in 100 runs, and the value of $|\mathcal{C'}|$ using the original data storing order as the data processing order. As can be seen, the value $|\mathcal{C'}|$ is stable across all instances.

\begin{figure*}[t]\centering
\subfigure[Runtime for optimization---\textit{Bunny}.]{\includegraphics[width=0.7\columnwidth]{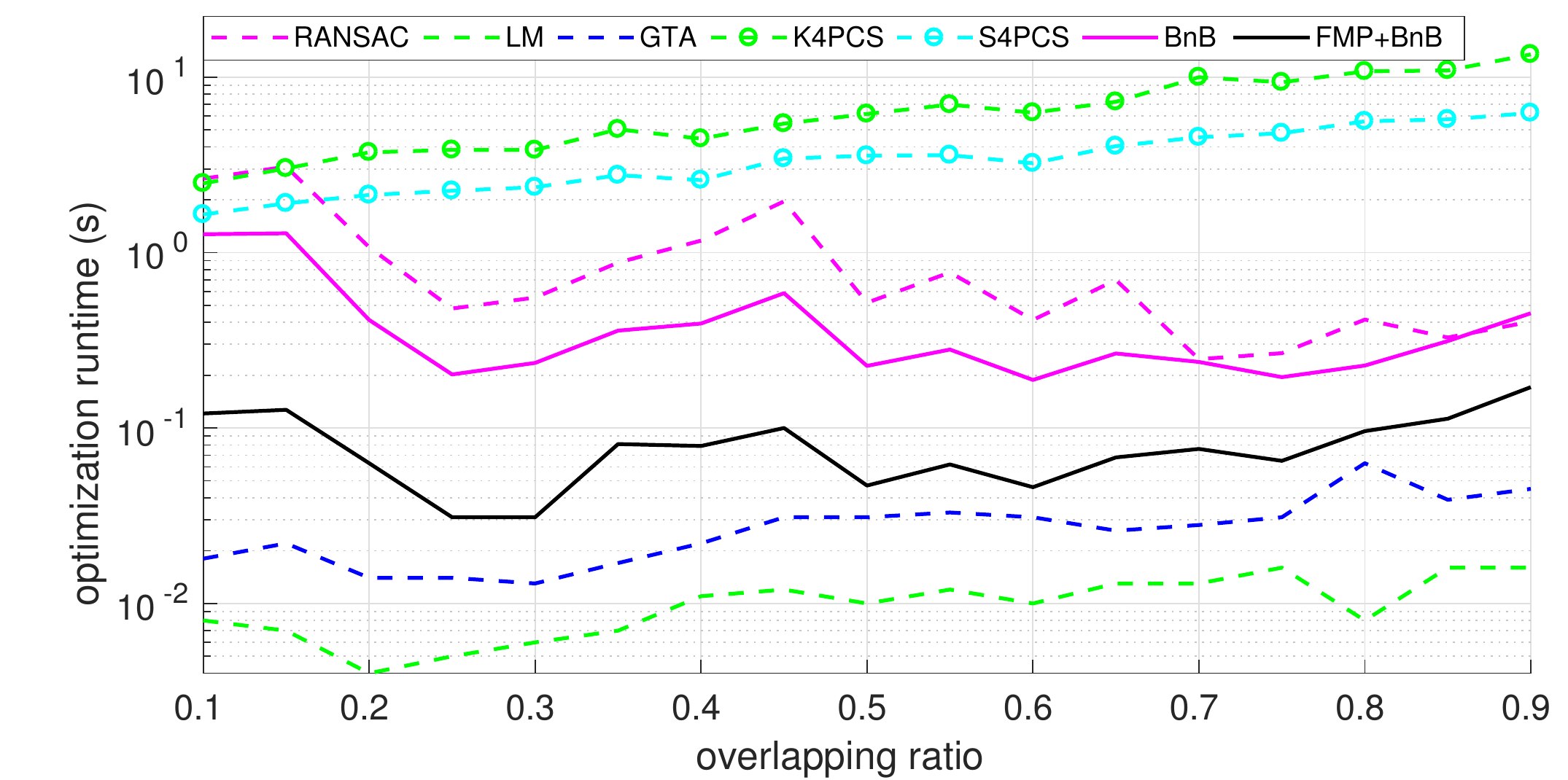}\label{subfig:Reg_Bunny_regTime}}
\subfigure[Runtime for optimization---\textit{Armadillo}.]{\includegraphics[width=0.7\columnwidth]{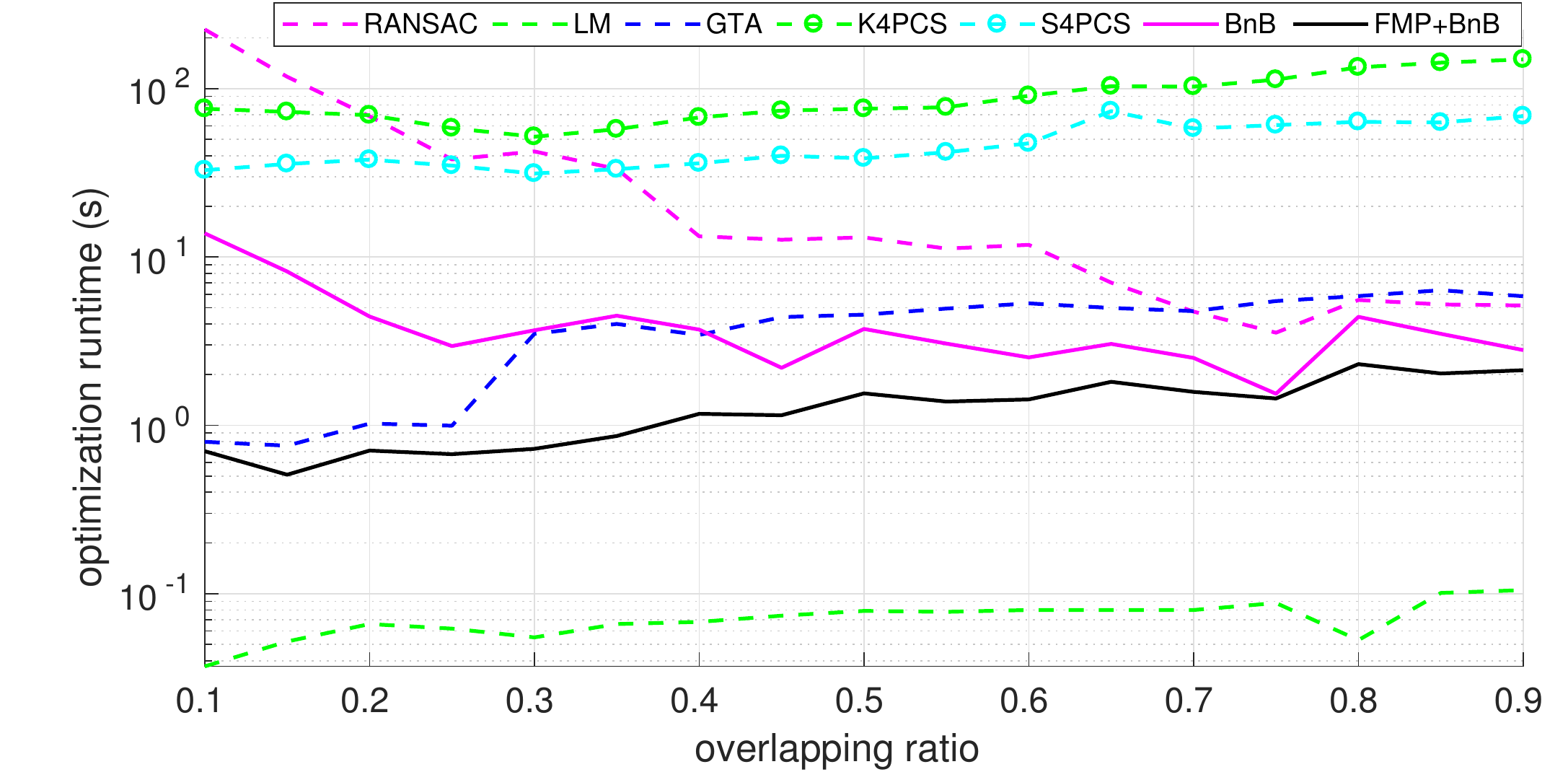}\label{subfig:Reg_Armadillo_regTime}}
\subfigure[Total runtime---\textit{Bunny}.]{\includegraphics[width=0.7\columnwidth]{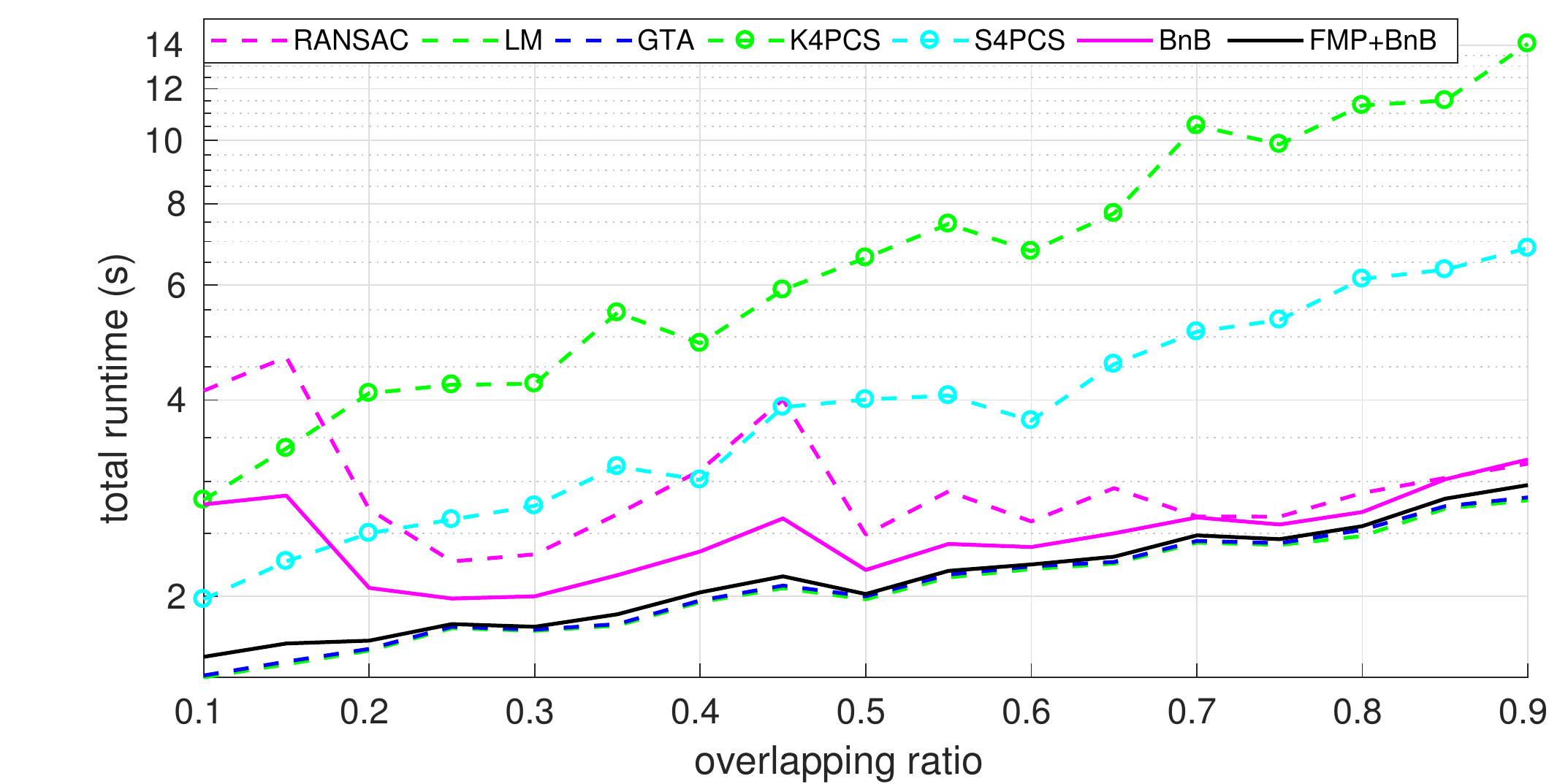}\label{subfig:Reg_Bunny_regTimeWithPrep}}
\subfigure[Total runtime---\textit{Armadillo}.]{\includegraphics[width=0.7\columnwidth]{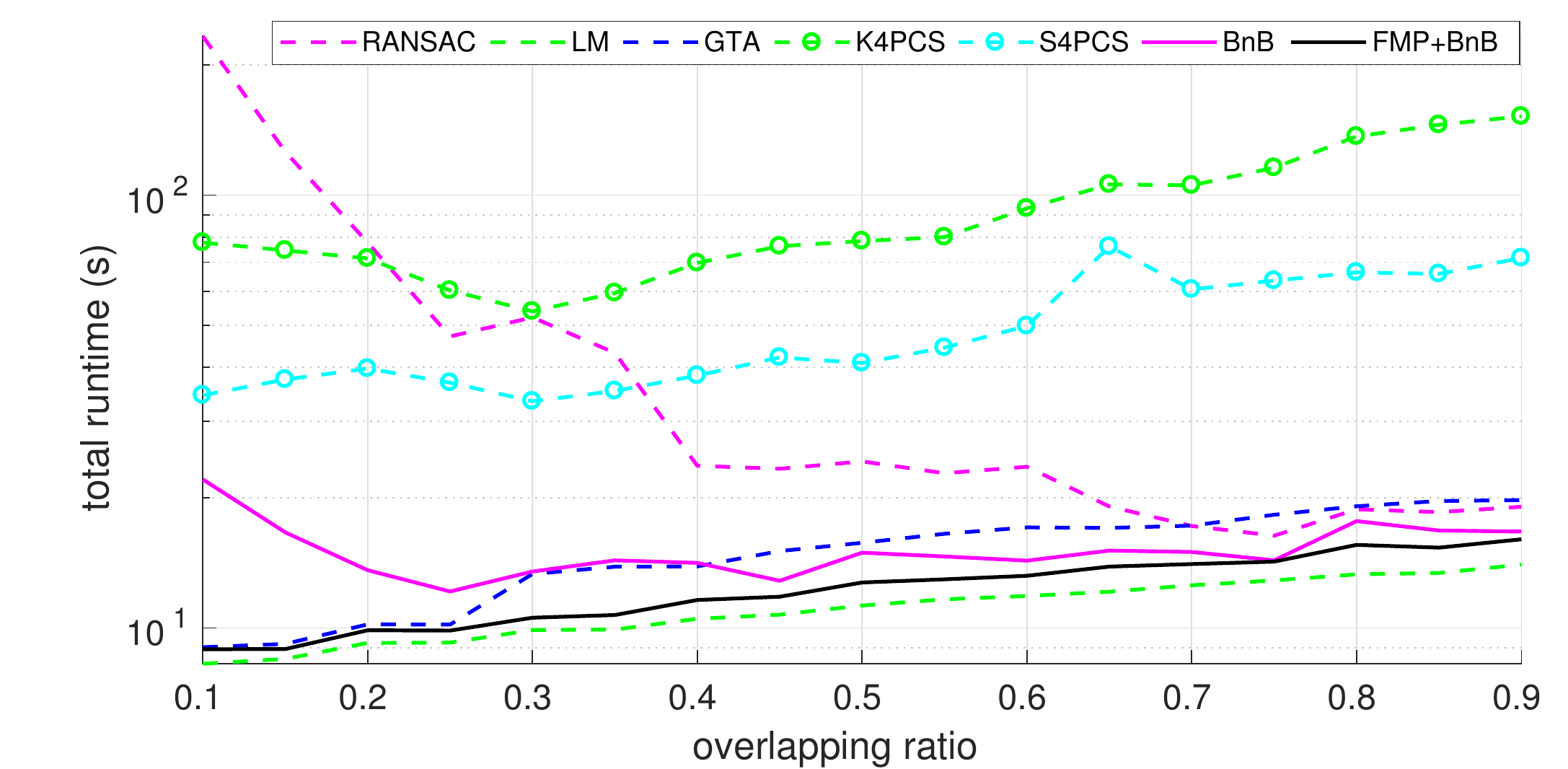}\label{subfig:Reg_Armadillo_regTimeWithPrep}}
\caption{The log scaled runtime of all methods with (up) and without (bottom) input genreration (only generating keypoints for K4PCS and S4PCS) on controlled data.}\label{fig:Reg_Time_Synth}
\end{figure*}

\begin{figure*}[!htb]\centering
\subfigure[\textit{Bunny}.]{\includegraphics[width=0.6\columnwidth]{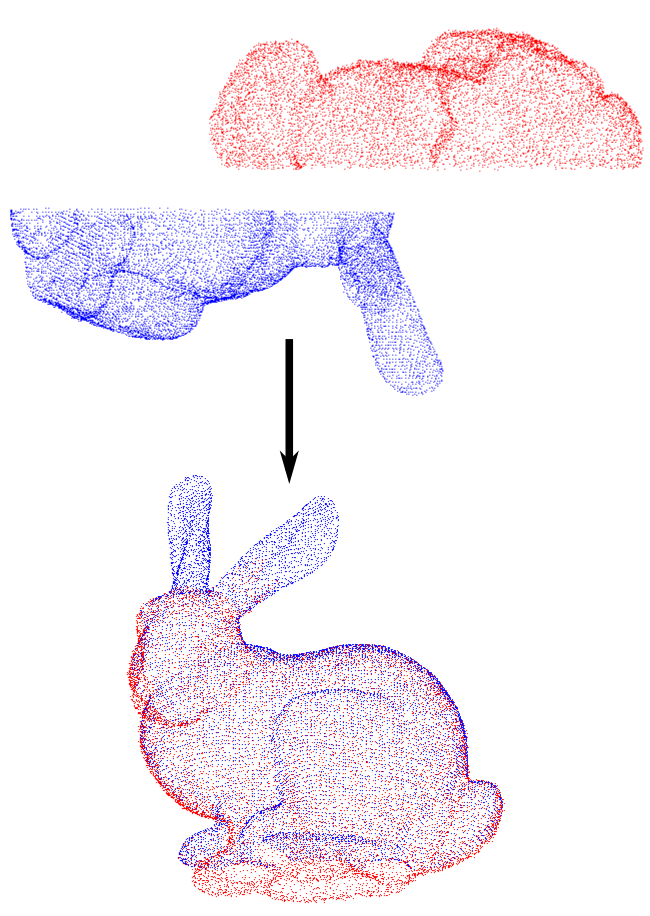}\label{subfig:Reg_Bunny_Vis}}
\subfigure[\textit{Armadillo}.]{\includegraphics[width=0.6\columnwidth]{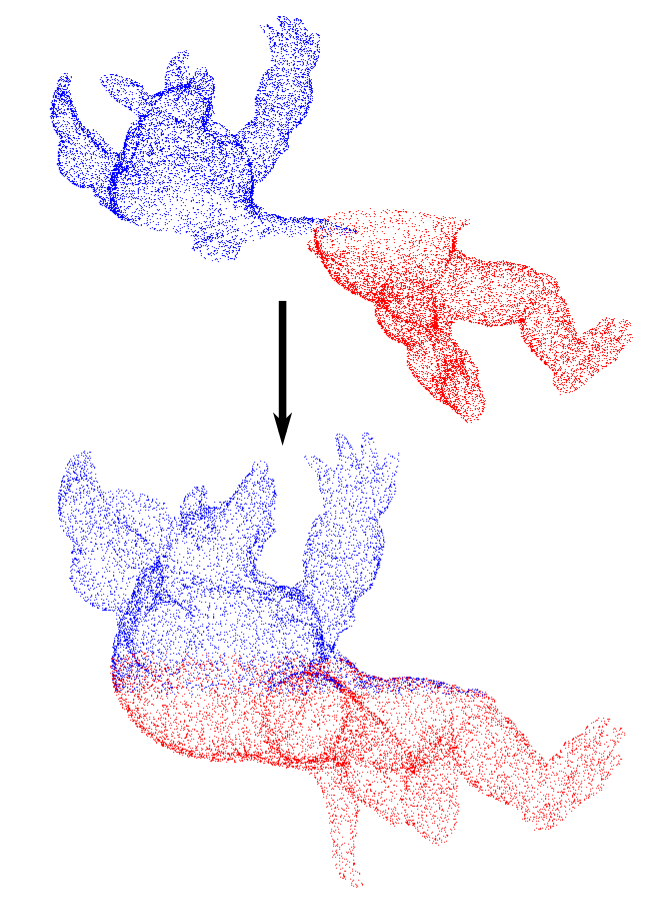}\label{subfig:Reg_Armadillo_Vis}}
\subfigure[\textit{Dragon}.]{\includegraphics[width=0.6\columnwidth]{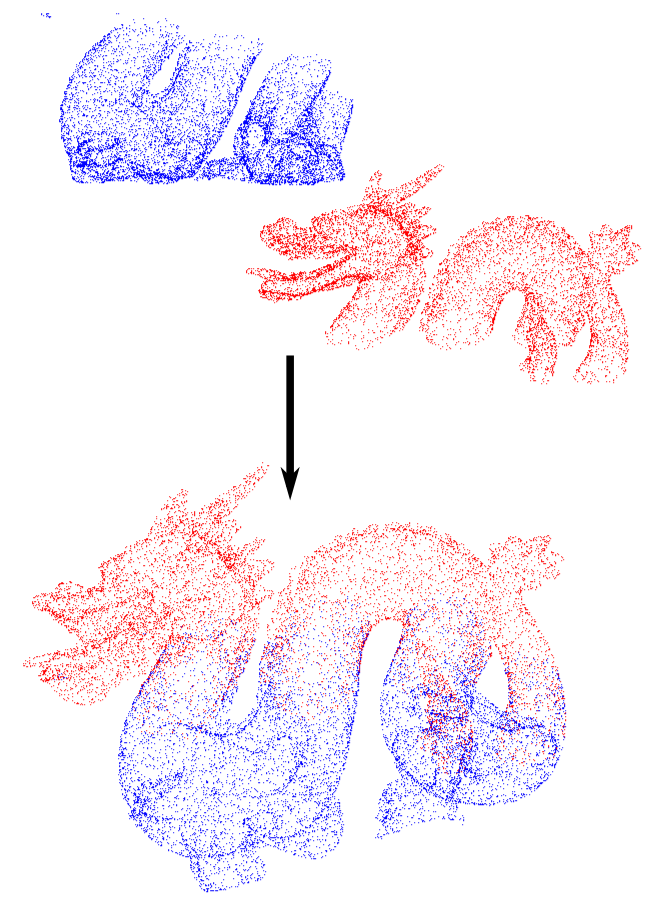}\label{subfig:Reg_Dragon_Vis}}
\caption{Registered controlled data ($\tau = 0.1$). 10k points are shown for each point cloud.}\label{fig:Reg_Synth_Vis}
\end{figure*}

To show the advantage of our method, we also compared it against the
following state-of-the-art approaches, on the same set of data.

\begin{itemize}
\item 4DOF RANSAC~\citep{fischler1981random}, using 2-point samples for
  4DOF pose estimation,and with the probability of finding a
  valid sample set to 0.99\footnote{C++ code in
    \url{https://bitbucket.org/Zhipeng_Cai/isprsjdemo/src/default/}.};
\item 4DOF version of the lifting method (LM)~\citep{zhou2016fast}, a
  match-based robust optimization approach\footnote{C++ implementation based on the code from
    \url{http://vladlen.info/publications/fast-global-registration/}.}. The
  annealing rate was tuned to 1.1 for the best performance;
\item The Game-Theory approach (GTA)~\citep{albarelli2010game}, a fast
  outlier removal method\footnote{C++ implementation based on the
    code from \url{http://www.isi.imi.i.u-tokyo.ac.jp/~rodola/sw.html}
    and
    \url{http://vision.in.tum.de/_media/spezial/bib/cvpr12-code.zip}.};
\item Super 4PCS (S4PCS)~\citep{mellado2014super}, a fast
  4PCS~\citep{aiger20084} variant\footnote{C++ code from
    \url{http://geometry.cs.ucl.ac.uk/projects/2014/super4PCS/}}.
\item Keypoint-based 4PCS (K4PCS)~\citep{theiler2014keypoint}, which
  applies 4PCS to keypoints\footnote{C++ code from PCL:
    \url{http://pointclouds.org/}}.
\end{itemize}
The first three approaches are match-based and the last two
operates on raw point sets (ISS keypoints were used here). The 4DOF
version of RANSAC and LM were used for a fair comparison, since they
had similar accuracy but were much faster than their 6DOF
counterparts.
We note that, when working with levelled point clouds, the translation
alone must already align the $z$-coordinates of two points up to the
inlier threshold $\epsilon$. This can be checked before
sampling/applying the rotation. Where applicable, we use this trick to
save computations (for all methods).
For GTA as well as 4PCS variants, the original versions for 6DOF had
to be used, since it is not obvious how to constrain the underlying
algorithms to 4DOF.

Figure~\ref{fig:Reg_Acc_Synth} shows the accuracy of all methods,
which was measured as the difference between the estimated
transformation and the known ground truth. As expected, BnB and
FMP + BnB returned high quality solutions for all setups. In contrast,
due to the lack of optimality guarantees, other methods performed
badly when $\tau$ was small. Meanwhile,
Figure~\ref{fig:Reg_Time_Synth} shows both the runtime including and
not including the input generation (keypoint extraction and/or feature
matching). FMP + BnB was faster than most of its competitors. Note that
most of the total runtime was spent on input generation.  Other than
sometimes claimed, exact optimization is not necessarily slow and does
in fact not create a computational bottleneck for registration.
Figure~\ref{fig:Reg_Synth_Vis} shows some visual examples of point
clouds aligned with our method.

\begin{figure*}[!htb]\centering
\subfigure[Rotation error---\textit{Arch}.]{\includegraphics[width=0.8\columnwidth]{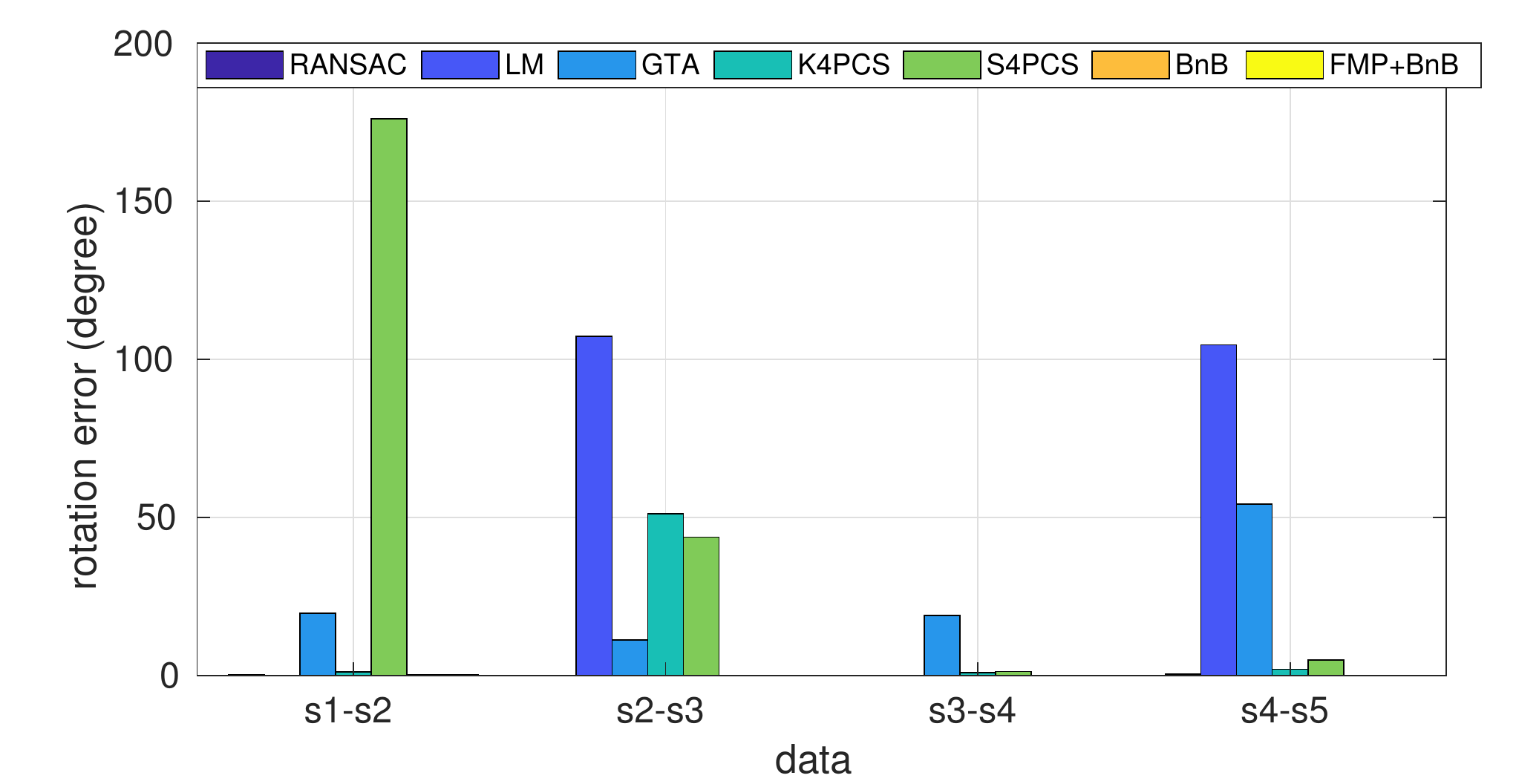}\label{subfig:Reg_Arch_eAng}}
\subfigure[Rotation error---\textit{Facility}.]{\includegraphics[width=0.8\columnwidth]{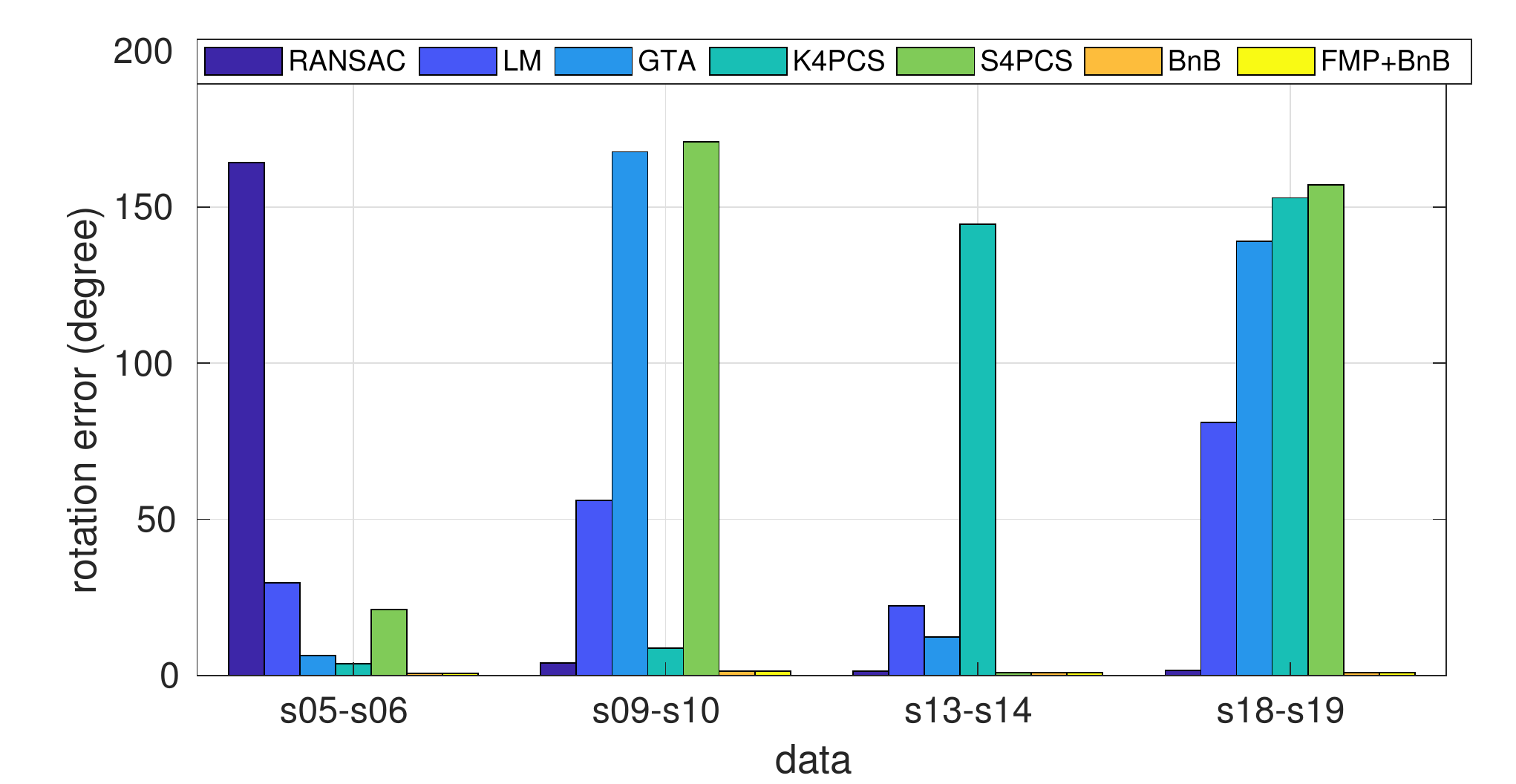}\label{subfig:Reg_Facility_eAng}}
\subfigure[Translation error---\textit{Arch}.]{\includegraphics[width=0.8\columnwidth]{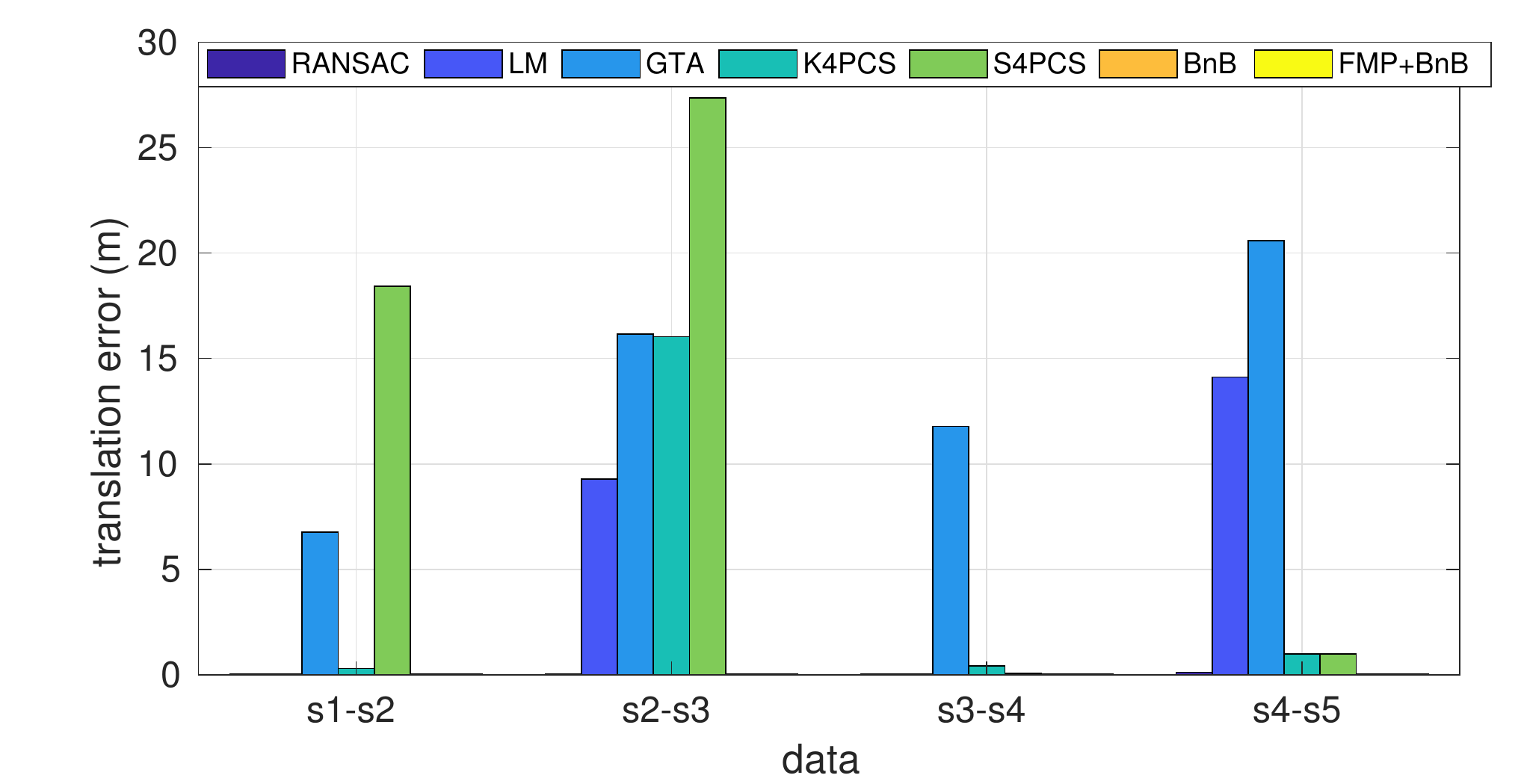}\label{subfig:Reg_Arch_eTran}}
\subfigure[Translation error---\textit{Facility}.]{\includegraphics[width=0.8\columnwidth]{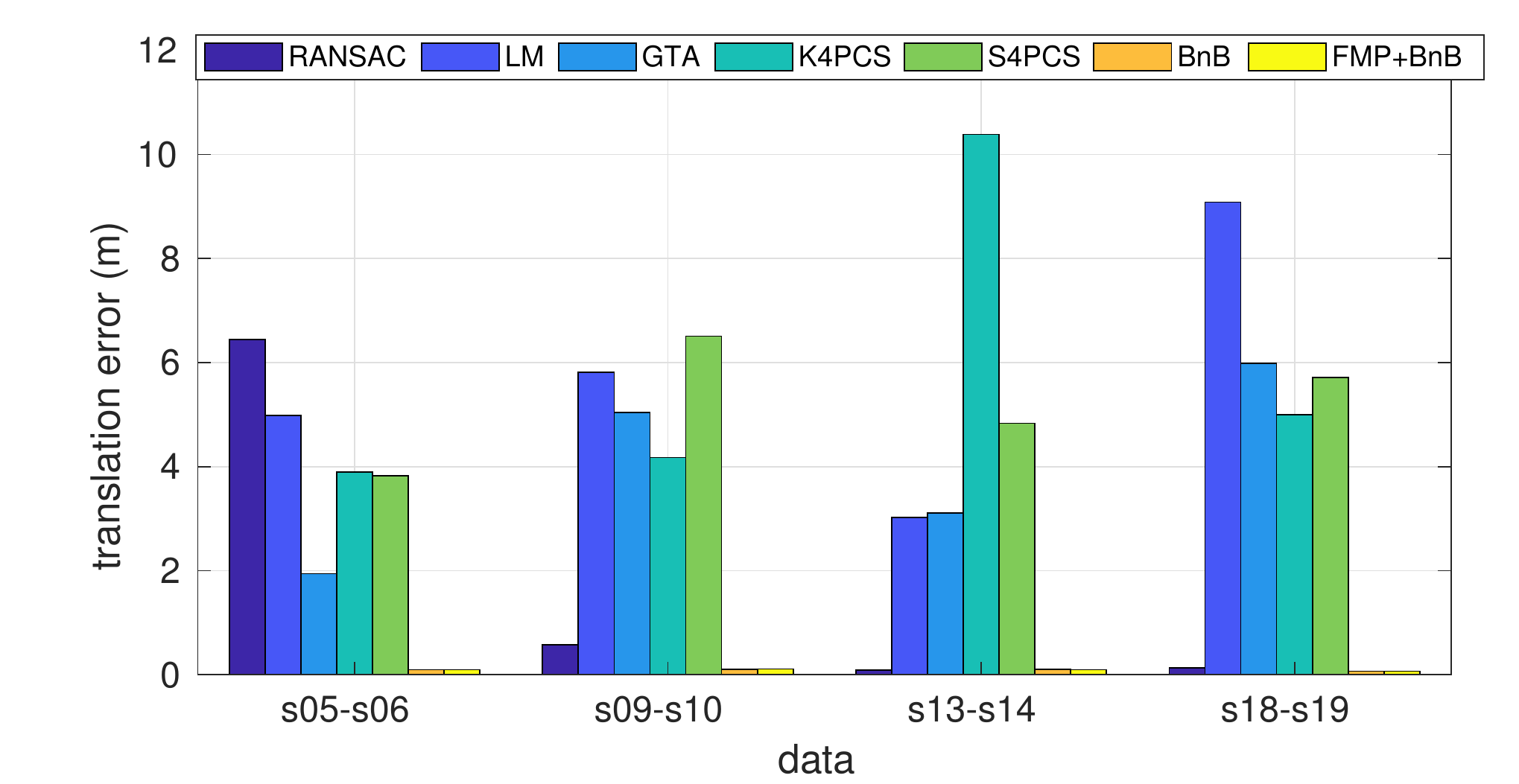}\label{subfig:Reg_Facility_eTran}}
\caption{The accuracy of all registration methods on real-world data.}\label{fig:Reg_Acc_Real}
\end{figure*}

\begin{figure*}[!htb]\centering
\subfigure[Runtime for optimization---\textit{Arch}.]{\includegraphics[width=0.8\columnwidth]{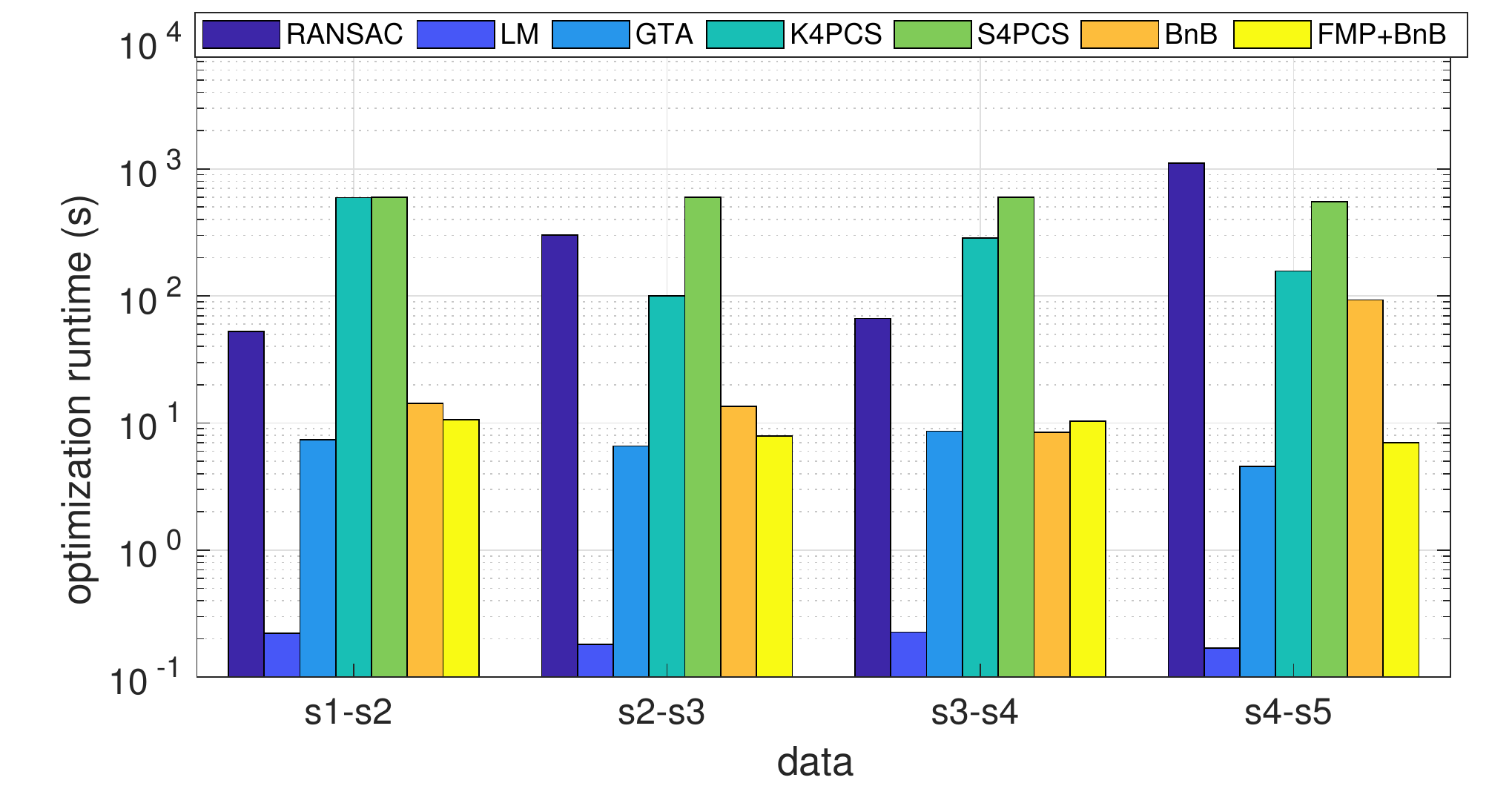}\label{subfig:Reg_Arch_regTime}}
\subfigure[Runtime for optimization---\textit{Facility}.]{\includegraphics[width=0.8\columnwidth]{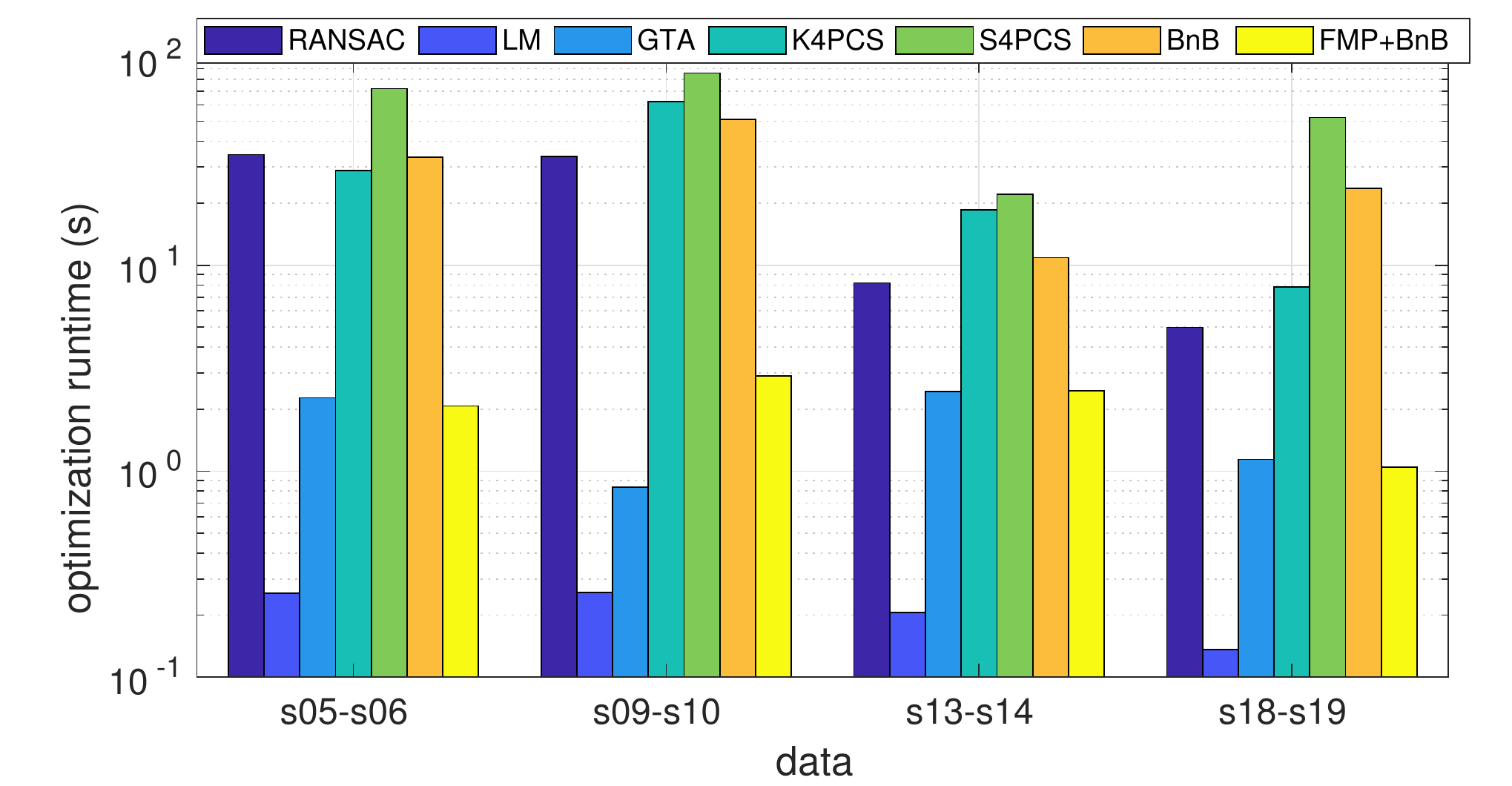}\label{subfig:Reg_Facility_regTime}}
\subfigure[Total runtime---\textit{Arch}.]{\includegraphics[width=0.8\columnwidth]{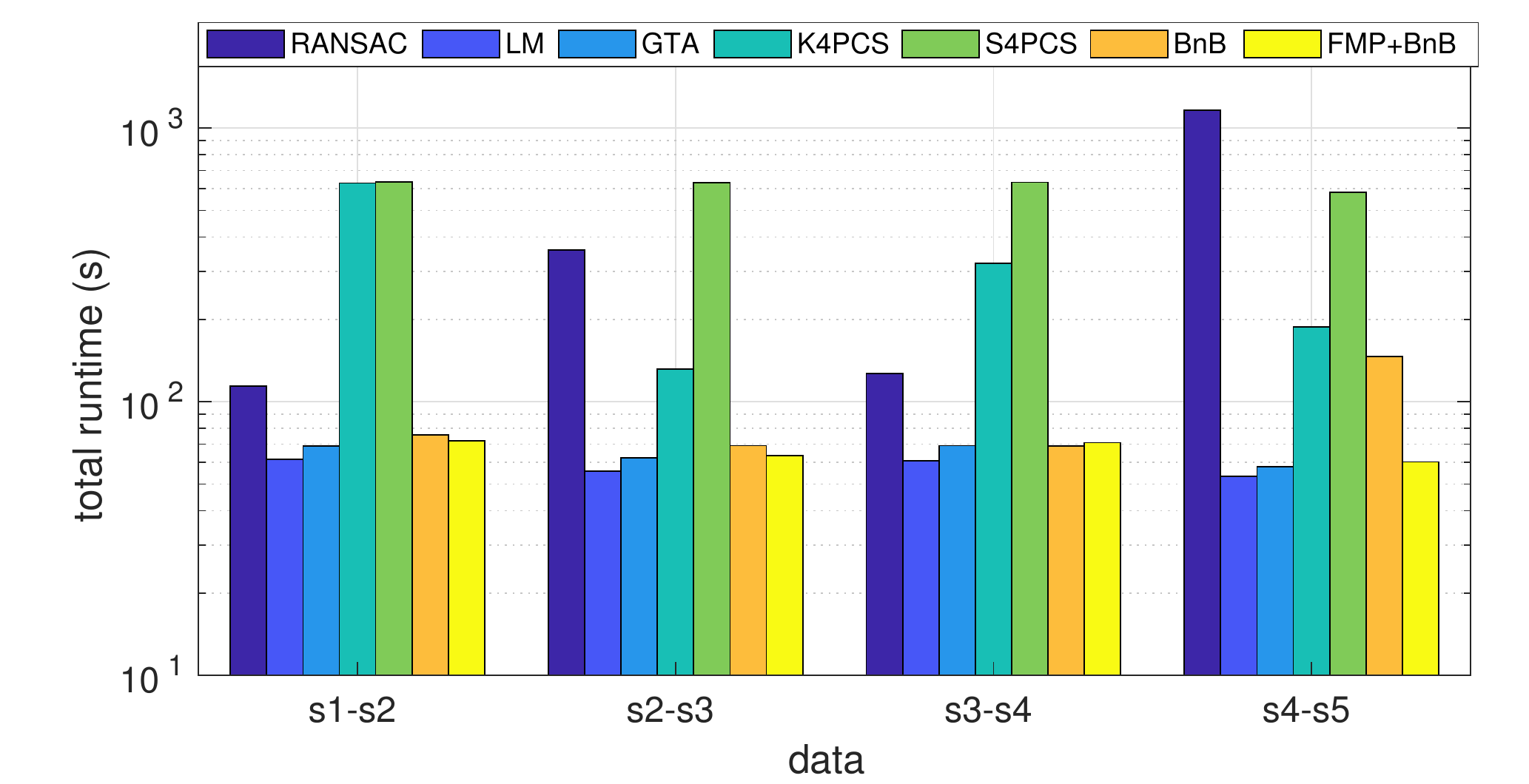}\label{subfig:Reg_Arch_regTimeWithPrep}}
\subfigure[Total runtime---\textit{Facility}.]{\includegraphics[width=0.8\columnwidth]{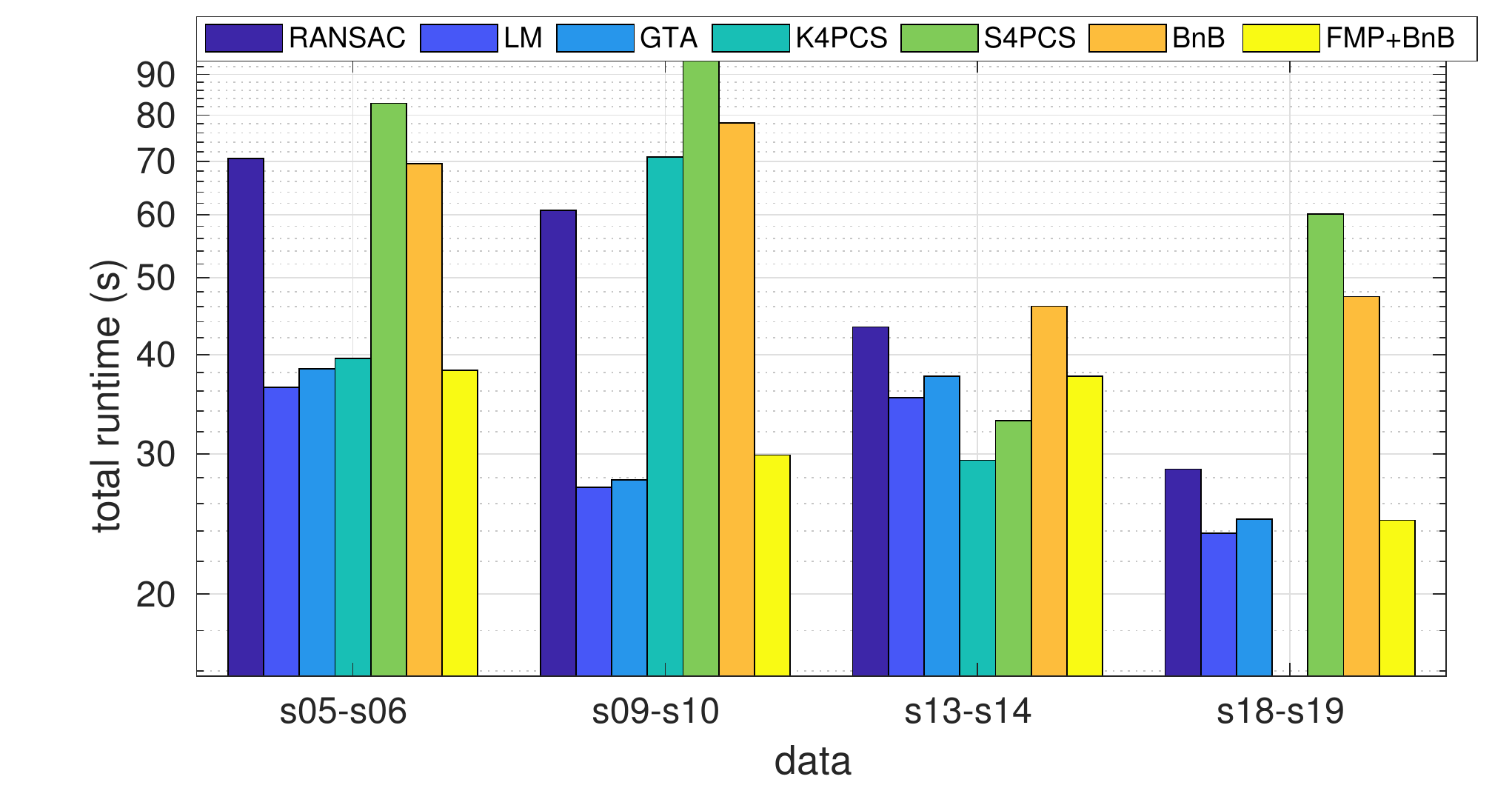}\label{subfig:Reg_Facility_regTimeWithPrep}}
\caption{The log scaled runtime of all methods with (up) and without (bottom) input genreration (only generating keypoints for K4PCS and S4PCS) on real-world data.}\label{fig:Reg_Time_Real}
\end{figure*}

\subsection{Challenging real-world data}\label{sec:expReal}

\begin{table}[t]
\centering
\scriptsize{
\begin{tabular}{ |c|c|c|c|c|c|  }
 \hline
 \multicolumn{6}{|c|}{\textit{Arch}} \\
 \hline
 Data & $|\bP|$ & $|\bQ|$ & $|\bP_{key}|$ & $|\bQ_{key}|$ & $|\cC|$  \\
 \hline
s01-s02 & 23561732 & 30898999 & 7906 & 4783 & 19879 \\
s02-s03 & 30898999 & 25249235 & 4783 & 7147 & 19344 \\
s03-s04 & 25249235 & 29448437 & 7147 & 5337 & 22213 \\
s04-s05 & 29448437 & 27955953 & 5337 & 4676 & 15529 \\
 \hline
\end{tabular}

\begin{tabular}{ |c|c|c|c|c|c|  }
 \hline
 \multicolumn{6}{|c|}{\textit{Facility}} \\
 \hline
 Data & $|\bP|$ & $|\bQ|$ & $|\bP_{key}|$ & $|\bQ_{key}|$ & $|\cC|$  \\
 \hline
s05-s06 & 10763194 & 10675580 & 2418 & 2727 & 10679 \\
s09-s10 & 10745525 & 10627814 & 2960 & 1327 & 7037 \\
s12-s13 & 10711291 & 10441772 & 1227 & 2247 & 6001 \\
s18-s19 & 10541274 & 10602884 & 1535 & 2208 & 7260 \\
 \hline
\end{tabular}
}
\caption{Size of the real-world data. $|\bP_{key}|$ and $|\bQ_{key}|$:
  number of keypoints.}\label{tab:dataReal}
\end{table}

\begin{figure*}[!htb]\centering
\subfigure[s3-s4---\textit{Arch}.]{\includegraphics[width=0.8\columnwidth]{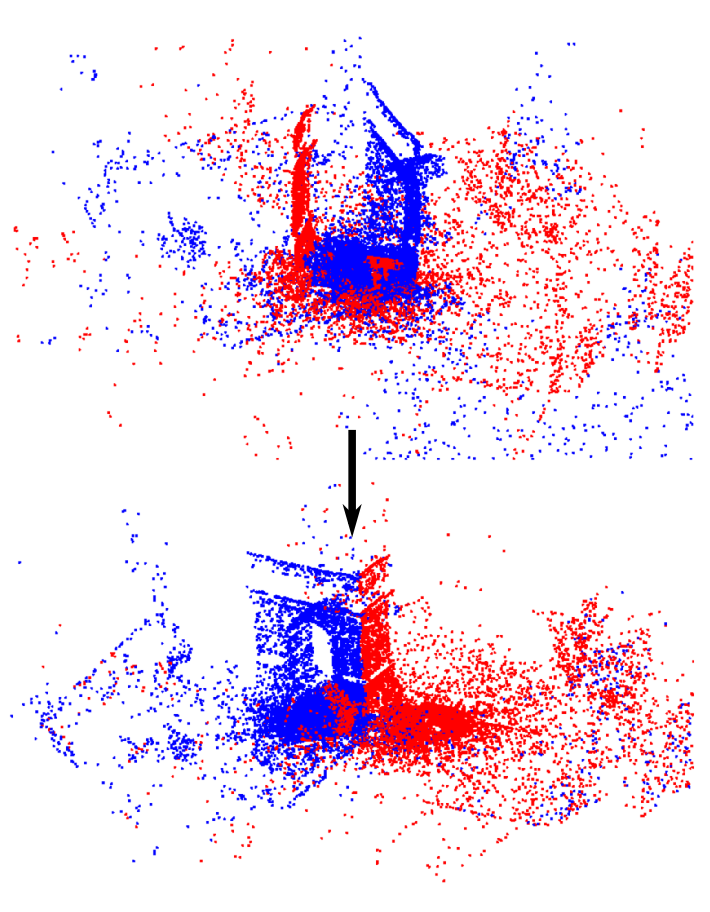}\label{subfig:Reg_Arch_Vis_3}}
\subfigure[s4-s5---\textit{Arch}.]{\includegraphics[width=0.8\columnwidth]{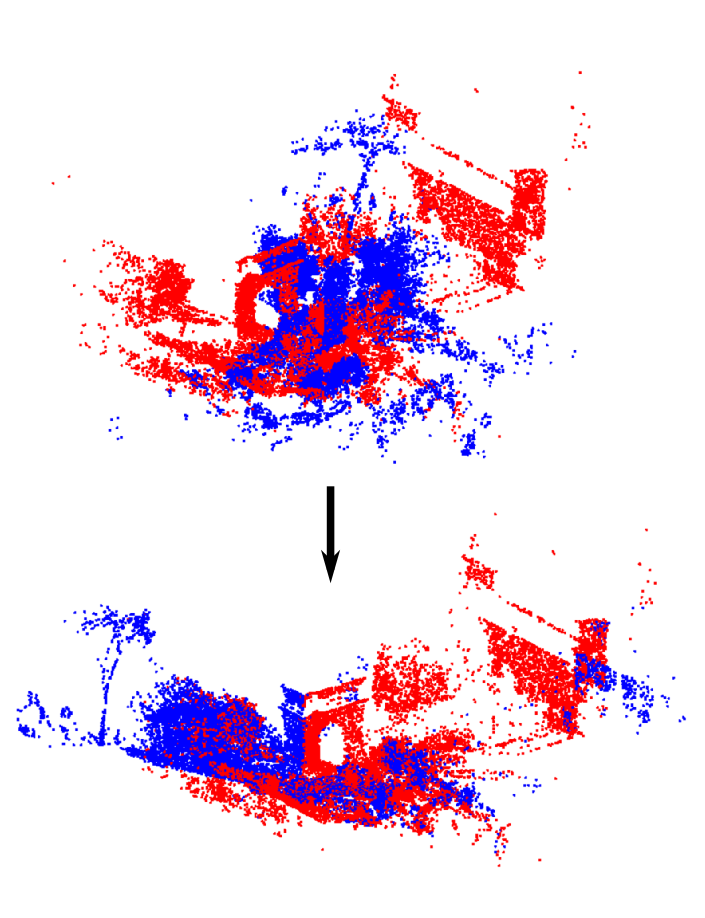}\label{subfig:Reg_Arch_Vis_4}}
\subfigure[s9-s10---\textit{Facility}.]{\includegraphics[width=0.8\columnwidth]{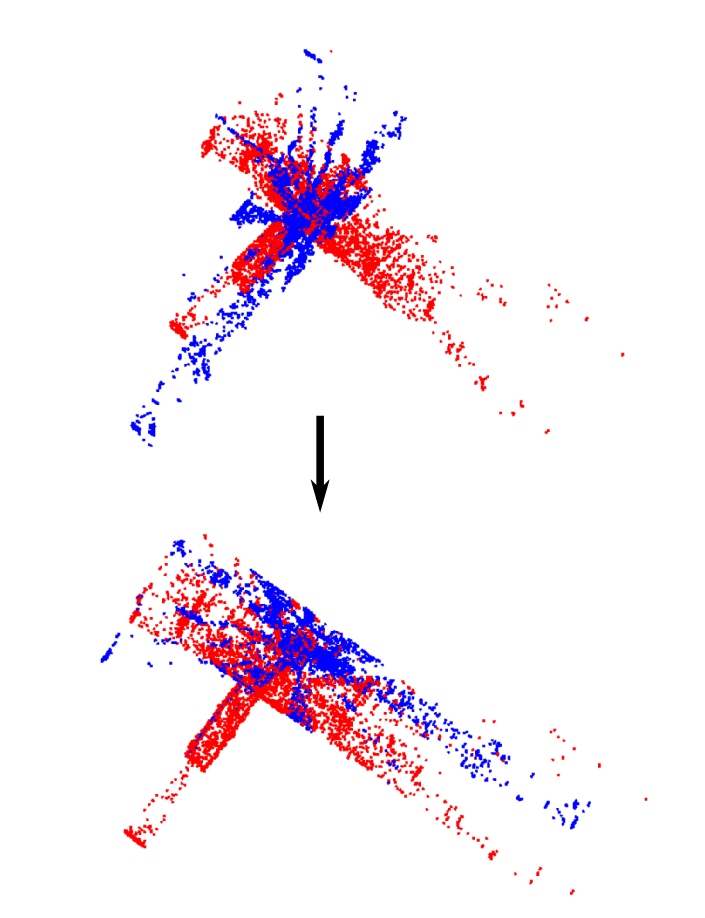}\label{subfig:Reg_Facility_Vis_1}}
\subfigure[s18-s19---\textit{Facility}.]{\includegraphics[width=0.8\columnwidth]{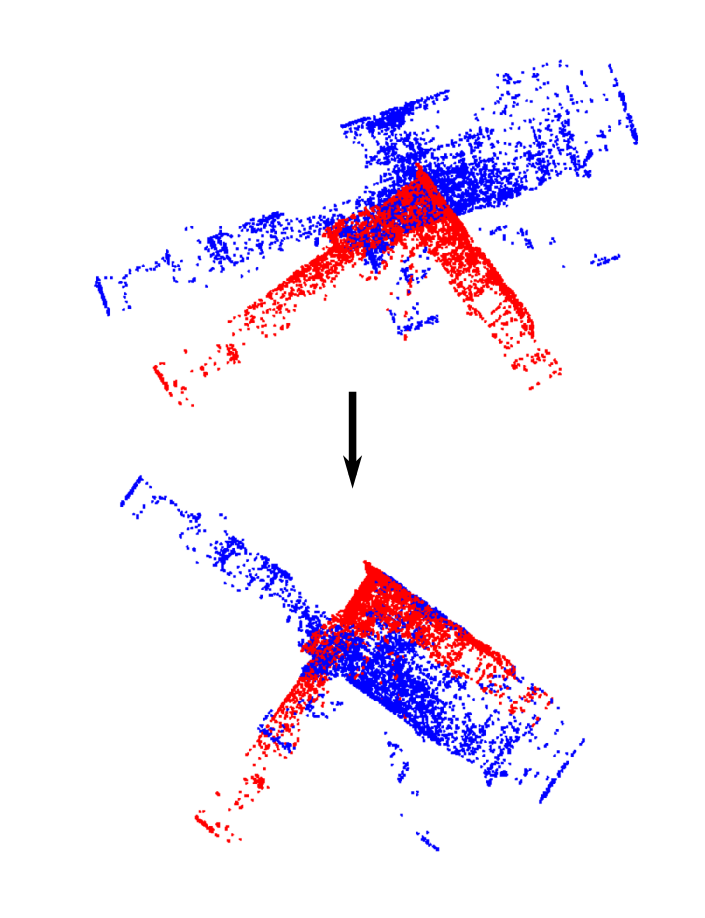}\label{subfig:Reg_Facility_Vis_3}}
\caption{Registered pairwise real-world data. 10k points are shown for
  each point cloud.  
    }\label{fig:Reg_Real_Vis}
\end{figure*}

To demonstrate the practicality of our method, comparisons on large
sale LiDAR
datasets\footnote{\url{http://www.prs.igp.ethz.ch/research/completed\_projects/automatic\_registration\_of\_point\_clouds.html}}
were also performed. Figure~\ref{fig:Reg_Acc_Real} reports the
accuracy of all methods on an outdoor dataset,~\textit{Arch}, and an
indoor dataset,~\textit{Facility}. Among the datasets used in
~\citep{theiler2015globally}, these are the most challenging ones,
which most clearly demonstrate the advantages of our proposed
method. We point out that both are not staged, but taken from real
surveying projects and representative for the registration problems
that arise in difficult field campaigns. For completeness,
Appendix~\ref{app:MoreResults} contains results on easier data, where
most of the compared methods work well). The accuracy was again
measured by the difference between the estimated and ground truth
(provided in the selected datasets, see~\citep{theiler2015globally} for
details) relative poses. Similar as before, at a reasonable error
threshold%
\footnote{Note, the threshold refers to errors after initial alignment
  from scratch. Of course, the result can be refined with ICP-type
  methods that are based on all points, not just a sparse
  match set.} %
of $15$ cm for translation, respectively $1^\circ$ for
rotation, our method had 100\% success rate; whereas failure cases
occurred with all other methods. And as shown in
Figure~\ref{fig:Reg_Time_Real}, FMP + BnB showed comparable speed than
its competitors on all data. It successfully registered millions of
points and tens of thousands of point matches (see
Table~\ref{tab:dataReal}) within 100 s, including the match
generation time.
We see the excellent balance between high accuracy and reliability on
the one hand, and low computation time on the other hand as a
particular strength of our method. 

Figures~\ref{fig:Reg_Real_Vis}--\ref{fig:Reg_Trees_Vis} visually show
various large scale scenes (pair-wise and complete) registered by our
method; more detailed demonstrations can be found in the video demo in our project homepage. Note that the runtime for registering a complete dataset (in
Figure~\ref{fig:regArchOutAll} and~\ref{fig:resultFacade}
to~\ref{fig:resultTrees}) was slightly less than the sum of
pair-wise runtimes, since a scan forms part of multiple pairs, but
keypoints and FPFH features need to be extracted only once and can
then be reused.

\begin{figure}[!htb]\centering
\subfigure{\includegraphics[width=1\columnwidth]{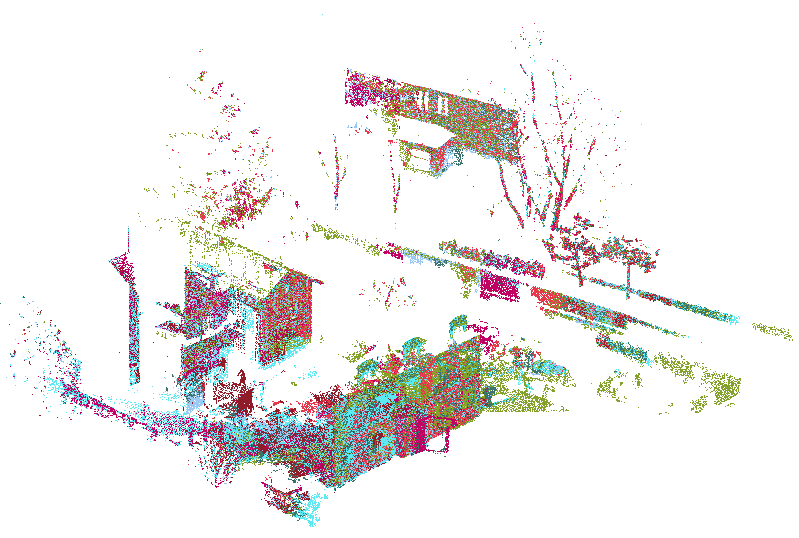}}
\caption{Complete registration result---\textit{Facade}. All 7 scans were registered in 134.95s.}\label{fig:Reg_Facade_Vis}
\end{figure}
\begin{figure}[!htb]\centering
\subfigure{\includegraphics[width=1\columnwidth]{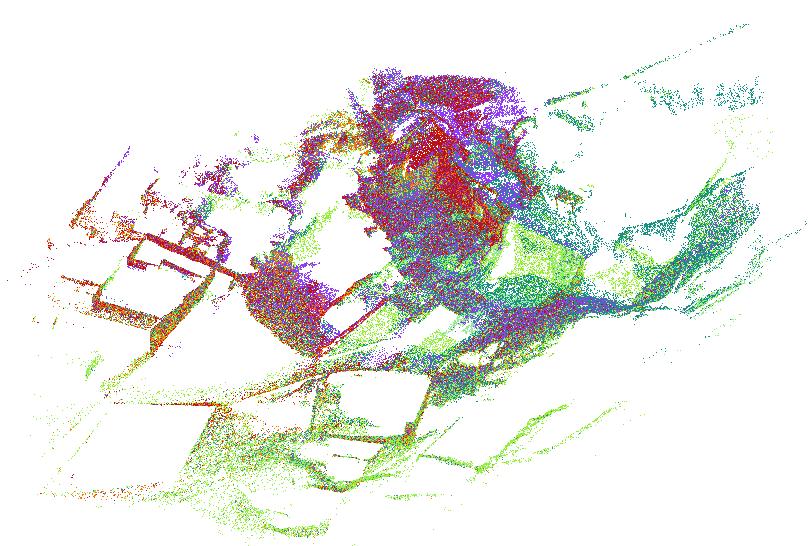}}
\caption{Complete registration result---\textit{Courtyard}. All 8 scans were registered in 84.02s.}\label{fig:Reg_Courtyard_Vis}
\end{figure}
\begin{figure}[!htb]\centering
\subfigure{\includegraphics[width=1\columnwidth]{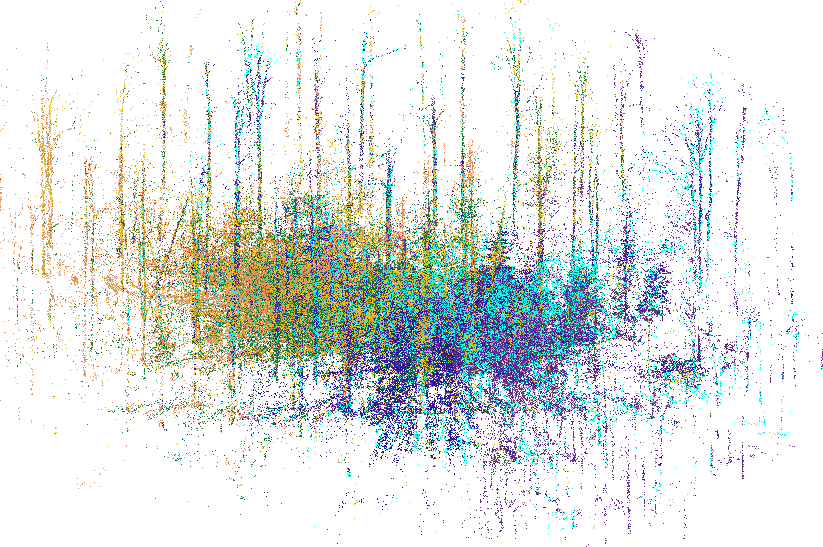}}
\caption{Complete registration result---\textit{Trees}. All 6 scans
  were registered in 142.72s.  
    }\label{fig:Reg_Trees_Vis}
\end{figure}

\section{Conclusion}
We have described a practical and optimal solution for
terrestrial LiDAR scan registration.
The main characteristic of our approach is that it combines a reliable,
optimal solver with high computational efficiency, by
exploiting two properties of terrestrial LiDAR: (1) it restricts the
registration problem to 4DOF (3D translation + azimuth), since modern
laser scanners have built-in level compensators. And (2) it
aggressively prunes the number of matches used to compute the
transformation, realising that the sensor noise level is low, therefore
a small set of corresponding points is sufficient for accurate
registration, so long as they are correct.
Given some set of candidate correspondences that may be contaminated with
a large number of incorrect matches (which is often the case for real
3D point clouds); our algorithm first applies a fast pruning method
that leaves the optimum of the alignment objective unchanged,
then uses the reduced point set to find that optimum with the
branch-and-bound method.
The pruning greatly accelerates the solver while keeping intact the
optimality w.r.t.\ the original problem.
The BnB solver explicitly searches the 3D translation space, which can
be bounded efficiently; while solving for 1D rotation implicitly with
a novel, polynomial-time algorithm. Experiments show that our
algorithm is significantly more reliable than previous methods, with competitive speed. 

\section{Acknowledgement}
This work was supported by the ARC grant DP160103490.

\section{Appendix}
\subsection{Compute $\mathtt{int}_i$ and solve the max-stabbing problem}~\label{app:compInt}
\begin{figure}[!htb]\centering
\subfigure{\includegraphics[width=1\columnwidth]{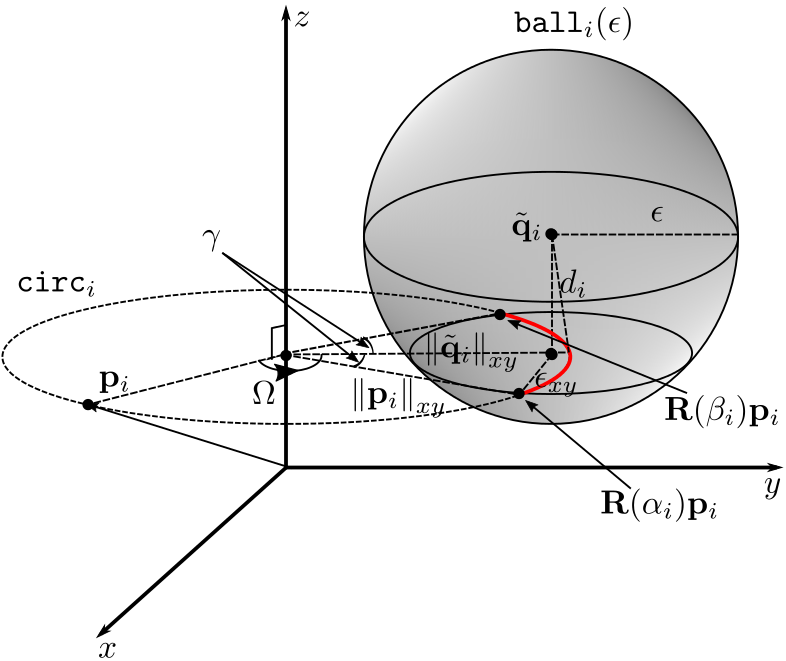}}
\caption{Illustration of Algorithm~\ref{alg:ComputInt}, $\mathtt{int}_i$ is rendered in red.}\label{fig:compInt}
\end{figure}
\begin{algorithm}[t]\centering
\caption{Compute the angle interval that aligns $\bp_i$ with $\tilde{\bq}_i$}
\label{alg:ComputInt}                         
\begin{algorithmic}[1]      
\REQUIRE $\bp_i$, $\tilde{\bq}_i$ and $\epsilon$, such that $d_i \le \epsilon$.
\STATE $\epsilon_{xy} \leftarrow \sqrt{\epsilon^2-(\bp_i(3)-\tilde{\bq}_i(3))^2}$
\IF{$|\norm{\bp_i}_{xy}+\norm{\tilde{\bq}_i}_{xy}|\leq\epsilon_{xy}$}\label{alg:ComputeInt_if}
\RETURN $[0,2\pi]$.
\ELSE
\STATE $\Omega = azi(\tilde{\bq}_i)-azi(\bp_i), \gamma = \arccos{\frac{\norm{\bp_i}^2_{xy}+\norm{\tilde{\bq}_i}^2_{xy}-\epsilon_{xy}}{2\norm{\bp_i}_{xy}\norm{\tilde{\bq}_i}_{xy}}}$.
\RETURN $[\Omega-\gamma,\Omega+\gamma]$.
\ENDIF
\end{algorithmic}
\end{algorithm}
Algorithm~\ref{alg:ComputInt} shows how to compute $\mathtt{int}_i = [\alpha_i,\beta_i]$ for each $(\bp_i,\tilde{\bq}_i)$ during rotation search. As shown in Figure~\ref{fig:compInt}, $\mathtt{circ}_i$ and $\mathtt{ball}_i(\epsilon)$ intersect if and only if the closest distance $d_i$ from $\tilde{\bq}_i$ to $\mathtt{circ}_i$ is within $\epsilon$, where 
\begin{align}\label{eq:circBallInt}
&d_i = \sqrt{(\norm{\bp_i}_{xy}-\norm{\tilde{\bq}_i}_{xy})^2+(\bp_i(3)-\tilde{\bq}_i(3))^2}.
\end{align}
In the above equation, $\bp_i(3)$ is the 3rd channel of $\bp_i$ and $\norm{\bp_i}_{xy} = \sqrt{\bp_i(1)^2+\bp_i(2)^2}$ is the \emph{horizontal} length of $\bp_i$. And the two intersecting points of $\mathtt{circ}_i$ and $\mathtt{ball}_i(\epsilon)$, namely $\bR(\alpha_i)\bp_i$ and $\bR(\beta_i)\bp_i$, have the same \emph{azimuthal} angular distance $\gamma$ to $\tilde{\bq}_i$. And by computing the azimuth $\Omega$ from $\bp_i$ to $\tilde{\bq}_i$, i.e., $\Omega = azi(\tilde{\bq}_i)-azi(\bp_i)$, where $azi(\cdot)$ is the azimuth of a point, $\mathtt{int}_i$ is simply $[\Omega-\gamma,\Omega+\gamma]$. Note that when $\mathtt{circ}_i$ is inside $\mathtt{ball}_i(\epsilon)$ (Line~\ref{alg:ComputeInt_if}), $\mathtt{int}_i$ is $[0,2\pi]$.

$\gamma$ is computed by the law of cosines\footnote{\url{https://en.wikipedia.org/wiki/Law\_of\_cosines}}, because it is an interior angle of the triangle whose three edge lengths are $\norm{\bp_i}_{xy}$, $\norm{\tilde{\bq}_i}_{xy}$ and $\epsilon_{xy}$, where $\epsilon_{xy}$ is the \emph{horizontal} distance between $\tilde{\bq}_i$ and $\bR(\alpha_i)\bp_i$ (or $\bR(\beta_i)\bp_i$).

\begin{algorithm}[t]\centering
\caption{Max-Stabbing algorithm for 1D rotation estimation}
\label{alg:1DRot}                         
\begin{algorithmic}[1]      
\REQUIRE $U = \{\mathtt{int}_i\}_{i=1}^M$, where $\mathtt{int}_i = [\alpha_{i},\beta_{i}]$.
\STATE $\bV \leftarrow \bigcup\limits_{i=1}^{M} \{[\alpha_{i}, 0],[\beta_{i}, 1]\}$, sort $\bV$ ascendingly according to~\ref{itm:a1} and~\ref{itm:a2}.
\STATE $\tilde{\delta} \leftarrow 0$, $\delta \leftarrow 0$. 
\FOR{each $\bv \in \bV$}
\IF{$\bv(2) = 0$}
\STATE $\delta \leftarrow \delta+1$. And if $\delta>\tilde{\delta}$, then $\tilde{\delta} \leftarrow \delta, \tilde{\theta} \leftarrow \bv(1)$.
\ELSE
\STATE $\delta \leftarrow \delta - 1$.
\ENDIF
\ENDFOR 
\RETURN $\tilde{\delta}, \tilde{\theta}$.
\end{algorithmic}
\end{algorithm}

After computing all $\mathtt{int}_i$'s, the max-stabbing
problem~\eqref{eq:objStab} can be efficiently solved by
Algorithm~\ref{alg:1DRot}. Observe that one of the endpoints among all
$\mathtt{int}_i$'s must achieve the max-stabbing, the idea of
Algorithm~\ref{alg:1DRot} is to compute the stabbing value $\delta$
for all endpoints and find the maximum one.

We first pack all endpoints into an array $\bV =
\bigcup\limits_{i=1}^{M} \{[\alpha_{i}, 0],[\beta_{i}, 1]\}$. The
$0/1$ label attached to each endpoint represents whether it is the end
of an $\mathtt{int}_i$, i.e., $\beta_i$. Then, to efficiently compute each
$\delta$, we sort $\bV$ so that the endpoints with
\begin{enumerate}
\item[\namedlabel{itm:a1}{\textbf{a1}}: ] smaller angles; 
\item[\namedlabel{itm:a2}{\textbf{a2}}: ] and the same angle but are the start of an $\mathtt{int}_i$ (the assigned label is 0); 
\end{enumerate}
are moved to the front.  After sorting, we initialize $\delta$ to 0
and traverse $\bV$ from the first element to the last. When we ``hit"
the beginning of an $\mathtt{int}_i$, we increment $\delta$ by 1, and when we
``hit" the end of an $\mathtt{int}_i$, $\delta$ is reduced by 1. The
max-stabbing value $\tilde{\delta}$ and its corresponding angle
$\tilde{\theta}$ are returned in the end.

Since the time complexity for sorting and traversing $\bV$ is
respectively $\mathcal{O}(M\log M)$ and $\mathcal{O}(M)$, the one for
Algorithm~\ref{alg:1DRot} is $\mathcal{O}(M\log M)$. The space complexity of
$\mathcal{O}(M)$ can be achieved with advanced sorting algorithms like merge
sort\footnote{\url{https://en.wikipedia.org/wiki/Merge\_sort}}.

\subsection{Proof of Lemma~\ref{lem:upBnd}}\label{app:lemProof}
First, to prove~\eqref{eq:lem_upBnd}, we re-express $\bt^*(\mathbb{S}) = \underset{\bt\in \mathbb{S}}{\argmax}~U(\bt \mid \cC,\epsilon)$ as $\bt_{\mathbb{S}}+\bt^*{'}(\mathbb{S})$, where $\norm{\bt^*{'}(\mathbb{S})}\le d_{\mathbb{S}}$. Using this re-expression, we have
\begin{subequations}\label{eq:proofUpBnd}
\begin{align}
&\norm{\bR(\theta)\bp_i+\bt_{\mathbb{S}}+\bt^*{'}(\mathbb{S})-\bq_i}\le \epsilon\label{eq:proofUpBnd1}\\
\Rightarrow &\norm{\bR(\theta)\bp_i+\bt_{\mathbb{S}}-\bq_i}-\norm{\bt^*{'}(\mathbb{S})} \le \epsilon\label{eq:proofUpBnd2}\\
\Rightarrow &\norm{\bR(\theta)\bp_i+\bt_{\mathbb{S}}-\bq_i}-d_{\mathbb{S}}\le \epsilon \\ \Leftrightarrow &\norm{\bR(\theta)\bp_i+\bt_{\mathbb{S}}-\bq_i}\le \epsilon+d_{\mathbb{S}}.\label{eq:proofUpBnd3}
\end{align}
\end{subequations}
\eqref{eq:proofUpBnd1} to~\eqref{eq:proofUpBnd2} is due to the triangle inequality. According to~\eqref{eq:proofUpBnd1} and~\eqref{eq:proofUpBnd3}, as long as $(\bp_i,\bq_i)$ contributes 1 in $\underset{\bt\in \mathbb{S}}{\max}~U(\bt \mid \cC,\epsilon) = U(\bt_{\mathbb{S}}+\bt^*{'}(\mathbb{S}) \mid \cC, \epsilon)$, it must also contribute 1 in $U(\bt_\mathbb{S} \mid \cC, \epsilon+d_\mathbb{S})$, i.e., $U(\bt_\mathbb{S} \mid \cC, \epsilon+d_\mathbb{S}) \ge \underset{\bt\in \mathbb{S}}{\max}~U(\bt \mid \cC,\epsilon)$.

And when $\mathbb{S}$ tends to a point $\bt$, $d_\mathbb{S}$ tends to 0. Therefore,
$ \bar{U}(\mathbb{S} \mid \cC, \epsilon) = U(\bt_{\mathbb{S}} \mid \cC, \epsilon+d_\mathbb{S})\rightarrow U(\bt \mid \cC, \epsilon)$.

\subsection{Further results}\label{app:MoreResults}

Figure~\ref{fig:resultFacade} to~\ref{fig:resultTrees} show the accuracy and runtime of our method and all other competitors in  Section~\ref{sec:expReal} for \textit{Facade}, \textit{Courtyard} and \textit{Trees}. The inlier threshold $\epsilon$ were set to 0.05 m for \textit{Facade}, 0.2 m for \textit{Courtyard} and 0.2 m for \textit{Trees}.

\begin{figure}[!htb]\centering
\subfigure[Rotation error.]{\includegraphics[width=0.49\columnwidth]{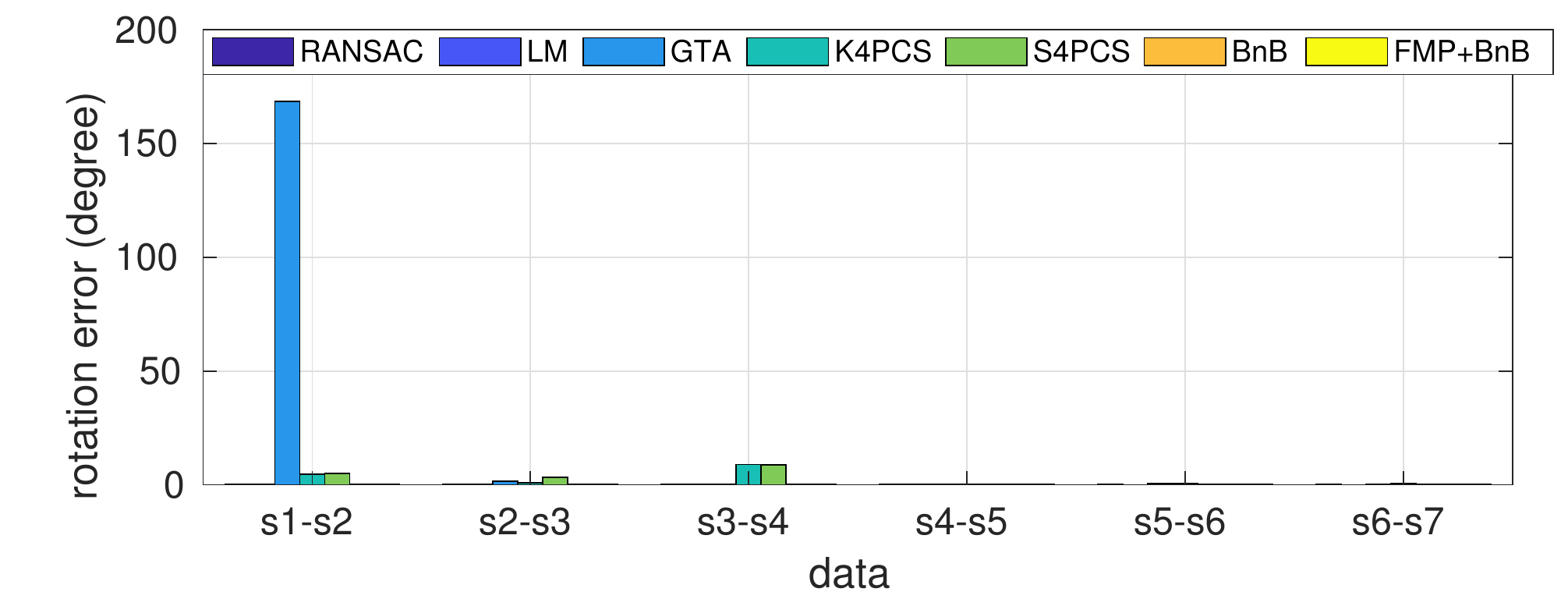}\label{subfig:Reg_Facade_eAng}}
\subfigure[Translation error.]{\includegraphics[width=0.49\columnwidth]{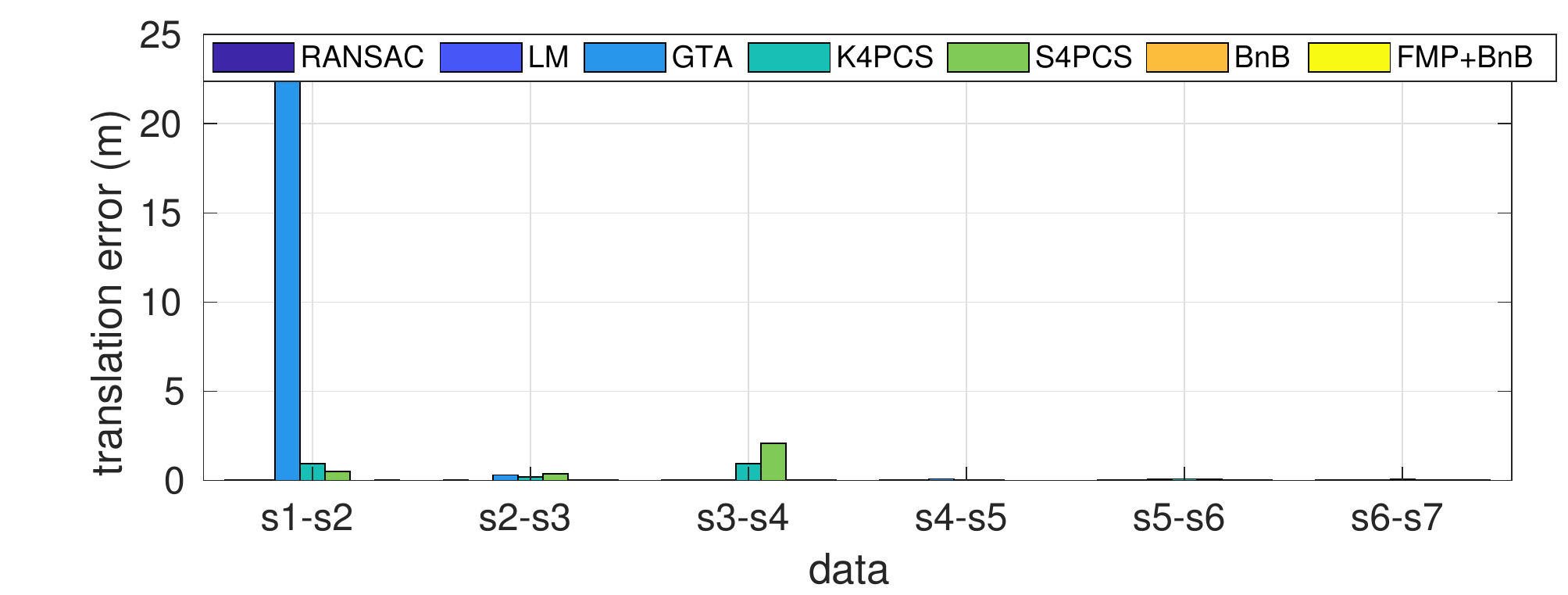}\label{subfig:Reg_Facade_eTran}}
\subfigure[Runtime for optimization.]{\includegraphics[width=0.49\columnwidth]{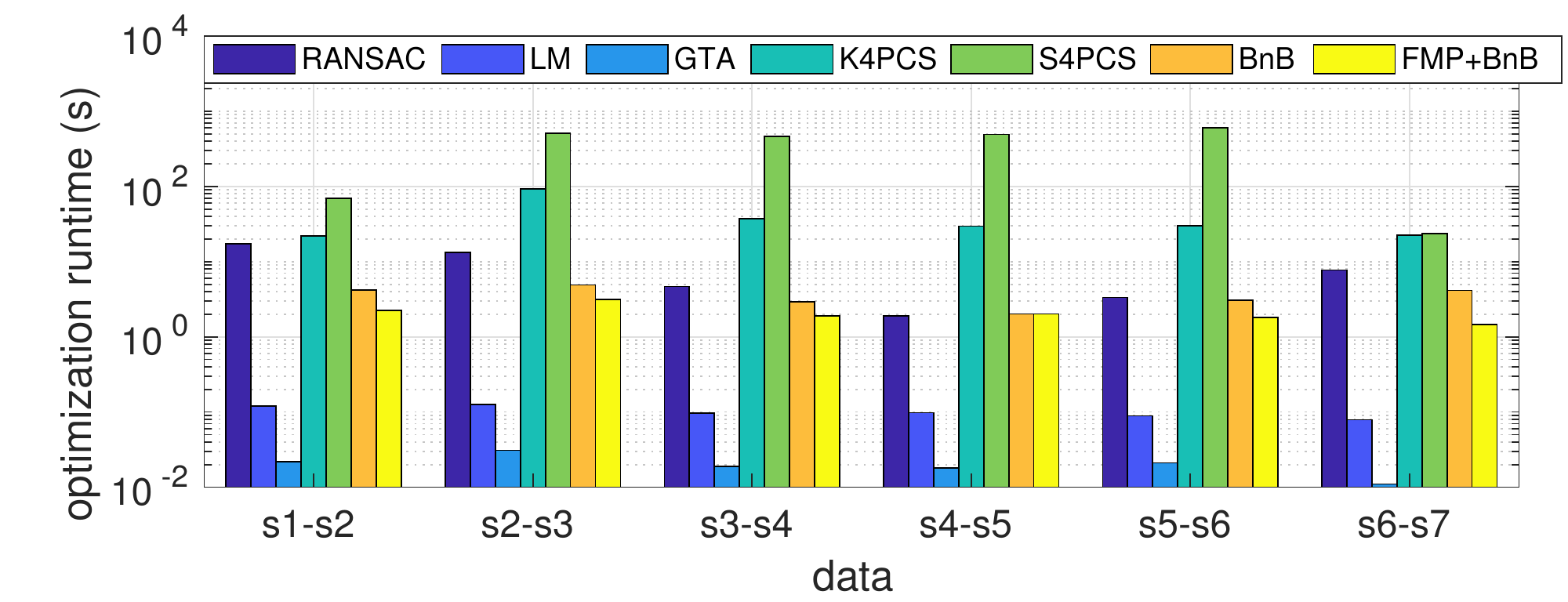}\label{subfig:Reg_Facade_regTime}}
\subfigure[Total runtime.]{\includegraphics[width=0.49\columnwidth]{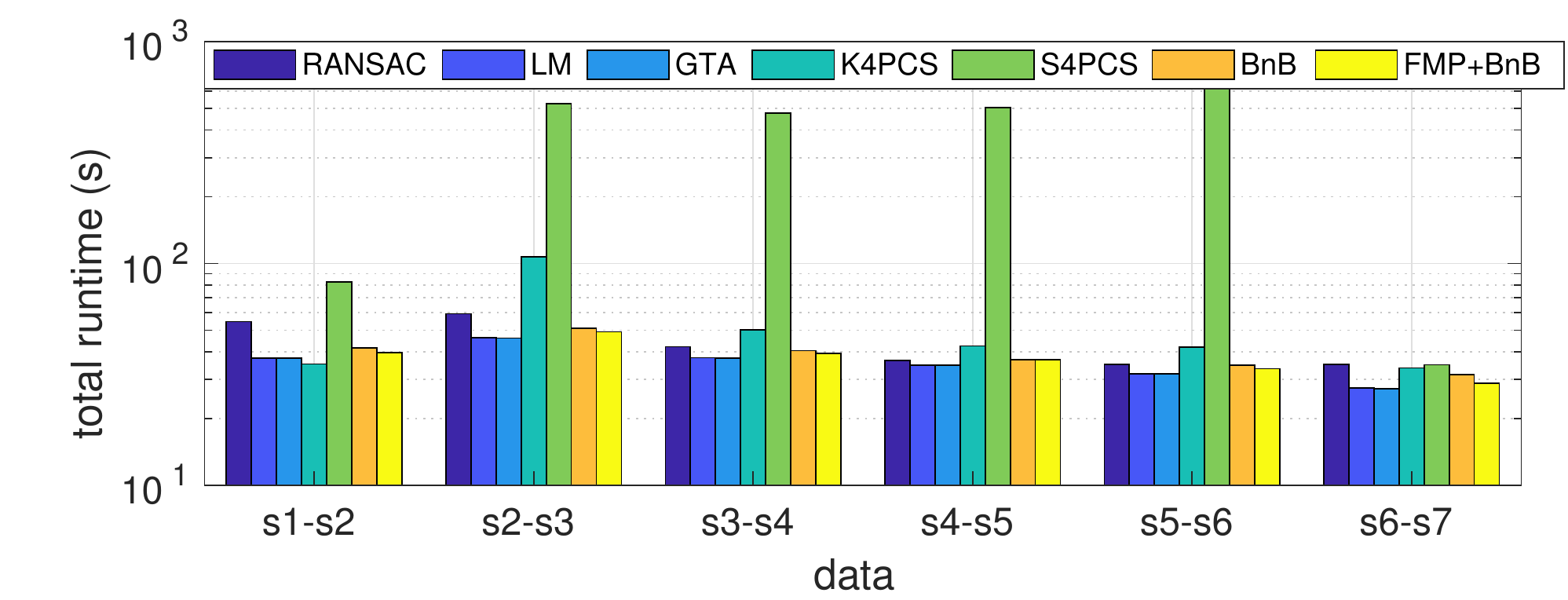}\label{subfig:Reg_Facade_regTimeWithPrep}}
\caption{Accuracy and log scaled runtime for \textit{Facade}.}\label{fig:resultFacade}
\end{figure}

\begin{figure}[!htb]\centering
\subfigure[Rotation error.]{\includegraphics[width=0.49\columnwidth]{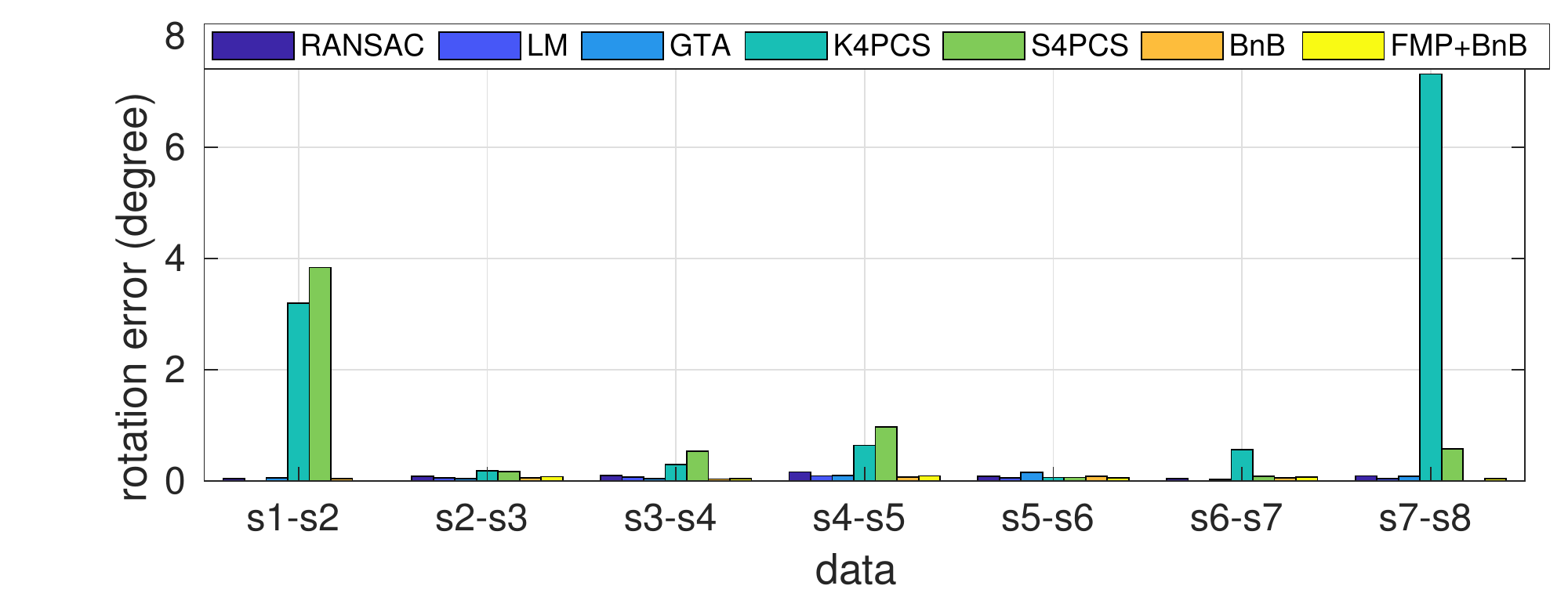}\label{subfig:Reg_Courtyard_eAng}}
\subfigure[Translation error.]{\includegraphics[width=0.49\columnwidth]{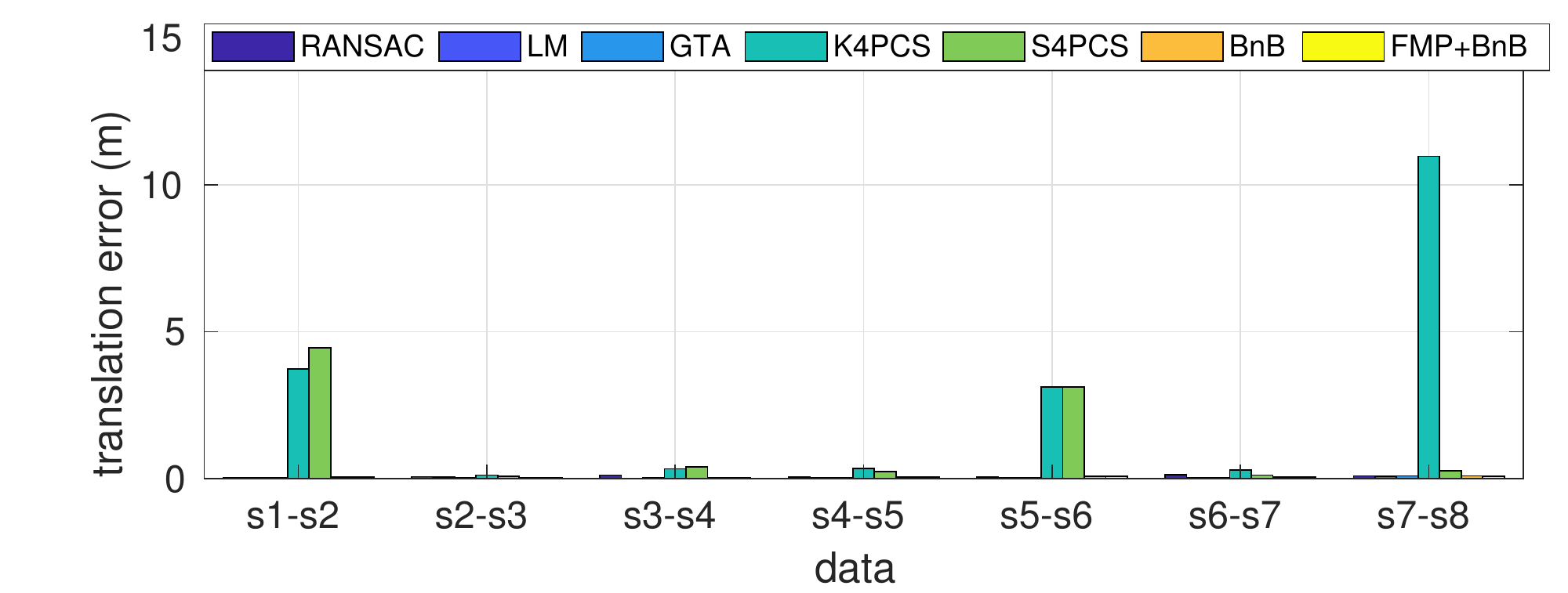}\label{subfig:Reg_Courtyard_eTran}}
\subfigure[Runtime for optimization.]{\includegraphics[width=0.49\columnwidth]{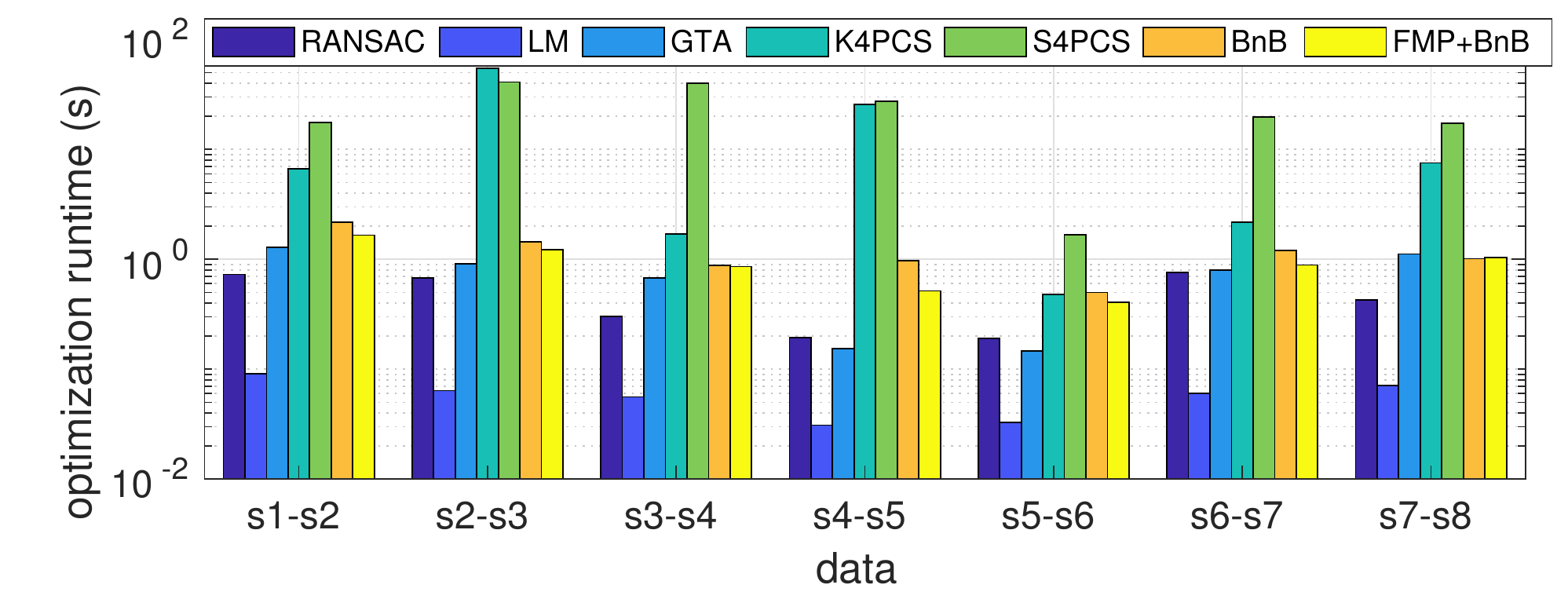}\label{subfig:Reg_Courtyard_regTime}}
\subfigure[Total runtime.]{\includegraphics[width=0.49\columnwidth]{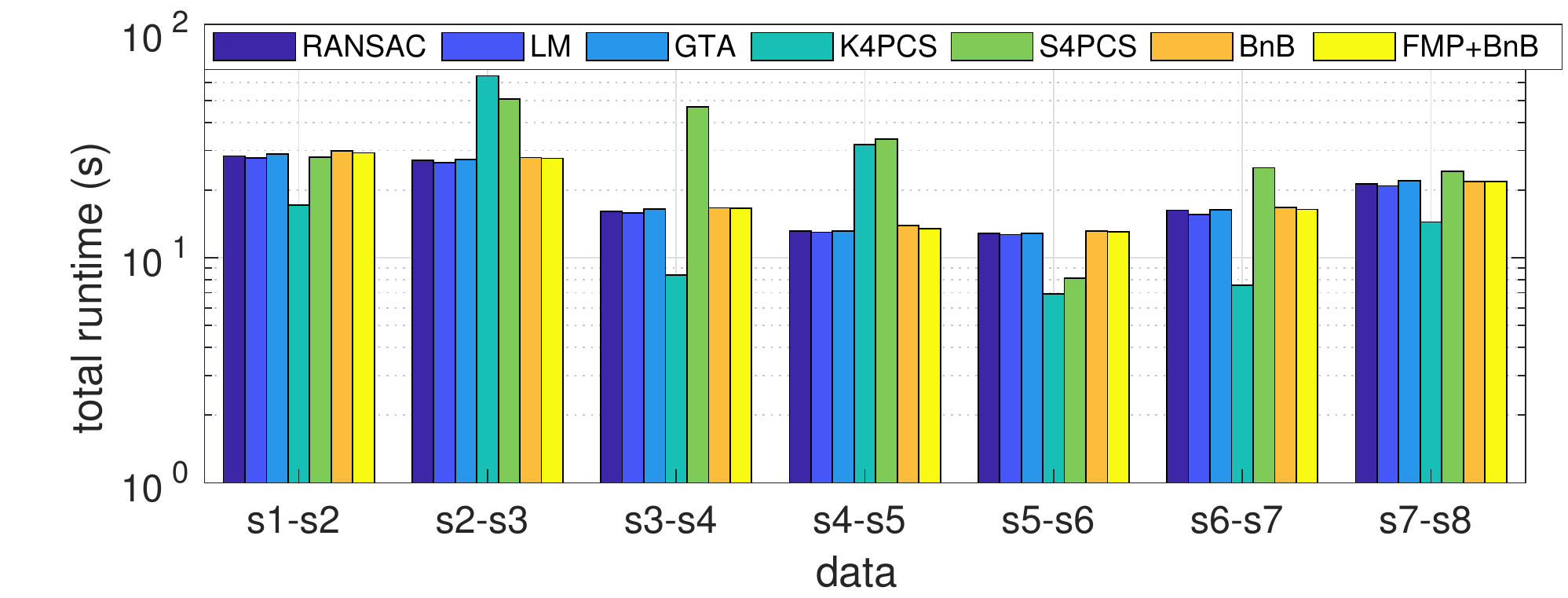}\label{subfig:Reg_Courtyard_regTimeWithPrep}}
\caption{Accuracy and log scaled runtime for \textit{Courtyard}.}\label{fig:resultCourtyard}
\end{figure}

\begin{figure}[!htb]\centering
\subfigure[Rotation error.]{\includegraphics[width=0.49\columnwidth]{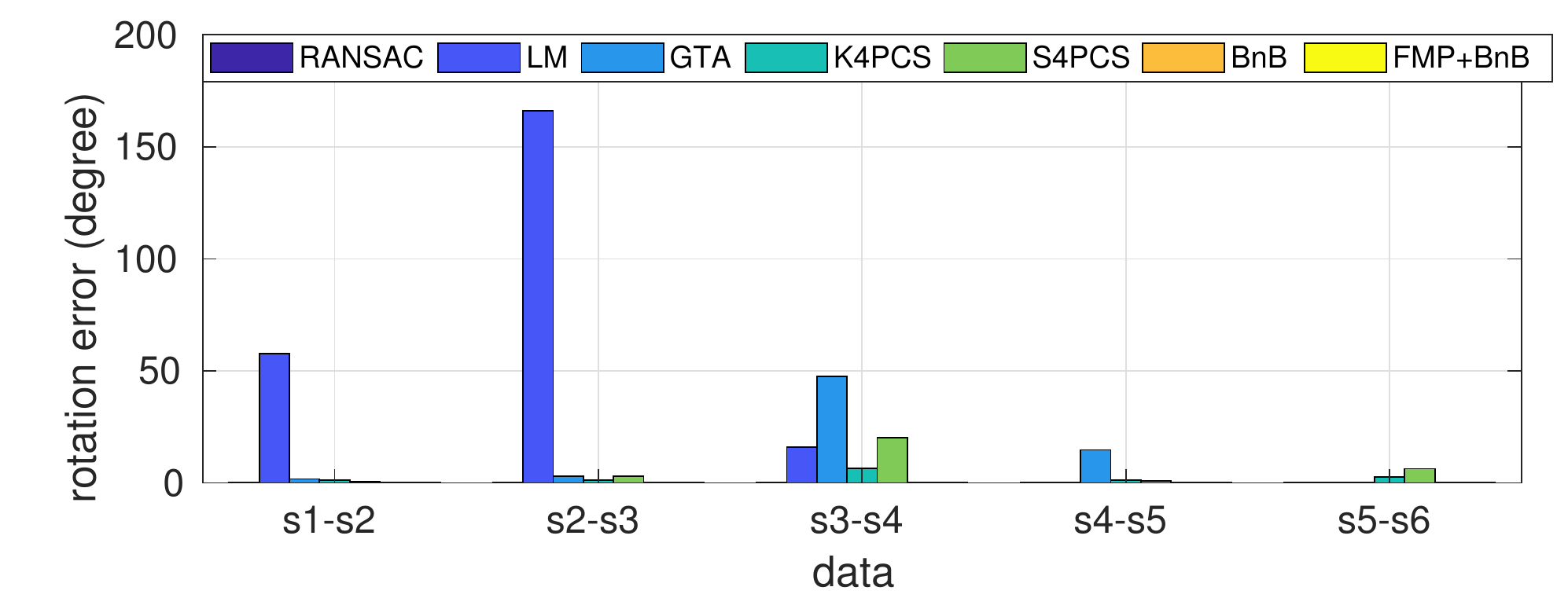}\label{subfig:Reg_Trees_eAng}}
\subfigure[Translation error.]{\includegraphics[width=0.49\columnwidth]{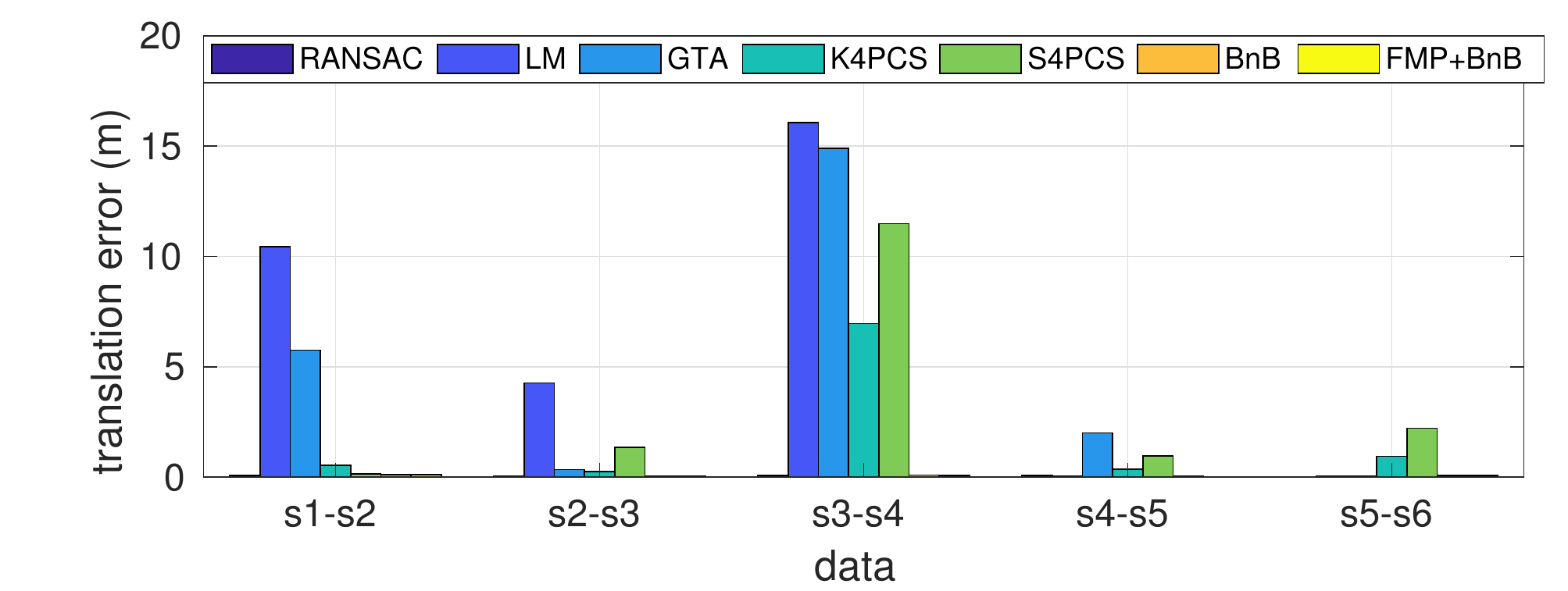}\label{subfig:Reg_Trees_eTran}}
\subfigure[Runtime for optimization.]{\includegraphics[width=0.49\columnwidth]{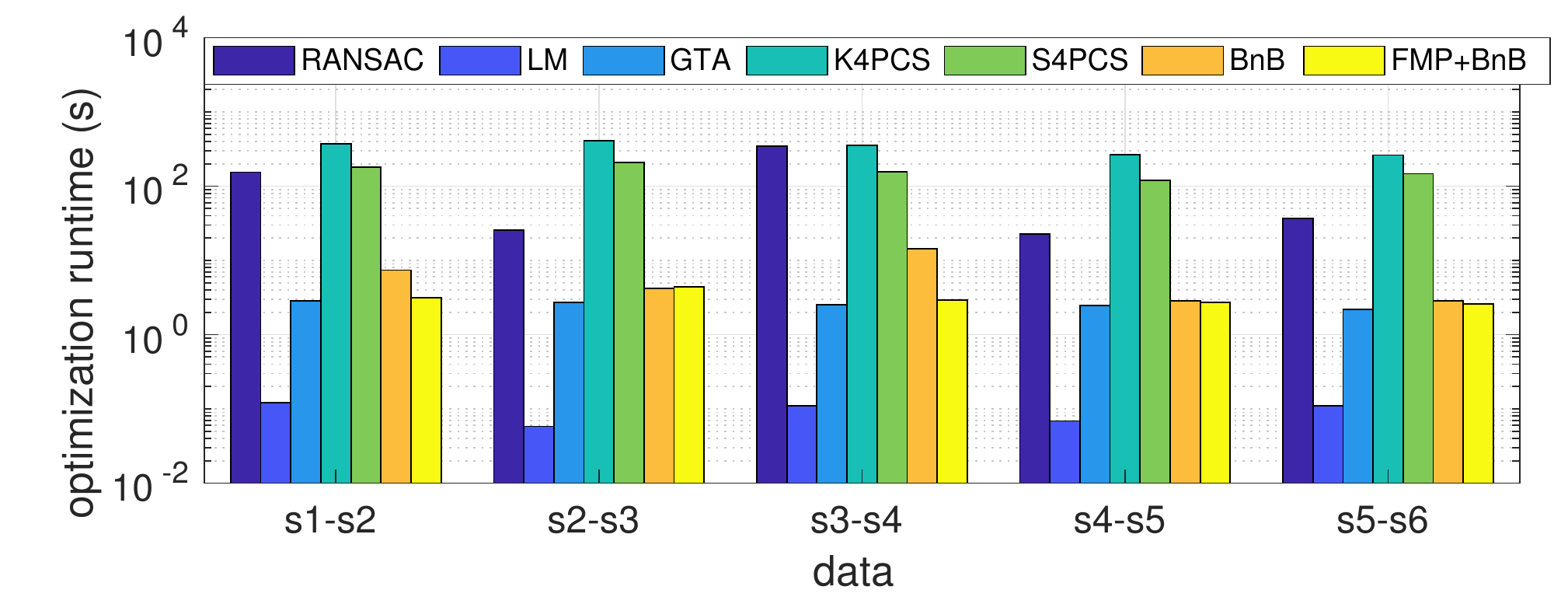}\label{subfig:Reg_Trees_regTime}}
\subfigure[Total runtime.]{\includegraphics[width=0.49\columnwidth]{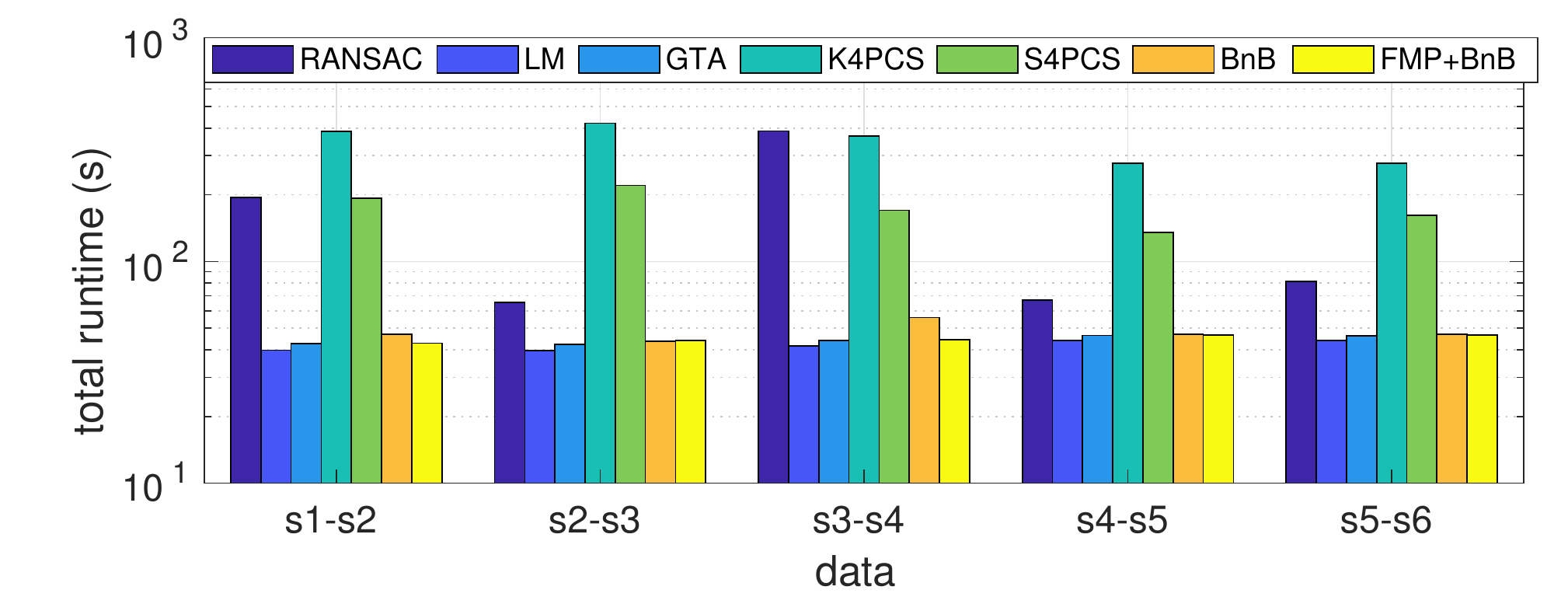}\label{subfig:Reg_Trees_regTimeWithPrep}}
\caption{Accuracy and log scaled runtime for \textit{Trees}.}\label{fig:resultTrees}
\end{figure}

\clearpage

\bibliography{4DoFReg_revised}

\begin{thebibliography}{39}
\expandafter\ifx\csname natexlab\endcsname\relax\def\natexlab#1{#1}\fi
\expandafter\ifx\csname url\endcsname\relax
  \def\url#1{\texttt{#1}}\fi
\expandafter\ifx\csname urlprefix\endcsname\relax\def\urlprefix{URL }\fi

\bibitem[{Aiger et~al.(2008)Aiger, Mitra, and Cohen-Or}]{aiger20084}
Aiger, D., Mitra, N.~J., Cohen-Or, D., 2008. 4-points congruent sets for robust
  pairwise surface registration. In: ACM Transactions on Graphics (TOG).
  Vol.~27. ACM, p.~85.

\bibitem[{Akca(2003)}]{akca2003full}
Akca, D., 2003. Full automatic registration of laser scanner point clouds.
  Tech. rep., ETH Zurich.

\bibitem[{Albarelli et~al.(2010)Albarelli, Rodola, and
  Torsello}]{albarelli2010game}
Albarelli, A., Rodola, E., Torsello, A., 2010. A game-theoretic approach to
  fine surface registration without initial motion estimation. In: Computer
  Vision and Pattern Recognition (CVPR), 2010 IEEE Conference on. IEEE, pp.
  430--437.

\bibitem[{Bazin et~al.(2012)Bazin, Seo, and Pollefeys}]{bazin2012globally}
Bazin, J.-C., Seo, Y., Pollefeys, M., 2012. Globally optimal consensus set
  maximization through rotation search. In: Asian Conference on Computer
  Vision. Springer, pp. 539--551.

\bibitem[{Besl and McKay(1992)}]{besl1992method}
Besl, P.~J., McKay, N.~D., 1992. Method for registration of 3-d shapes. In:
  Sensor Fusion IV: Control Paradigms and Data Structures. Vol. 1611.
  International Society for Optics and Photonics, pp. 586--607.

\bibitem[{Brenner and Dold(2007)}]{brenner2007automatic}
Brenner, C., Dold, C., 2007. Automatic relative orientation of terrestrial
  laser scans using planar structures and angle constraints. In: ISPRS Workshop
  on Laser Scanning. Vol. 200. pp. 84--89.

\bibitem[{Breuel(2001)}]{breuel2001practical}
Breuel, T.~M., 2001. A practical, globally optimal algorithm for geometric
  matching under uncertainty. Electronic Notes in Theoretical Computer Science
  46, 188--202.

\bibitem[{Bustos and Chin(2017)}]{bustos2017guaranteed}
Bustos, A.~P., Chin, T.-J., 2017. Guaranteed outlier removal for point cloud
  registration with correspondences. IEEE Transactions on Pattern Analysis and
  Machine Intelligence.

\bibitem[{Campbell and Petersson(2016)}]{campbell2016gogma}
Campbell, D., Petersson, L., 2016. Gogma: Globally-optimal gaussian mixture
  alignment. In: CVPR. pp. 5685--5694.

\bibitem[{Chen et~al.(1999)Chen, Hung, and Cheng}]{chen1999ransac}
Chen, C.-S., Hung, Y.-P., Cheng, J.-B., 1999. Ransac-based darces: A new
  approach to fast automatic registration of partially overlapping range
  images. IEEE Transactions on Pattern Analysis and Machine Intelligence
  21~(11), 1229--1234.

\bibitem[{Chin et~al.(2018)Chin, Cai, and Neumann}]{chin2018robust}
Chin, T.-J., Cai, Z., Neumann, F., 2018. Robust fitting in computer vision:
  Easy or hard? arXiv preprint arXiv:1802.06464.

\bibitem[{Chin and Suter(2017)}]{chin2017maximum}
Chin, T.-J., Suter, D., 2017. The maximum consensus problem: Recent algorithmic
  advances. Synthesis Lectures on Computer Vision 7~(2), 1--194.

\bibitem[{De~Berg et~al.(2000)De~Berg, Van~Kreveld, Overmars, and
  Schwarzkopf}]{de2000computational}
De~Berg, M., Van~Kreveld, M., Overmars, M., Schwarzkopf, O.~C., 2000.
  Computational geometry. In: Computational geometry. Springer, pp. 1--17.

\bibitem[{Drost et~al.(2010)Drost, Ulrich, Navab, and Ilic}]{drost2010model}
Drost, B., Ulrich, M., Navab, N., Ilic, S., 2010. Model globally, match
  locally: Efficient and robust 3d object recognition. In: Computer Vision and
  Pattern Recognition (CVPR), 2010 IEEE Conference on. Ieee, pp. 998--1005.

\bibitem[{Fischler and Bolles(1981)}]{fischler1981random}
Fischler, M.~A., Bolles, R.~C., 1981. Random sample consensus: a paradigm for
  model fitting with applications to image analysis and automated cartography.
  Communications of the ACM 24~(6), 381--395.

\bibitem[{Franaszek et~al.(2009)Franaszek, Cheok, and
  Witzgall}]{franaszek2009fast}
Franaszek, M., Cheok, G.~S., Witzgall, C., 2009. Fast automatic registration of
  range images from 3d imaging systems using sphere targets. Automation in
  Construction 18~(3), 265--274.

\bibitem[{G{\l}omb(2009)}]{glomb2009detection}
G{\l}omb, P., 2009. Detection of interest points on 3d data: Extending the
  harris operator. In: Computer Recognition Systems 3. Springer, pp. 103--111.

\bibitem[{Harris and Stephens(1988)}]{harris1988combined}
Harris, C., Stephens, M., 1988. A combined corner and edge detector. In: Alvey
  vision conference. Vol.~15. Citeseer, pp. 10--5244.

\bibitem[{Houshiar et~al.(2015)Houshiar, Elseberg, Borrmann, and
  N{\"u}chter}]{houshiar2015study}
Houshiar, H., Elseberg, J., Borrmann, D., N{\"u}chter, A., 2015. A study of
  projections for key point based registration of panoramic terrestrial 3d
  laser scan. Geo-spatial Information Science 18~(1), 11--31.

\bibitem[{Lee et~al.(2001)Lee, Woo, and Suk}]{lee2001data}
Lee, K., Woo, H., Suk, T., 2001. Data reduction methods for reverse
  engineering. The International Journal of Advanced Manufacturing Technology
  17~(10), 735--743.

\bibitem[{Lowe(1999)}]{lowe1999object}
Lowe, D.~G., 1999. Object recognition from local scale-invariant features. In:
  Computer vision, 1999. The proceedings of the seventh IEEE international
  conference on. Vol.~2. Ieee, pp. 1150--1157.

\bibitem[{Lu and Milios(1997)}]{lu1997globally}
Lu, F., Milios, E., 1997. Globally consistent range scan alignment for
  environment mapping. Autonomous robots 4~(4), 333--349.

\bibitem[{Mellado et~al.(2014)Mellado, Aiger, and Mitra}]{mellado2014super}
Mellado, N., Aiger, D., Mitra, N.~J., 2014. Super 4pcs fast global pointcloud
  registration via smart indexing. In: Computer Graphics Forum. Vol.~33. Wiley
  Online Library, pp. 205--215.

\bibitem[{Parra~Bustos and Chin(2015)}]{parra2015guaranteed}
Parra~Bustos, A., Chin, T.-J., 2015. Guaranteed outlier removal for rotation
  search. In: Proceedings of the IEEE International Conference on Computer
  Vision. pp. 2165--2173.

\bibitem[{Parra~Bustos et~al.(2016)Parra~Bustos, Chin, Eriksson, Li, and
  Suter}]{parra2016fast}
Parra~Bustos, A., Chin, T.-J., Eriksson, A., Li, H., Suter, D., 2016. Fast
  rotation search with stereographic projections for 3d registration. IEEE
  Transactions on Pattern Analysis and Machine Intelligence 38~(11),
  2227--2240.

\bibitem[{Pomerleau et~al.(2013)Pomerleau, Colas, Siegwart, and
  Magnenat}]{pomerleau2013comparing}
Pomerleau, F., Colas, F., Siegwart, R., Magnenat, S., 2013. Comparing icp
  variants on real-world data sets. Autonomous Robots 34~(3), 133--148.

\bibitem[{Rabbani et~al.(2007)Rabbani, Dijkman, van~den Heuvel, and
  Vosselman}]{rabbani2007integrated}
Rabbani, T., Dijkman, S., van~den Heuvel, F., Vosselman, G., 2007. An
  integrated approach for modelling and global registration of point clouds.
  ISPRS journal of Photogrammetry and Remote Sensing 61~(6), 355--370.

\bibitem[{Rusinkiewicz and Levoy(2001)}]{rusinkiewicz2001efficient}
Rusinkiewicz, S., Levoy, M., 2001. Efficient variants of the icp algorithm. In:
  3-D Digital Imaging and Modeling, 2001. Proceedings. Third International
  Conference on. IEEE, pp. 145--152.

\bibitem[{Rusu et~al.(2009)Rusu, Blodow, and Beetz}]{rusu2009fast}
Rusu, R.~B., Blodow, N., Beetz, M., 2009. Fast point feature histograms (fpfh)
  for 3d registration. In: Robotics and Automation, 2009. ICRA'09. IEEE
  International Conference on. IEEE, pp. 3212--3217.

\bibitem[{Rusu et~al.(2008)Rusu, Blodow, Marton, and Beetz}]{rusu2008aligning}
Rusu, R.~B., Blodow, N., Marton, Z.~C., Beetz, M., 2008. Aligning point cloud
  views using persistent feature histograms. In: Intelligent Robots and
  Systems, 2008. IROS 2008. IEEE/RSJ International Conference on. IEEE, pp.
  3384--3391.

\bibitem[{Scovanner et~al.(2007)Scovanner, Ali, and Shah}]{scovanner20073}
Scovanner, P., Ali, S., Shah, M., 2007. A 3-dimensional sift descriptor and its
  application to action recognition. In: Proceedings of the 15th ACM
  international conference on Multimedia. ACM, pp. 357--360.

\bibitem[{Svarm et~al.(2014)Svarm, Enqvist, Oskarsson, and
  Kahl}]{svarm2014accurate}
Svarm, L., Enqvist, O., Oskarsson, M., Kahl, F., 2014. Accurate localization
  and pose estimation for large 3d models. In: Proceedings of the IEEE
  Conference on Computer Vision and Pattern Recognition. pp. 532--539.

\bibitem[{Theiler et~al.(2014)Theiler, Wegner, and
  Schindler}]{theiler2014keypoint}
Theiler, P.~W., Wegner, J.~D., Schindler, K., 2014. Keypoint-based 4-points
  congruent sets--automated marker-less registration of laser scans. ISPRS
  journal of photogrammetry and remote sensing 96, 149--163.

\bibitem[{Theiler et~al.(2015)Theiler, Wegner, and
  Schindler}]{theiler2015globally}
Theiler, P.~W., Wegner, J.~D., Schindler, K., 2015. Globally consistent
  registration of terrestrial laser scans via graph optimization. ISPRS Journal
  of Photogrammetry and Remote Sensing 109, 126--138.

\bibitem[{Turk and Levoy(1994)}]{turk1994zippered}
Turk, G., Levoy, M., 1994. Zippered polygon meshes from range images. In:
  Proceedings of the 21st annual conference on Computer graphics and
  interactive techniques. ACM, pp. 311--318.

\bibitem[{Yang et~al.(2016)Yang, Li, Campbell, and Jia}]{yang2016go}
Yang, J., Li, H., Campbell, D., Jia, Y., 2016. Go-icp: a globally optimal
  solution to 3d icp point-set registration. IEEE transactions on pattern
  analysis and machine intelligence 38~(11), 2241--2254.

\bibitem[{Yu and Ju(2018)}]{yu2018maximum}
Yu, C., Ju, D.~Y., 2018. A maximum feasible subsystem for globally optimal 3d
  point cloud registration. Sensors 18~(2), 544.

\bibitem[{Zhong(2009)}]{zhong2009intrinsic}
Zhong, Y., 2009. Intrinsic shape signatures: A shape descriptor for 3d object
  recognition. In: Computer Vision Workshops (ICCV Workshops), 2009 IEEE 12th
  International Conference on. IEEE, pp. 689--696.

\bibitem[{Zhou et~al.(2016)Zhou, Park, and Koltun}]{zhou2016fast}
Zhou, Q.-Y., Park, J., Koltun, V., 2016. Fast global registration. In: European
  Conference on Computer Vision. Springer, pp. 766--782.

\end{thebibliography}

\end{document}